\documentclass{article}
\usepackage{iclr2025_conference,times}

\usepackage{amsmath,amsfonts,bm}

\def\eqref#1{equation~\ref{#1}}

\def\1{\bm{1}}

\DeclareMathAlphabet{\mathsfit}{\encodingdefault}{\sfdefault}{m}{sl}
\SetMathAlphabet{\mathsfit}{bold}{\encodingdefault}{\sfdefault}{bx}{n}

\usepackage{booktabs}
\usepackage{graphicx}
\usepackage{amsmath, amssymb, amsthm}
\usepackage{algorithm}
\usepackage{algpseudocode}
\usepackage{wrapfig}
\usepackage{subcaption}
\usepackage{hyperref}
\usepackage{url}
\usepackage{multirow}
\usepackage{booktabs}
\usepackage{graphicx}
\usepackage{wrapfig}
\usepackage{graphicx}
\usepackage{titletoc}

\newtheorem{theorem}{Theorem}[section]
\newtheorem{proposition}[theorem]{Proposition}

\title{Flow-Map GRPO: Reinforcement Learning for Few-Step Flow-Map Generators via Anchored Stochastic Composition}

\author{
{
Zhiqi Li$^{1}$ \quad
Wen Zhang$^{1}$ \quad
Bo Zhu$^{1}$
}
}

\iclrfinalcopy 
\begin{document}

\maketitle
\footnotetext[1]{Georgia Institute of Technology, Atlanta, GA, USA. Correspondence to: Zhiqi Li \texttt{<zli3167@gatech.edu>}.}

\begin{abstract}
Few-step flow-map generators, such as consistency models and MeanFlow, accelerate sampling by directly learning long-range transport maps between noise and data. However, these models are typically deterministic, which makes them difficult to optimize with reinforcement learning (RL) post-training methods that require stochastic trajectories and well-defined likelihood ratios. Existing SDE-based stochasticization techniques are designed for velocity-based samplers with infinitesimal or finely discretized transitions, and therefore do not directly apply to long-range flow maps. In this work, we propose \textbf{Flow-Map GRPO}, an online RL post-training framework for deterministic few-step flow-map generators. The key component is \textbf{Anchored Stochastic Flow Map Composition (ASFMC)}, a path-preserving stochasticization mechanism that introduces randomness through anchor-based conditional resampling while preserving the original marginal probability path of the deterministic flow map. We derive GRPO objectives for both single-time and two-time flow-map parameterizations. Experiments on few-step FLUX-based text-to-image generators, including MeanFlow and sCM, show that Flow-Map GRPO improves pretrained deterministic flow-map models across reward-based, perceptual, and task-level evaluation metrics. Our results demonstrate that deterministic few-step flow-map generators can be effectively aligned with RL post-training without modifying their original model parameterization or retraining them as native stochastic models.
\end{abstract}

\section{Introduction}

Diffusion models and continuous-time flow-based generative models have become a dominant paradigm for high-quality image and video generation~\cite{rombach2022high,ho2022video,dao2023flow}. These methods construct a probability path between a simple prior distribution and the data distribution, and learn either a score field or a velocity field that defines a continuous-time generative process. In particular, flow-based approaches such as Flow Matching and Rectified Flow represent generation through a probability-flow ODE, which enables principled sampling by numerically integrating the learned velocity field. Despite their strong theoretical grounding, however, ODE-based sampling typically requires many discretization steps, leading to high computational cost at inference time.

This has motivated recent advances in few-step flow-map-based generative models, such as Consistency Models~\cite{song2023consistency,geng2024consistency} and MeanFlow~\cite{geng2025mean,geng2025improved,li2026trajectory}. Instead of learning only the instantaneous dynamics, these methods directly learn long-range mappings $\psi_{t\to r}$ that map samples between two time points. By amortizing numerical integration into learned long-range mappings, flow-map-based models can replace iterative ODE solvers with one-step or few-step generation.

However, deterministic few-step flow maps pose a difficulty for RL post-training, which aims to align a pretrained generator with task-level rewards while preserving its original flow-map parameterization and learned marginal probability path. Recent reinforcement learning (RL) post-training methods for generative models, such as DDPO~\cite{black2024training} and Flow-GRPO~\cite{liu2026flow}, formulate sampling as a Markov decision process and optimize the generative policy using task-level rewards. A key requirement of these methods is a well-defined stochastic transition kernel, which is needed both for trajectory-level exploration and for computing likelihood ratios in policy-gradient optimization. Diffusion models naturally provide such stochastic transitions through their denoising process, while velocity-based flow models can be equipped with stochastic transitions through path-preserving SDE reformulations~\cite{lipman2024flow}. In contrast, flow-map-based models directly parameterize deterministic long-range transports $\psi_{t\to r}$, which do not define stochastic trajectories or transition likelihoods. This creates a structural mismatch between deterministic few-step flow-map generators and likelihood-ratio-based RL post-training methods~\cite{boffi2025flow}.

\begin{wrapfigure}{r}{0.48\textwidth}
    \centering
    \vspace{-1.3em}
\includegraphics[width=0.46\textwidth]{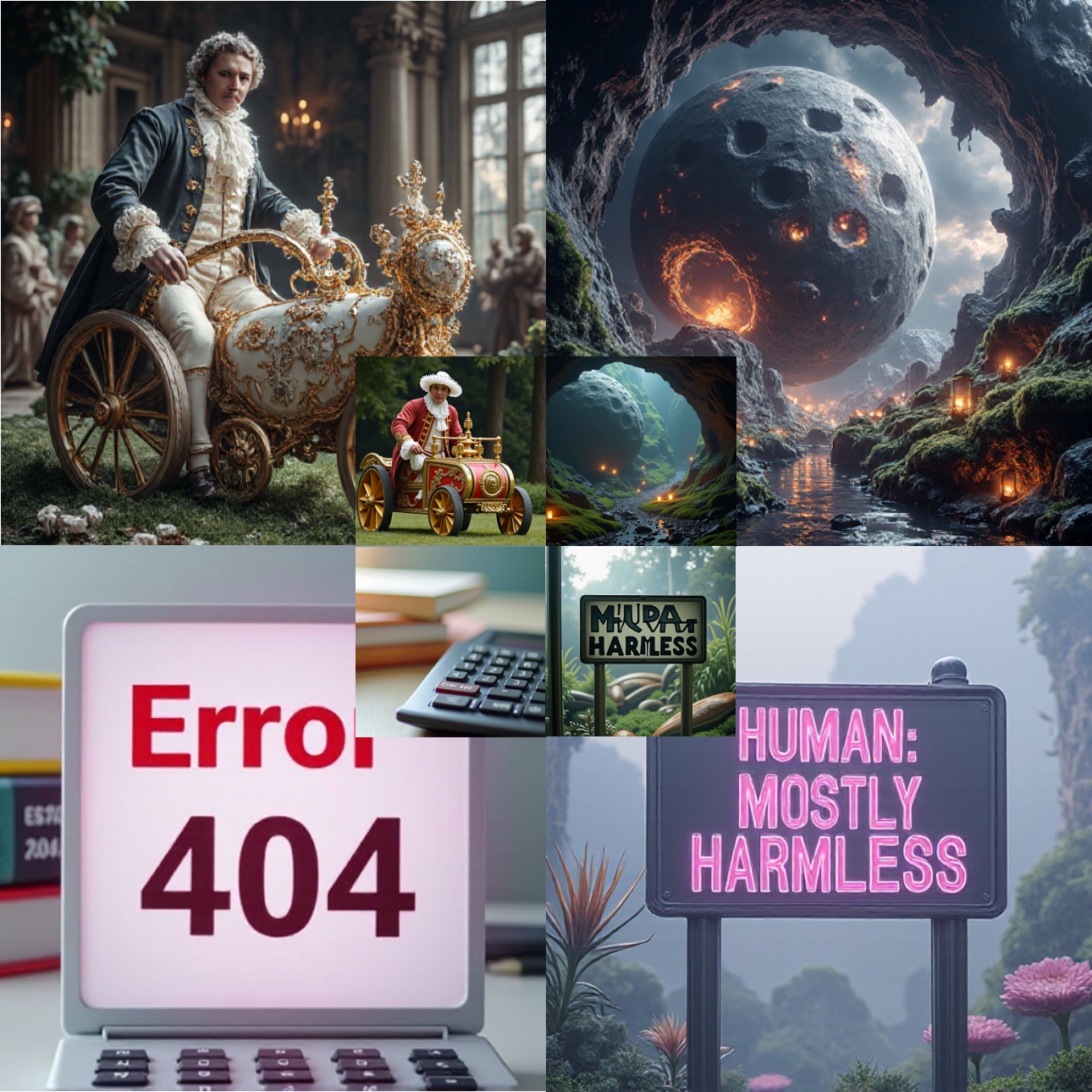}
    \vspace{-0.6em}
    \caption{
    \textbf{Comparison of base generation and FlowMap-GRPO post-trained generation.}
    }
    \label{fig:teaser}
    \vspace{-2.0em}
\end{wrapfigure}

A natural attempt is to introduce stochasticity by using a path-preserving SDE reformulation of the underlying ODE dynamics. While this works for velocity-based samplers with infinitesimal or finely discretized transitions, it does not directly extend to long-range flow maps. A long-range SDE transition depends on the entire stochastic path between the source and target times and, in general, cannot be reduced to a Gaussian transition centered at the deterministic flow-map output (see \autoref{prop:no_flow_map_sto}). Exact SDE-based stochasticization would therefore require stochastic integration, undermining the computational advantage of few-step flow maps. This motivates a stochasticization mechanism that operates directly at the level of flow maps while preserving their probability path.

To address this limitation, we propose \textbf{Flow-Map GRPO}, an online RL post-training framework for deterministic few-step flow-map generators. Its key component is \textbf{Anchored Stochastic Flow Map Composition (ASFMC)}, a path-preserving stochasticization mechanism that introduces randomness at the level of long-range flow maps. Instead of perturbing infinitesimal dynamics, ASFMC uses an auxiliary anchor variable: given a deterministic transition from $t$ to $r$, it first maps the sample to the target time, transports this deterministic endpoint to an anchor time $\tau$, and then resamples back to time $r$ through a conditional transition of the probability path. In this way, the anchor connects deterministic flow-map composition with stochastic conditional resampling, producing stochastic transitions while preserving the marginal distribution learned by the original model.  With ASFMC, each flow-map step becomes a stochastic policy transition, allowing few-step flow-map sampling to be formulated as an MDP and optimized with GRPO-style likelihood ratios. The construction is post-hoc and does not modify the pretrained flow-map parameterization. We instantiate it for both major flow-map forms: one-time endpoint maps, such as sCM, and two-time maps, such as MeanFlow, where ASFMC can use local or endpoint anchors. \textbf{Our contributions are as follows:}
\begin{itemize} 
    \item We identify a fundamental limitation of SDE-based stochasticization for deterministic flow maps, showing that it is not directly applicable to long-range flow-map transitions. 
    \item We propose Anchored Stochastic Flow Map Composition (ASFMC), a principled stochastic flow-map construction that preserves the marginal probability path while enabling trajectory-level exploration. 
    \item We introduce Flow-Map GRPO, a unified RL post-training framework for deterministic few-step flow-map generators, and derive consistent formulations for both single-time and two-time flow-map parameterizations. 
    \item We empirically validate Flow-Map GRPO on few-step FLUX-based text-to-image generators, demonstrating improvements over pretrained MeanFlow and sCM checkpoints across reward-based, perceptual, and task-level evaluation metrics. 
\end{itemize}

\section{Background}
\subsection{Multi-Step Flow Models and Few-step Flow Maps}\label{sec:flow_map}
Let $\mathcal{D}=\{x^i\in\mathcal{X}\}_{i=1}^n$ denote samples from an unknown data distribution
$p_1=p_{\mathrm{data}}$ on $\mathcal{X}\subset\mathbb{R}^d$. A continuous-time generative model constructs a stochastic process $\{X_t\}_{t\in[0,1]}$ whose marginal distribution $p_t$ evolves from a simple base distribution $p_0$, typically a standard Gaussian, to the data distribution $p_1$, forming a probability path $\{p_t\}_{t=0}^1$.  The probability path can be described by a deterministic probability-flow ODE. Given an initial sample $x_0\sim p_0$, its trajectory $\{x_t\}_{t\in[0,1]}$ evolves according to
\begin{equation}\label{eq:ode}
    \frac{d x_t}{dt}=u_t(x_t), \qquad x_0\sim p_0 ,
\end{equation}
where $u_t:\mathcal{X}\rightarrow\mathbb{R}^d$ is a time-dependent velocity field.  The solution of \autoref{eq:ode} defines a time-dependent flow
$\psi_t:\mathcal{X}\rightarrow\mathcal{X}$ with $\psi_0(x)=x$, such that
$x_t=\psi_t(x_0)$.  $\{X_t\}_{t\in[0,1]}$ is called a flow model \cite{lipman2024flow} if it is generated by \autoref{eq:ode}, namely $X_t=\psi_t(X_0)$, $X_0\sim p_0 $, and the corresponding marginal distribution can be induced by the pushforward $p_t=[\psi_t]_\sharp p_0$, satisfying the continuity equation
\begin{equation}\label{eq:continuity}
    \partial_t p_t(x)
    =
    - \nabla\cdot(p_t(x)u_t(x)).
\end{equation}
Flow models cover major continuous-time generative models, including Flow Matching \cite{lipman2022flow}, Rectified Flow \cite{liu2023flow}, and diffusion models such as DDIM \cite{song2020denoising}.

Flow models learn $u_t^\theta(x)$ (or an equivalent parameterization, such as epsilon prediction \cite{}) from $\mathcal{D}$  and generate samples by numerically solving \autoref{eq:ode}. To train $u_t^\theta(x)$, flow models construct conditional paths $X_t=a_tX_1+b_tX_0$, where $a_t$ and $b_t$ are scheduler functions satisfying $a_1=b_0=1$ and $a_0=b_1=0$. The corresponding conditional velocity target is $u_t(X_t| X_1,X_0)=\dot a_tX_1+\dot b_tX_0$.  After marginalizing over $X_0$, the conditional path and velocity become $p_t(\cdot| X_1)$ and $u_t(\cdot| X_1)$, whose marginalization again recovers the desired probability path and marginal velocity in \autoref{eq:ode} $p_t(x)=\int p_t(x| x_1)p_1(x_1)dx_1$ and $u_t(x)=\mathbb{E}[u_t(x| X_1)| X_t=x]$. For the affine conditional path above, the conditional velocity can be written as
\begin{equation}
    u_t(x| x_1)
    =
    \frac{\dot b_t}{b_t}x
    +
    (
        \dot a_t-\frac{a_t\dot b_t}{b_t}
    )x_1,
\end{equation}
and the model is then trained with the surrogate objective
\begin{equation}\label{eq:objective}
    \mathcal{L}_c(\theta)
    =
    \mathbb{E}_{t,x_1\sim p_1,x\sim p_t(\cdot| x_1)}
    [
        \|u_t^\theta(x)-u_t(x| x_1)\|^2
    ]
\end{equation}
The Flow Matching and DDIM paths can be recovered by choosing
$(a_t,b_t)=(t,1-t)$ and $(a_t,b_t)=(\sqrt{1-(1-t)^2},1-t)$, respectively.  For conditional generation, the probability path and velocity field can be directly conditioned on $c$ with $p_t(x| x_1,c)$ and $u_t(x|x_1,c)$.   

The velocity field $u^\theta_t$ only describes the instantaneous evolution of the generative process. After discretization, it gives a short-range transition from $t$ to $t+\Delta t$, so generating a sample requires repeatedly solving \autoref{eq:ode} over many steps. To enable one-step or few-step generation, flow-map-based methods, including Consistency Models \cite{geng2024consistency} and MeanFlow \cite{geng2025mean}, directly learn a flow map $\psi^\theta_{t\to r}$, defined as $\psi_{t\to r} = \psi_t\circ\psi^{-1}_t$, which enable one-step generation as  $x_1=\psi^\theta_{0\to 1}(x_0)$ and few-step generation, e.g., two step as $x_{0.5}=\psi^\theta_{0\to 0.5}(x_0)$ and $x_1=\psi^\theta_{0.5\to 1}(x_{0.5})$.   Although flow-map-based methods have clear advantages in generation speed, they face a key
training challenge that direct supervision for $\psi_{t\to r}$ is difficult to obtain from the dataset \cite{li2026trajectory}. In \autoref{appendix:flow_map}, we summarize common strategies for learning flow maps and discuss representative flow-map-based methods, including Consistency Models and MeanFlow.

\subsection{Reinforcement Learning for Flow Models} \label{sec:rl_flow}
Recent works \cite{liu2026flow,li2025mixgrpo,xue2025dancegrpo,black2024training} aim to improve flow and diffusion models with task-specific rewards, such as human preference, aesthetic quality, or downstream evaluation metrics. These rewards are often
non-differentiable; therefore, these methods formulate generation as a reinforcement learning
problem by treating the sampling trajectory as a multi-step MDP.  For a conditional generative model with condition $c$, the state is $s_t=(c,t,x_t)$, the action is the next sample along the trajectory, $a_t=x_{t+\Delta t}$, and the policy is the model transition $\pi_\theta(a_t| s_t)=p_\theta(x_{t+\Delta t}| x_t|c)$, the next-state transition is $P(s_{t+\Delta t}| s_t,a_t) = \delta_{(c,t+\Delta t,a_t)}(s_{t+\Delta t})$. The initial state is drawn from $\rho_0(s_0)=p(c)p_0(x_0)$, and the reward is assigned to the final generated sample,
$R(s_t,a_t)=r(x_1|c)$ when $t=1$. The model can then be optimized  with policy-gradient objectives based on the transition likelihood $p_\theta(x_{t+\Delta t}| x_t|c)$.

The above formulation requires the policy $\pi_\theta(a_t| s_t)=p_\theta(x_{t+\Delta t}| x_t|c)$ to be stochastic, so that the RL procedure can explore different trajectories and also the policy likelihood ratio $\frac{\pi_\theta(a_t|s_t)}{\pi_{\theta_\text{ref}}(a_t|s_t)}$ used for regularization sometimes is well-defined rather than singular. DDPO~\cite{black2024training} instantiates this framework for diffusion models.  Since diffusion samplers, such as DDPM variants \cite{ho2020denoising}, naturally define stochastic denoising transitions, their transition likelihoods can be directly used for policy optimization. Flow-GRPO extends the same idea to flow models. Starting from the deterministic ODE in \autoref{eq:ode}, standard flow models do not directly provide stochastic actions or
transition likelihoods for reinforcement learning. To introduce stochasticity, Flow-GRPO rewrites
the deterministic flow dynamics as the following equivalent SDE \cite{lipman2024flow}:
\begin{equation}\label{eq:flow_sde}
    dX_t=[u_t(X_t)+ \frac{\sigma_t^2}{2}\nabla_x \log p_t(X_t)]dt+\sigma_t dW_t,
\end{equation}
where $\sigma_t$ controls the stochasticity and $W_t$ denotes a standard Wiener process. The score correction cancels the diffusion effect at the marginal level, so the marginal path $\{\tilde{p}_t\}_{t=0}^1$ induced by \autoref{eq:flow_sde} matches the probability path $\{p_t\}_{t=0}^1$ of the deterministic flow ODE in \autoref{eq:ode}, while retaining stochastic trajectories for GRPO exploration.

\section{Method}
\begin{figure}[t]
    \centering
    \begin{subfigure}[t]{0.48\linewidth}
        \centering
        \includegraphics[width=1.\linewidth]{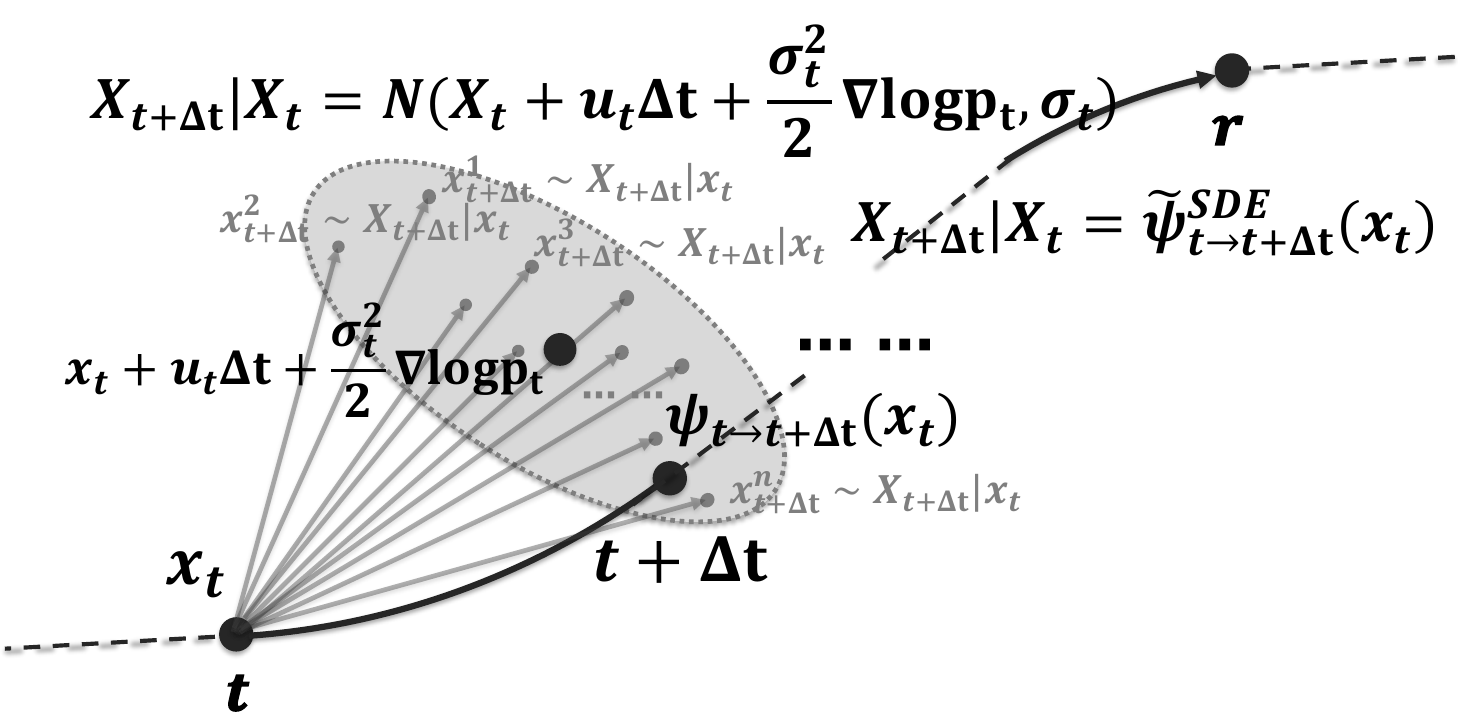}
        \caption{Infinitesimal transition}
        \label{fig:local_sde_transition}
    \end{subfigure}
    \hspace{0mm}
    \begin{subfigure}[t]{0.48\linewidth}
        \centering
        \includegraphics[width=1.\linewidth]{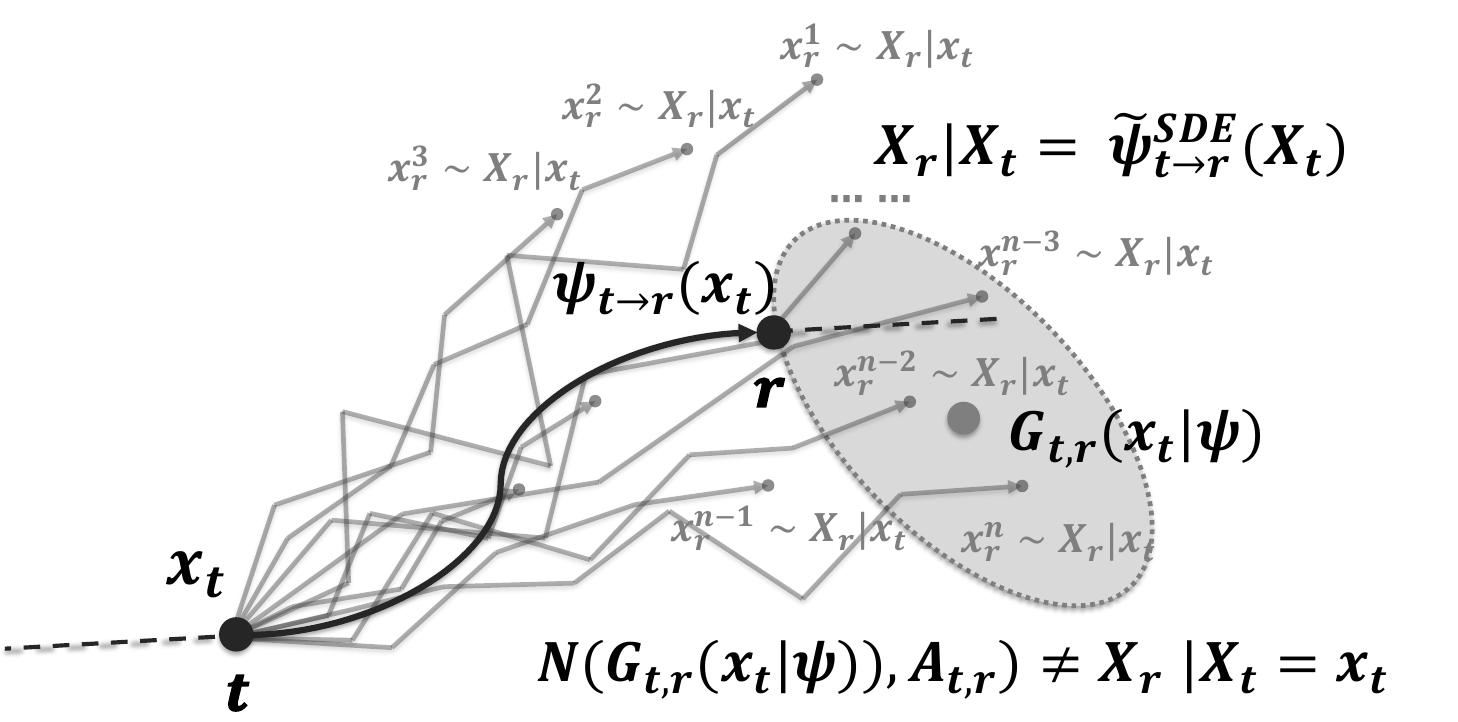}
        \caption{Long-range transition}
        \label{fig:long_range_sde_transition}
    \end{subfigure}
    \caption{Comparison of SDE-induced stochasticization for infinitesimal and long-range transitions. For an infinitesimal transition $\psi_{t\to t+\Delta t}$, the SDE corresponding to the deterministic ODE yields a simple Gaussian transition, enabling convenient stochasticization ${\tilde \psi}^{SDE}_{t\to t+\Delta t}$ of instantaneous velocity-based samplers. In contrast, for a long-range transition $\psi_{t\to r}$, the accumulated stochastic dynamics generally cannot be reduced to a simple Gaussian form, preventing its direct use as a stochasticization mechanism ${\tilde \psi}^{SDE}_{t\to r}$ for flow maps.}
    \label{fig:sde_local_vs_longrange}
\end{figure}
Existing RL post-training methods for generative models mainly target multi-step samplers, such as diffusion models \cite{black2024training} and flow matching models \cite{liu2026flow}, where the generation process naturally forms a stochastic sequential decision process. In this work, we extend RL-based alignment to flow-map-based few-step generators \cite{geng2025mean,frans2025shortcut,geng2024consistency}. A flow map defines a deterministic transport and therefore cannot be directly optimized with policy-gradient methods that require stochastic trajectories, and the key problem is how to introduce stochasticity into flow-map sampling without changing its underlying probability path, which becomes nontrivial due to the non-infinitesimal nature of flow-map dynamics (\autoref{sec:challenge}). To address this problem, we propose Anchored Stochastic Flow Map Composition (ASFMC) method (\autoref{sec:anchor}) and discuss how it can be instantiated for different flow-map parameterizations, including both two-parameter and one-parameter forms (\autoref{sec:flowmap_grpo}). Importantly, our method does not change the definition of the deterministic flow map, making it directly applicable to post-training existing flow-map models. We mainly instantiate our framework with GRPO \cite{shao2024deepseekmath}, while noting that the same stochasticization principle can be combined with other RL algorithms and optimization techniques (\autoref{sec:experiment}).

\subsection{Challenge for Flow Maps}\label{sec:challenge}

For RL post-training, the randomized flow map $\tilde{\psi}_{t\to r}$ should satisfy two requirements. First, it should induce stochastic trajectories, allowing policy optimization to explore different generation paths. Second, it should preserve the original probability path of the pretrained flow map, so that the introduced stochasticity does not change the learned marginal distributions \cite{liu2026flow}. We formalize this requirement as follows.

\begin{proposition}[Path-preserving stochasticization]\label{prop:no_flow_map_sto}
Let $\psi_{t\to r}$ denote a deterministic flow map that induces the probability path $\{p_t = [\psi_{0\to t}]_\sharp p_0\}_{t=0}^1$. A valid randomized flow map $\tilde{\psi}_{t\to r}$ for RL post-training should induce stochastic sample trajectories while preserving the same marginal path, i.e., its induced probability path $\{\tilde{p}_t = [\tilde{\psi}_{0\to t}]_\sharp p_0\}_{t=0}^1$ satisfies $\tilde{p}_t = p_t$ for all $t\in[0,1]$.
\end{proposition}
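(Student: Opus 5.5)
The statement is essentially a formal requirement, so the plan is to (i) make precise what ``induced probability path'' means for a randomized map, (ii) check that the requirement is consistent by exhibiting a mechanism that meets it at the infinitesimal level, and (iii) show that the obvious long-range analogue fails it. That last step justifies the remark in the introduction that SDE-based stochasticization does not transfer to flow maps.

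\textbf{Setup.} I would interpret $\tilde\psi_{t\to r}$ as a Markov kernel $K_{t\to r}(x_r\mid x_t)$ and define $\tilde p_r(y)=\int K_{t\to r}(y\mid x)\,\tilde p_t(x)\,dx$ with $\tilde p_0=p_0$, so that $[\tilde\psi_{0\to t}]_\sharp p_0$ means the law of $X_t$ under the chained kernels. Path preservation is then the statement that each $K_{t\to r}$ maps $p_t$ to $p_r$, i.e. $p_t$ is carried to $p_r$ by the kernel, just as it is carried by $\psi_{t\to r}$. By induction over the sampling grid, this one-step condition suffices for $\tilde p_t=p_t$ at all grid times.

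\textbf{Consistency (SDE case).} For \autoref{eq:flow_sde} I would write the Fokker--Planck equation
\[
\partial_t\tilde p_t=-\nabla\cdot\bigl(\tilde p_t(u_t+\tfrac{\sigma_t^2}{2}\nabla\log p_t)\bigr)+\tfrac{\sigma_t^2}{2}\Delta\tilde p_t .
\]
Substituting $\tilde p_t=p_t$ and using $\nabla\cdot(p_t\nabla\log p_t)=\Delta p_t$, the diffusion and score-correction terms cancel and we recover \autoref{eq:continuity}. Uniqueness of solutions then gives $\tilde p_t=p_t$. Over a step $\Delta t$, the Euler--Maruyama kernel is Gaussian with mean $x+\Delta t\,[u_t+\tfrac{\sigma_t^2}{2}\nabla\log p_t](x)$. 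So stochastic trajectories and path preservation coexist for infinitesimal transitions.

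\textbf{Obstruction for long ranges (main obstacle).} I would analyse the natural flow-map surrogate $K_{t\to r}(\cdot\mid x)=\mathcal N(\psi_{t\to r}(x),\Sigma)$ with $\Sigma\succ0$. It pushes $p_t$ to $p_r*\mathcal N(0,\Sigma)$. Since Fourier transforms of Gaussians never vanish, $\widehat{p_r}\,e^{-\frac12\xi^\top\Sigma\xi}=\widehat{p_r}$ forces $\widehat{p_r}=0$ wherever it is nonzero, which is impossible. Hence this kernel always changes the marginal. I would also show that the exact long-range kernel of \autoref{eq:flow_sde} is generally non-Gaussian by a simple example: take $p_1$ a two-point mixture $\frac12(\delta_{-m}+\delta_{m})$ convolved with a narrow Gaussian, along the flow-matching path. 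For $x_t$ near $0$, the SDE transition to $r=1$ is bimodal, so no single Gaussian (in particular none centred at $\psi_{t\to r}(x_t)$) can reproduce it.

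The delicate part is making this example rigorous. I would compute the exact kernel via the Gaussian conditional structure of the affine path instead of integrating the SDE, and note that evaluating it in general requires the full stochastic path. Together with the consistency step, this frames the proposition as a well-posed but nontrivial requirement and motivates the anchor-based construction that follows.
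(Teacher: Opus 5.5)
\paragraph{How this compares with the paper.}
The paper gives no proof of this statement. It is a definition phrased as a proposition, formalizing the two requirements listed just before it. Your reading of it is the one the paper uses later: each randomized kernel must carry $p_t$ to $p_r$, and preservation at grid times follows by chaining. Indeed, \autoref{thm:asfmc_path_preservation} checks exactly this one-step condition for ASFMC, by marginalizing $p_{r\mid\tau}$ over the anchor law $p_\tau$.

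Your steps (ii) and (iii) are not needed for the statement itself. Step (ii) is the standard Fokker--Planck check that the paper only asserts in \autoref{sec:rl_flow}. Step (iii) does the job of \autoref{thm:non_exist}, and there you argue differently from the paper:
\begin{itemize}
\item The paper uses a short-time generator expansion. It shows that the conditional third cumulant of the exact transition is $\sigma_t^4\,\partial_{xx}b_t(x)\,h^3+o(h^3)$. So the kernel is non-Gaussian on every sufficiently short finite interval whenever the drift has curvature, for any noise level $\sigma_t>0$.
\item Your Fourier argument is a clean non-asymptotic complement. It shows that no kernel $\mathcal{N}(\psi_{t\to r}(x),\Sigma)$ with $\Sigma\succ 0$ can be path preserving. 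This explains why the endpoint anchor instead centres the Gaussian at $a_r\psi_{r\to 1}(x_r)$ with variance $b_r^2$.
\item However, the Fourier argument only excludes the particular choice $G=\psi_{t\to r}$. Excluding every deterministic $G$ requires non-Gaussianity of the exact kernel, which is what your bimodal example is meant to supply.
\end{itemize}
Two small fixes. The Fourier step should conclude that $\widehat{p_r}(\xi)=0$ for all $\xi\neq 0$, which contradicts continuity at $0$ together with $\widehat{p_r}(0)=1$. And the Euler--Maruyama kernel preserves the path only to leading order in $\Delta t$, not exactly.

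\paragraph{The step that fails as written.}
You propose computing the exact SDE kernel ``via the Gaussian conditional structure of the affine path.'' That identification holds only when \autoref{eq:flow_sde} is the time reversal of the Markov noising diffusion with marginals $p_t$. In the paper's parameterization $\sigma_t=\lambda\sqrt{2b_t(\dot a_t b_t-a_t\dot b_t)/a_t}$, this is the case $\lambda=1$, i.e.\ $\sigma_t^2=2(1-t)/t$ on the flow-matching path.

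In that case the transition to $r=1$ is the posterior $p(x_1\mid x_t)\propto p_1(x_1)\,\mathcal{N}(x_t;\,t x_1,(1-t)^2 I)$. At $x_t=0$ this is an equal-weight mixture of two Gaussians with opposite nonzero means, hence non-Gaussian. So with $\lambda=1$ your example becomes rigorous and closed form.

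For general $\sigma_t$ the identification is false. The posterior does not depend on $\sigma_t$, whereas the SDE kernel does, and it collapses onto $\delta_{\psi_{t\to 1}(x_t)}$ as $\sigma_t\to 0$. So neither the formula nor the bimodality claim transfers.

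Since ``generally not representable'' needs only one instance, you should fix $\lambda=1$ explicitly. The cost, relative to the paper's cumulant argument, is that your example says nothing about other noise levels. The paper's argument shows that no choice of the tunable $\sigma_t$ rescues the additive-Gaussian form.
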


A natural idea is to follow Flow-GRPO and inject stochasticity through an equivalent SDE whose marginal distributions match those of the deterministic ODE. For an infinitesimal step, the SDE induces a simple Gaussian transition $x_{t+\Delta t} = x_t + [ u_t(x_t) + \frac{\sigma_t^2}{2}\nabla_x \log p_t(x_t) ]\Delta t + \sigma_t\sqrt{\Delta t}\epsilon_t$ with $\epsilon_t\sim\mathcal{N}(0,I)$, which allows stochasticity to be conveniently injected into velocity-based samplers, as adopted by Flow-GRPO. However, flow maps are long-range transport operators between arbitrary time pairs, and their transitions are not infinitesimal. The long-range transition induced by the SDE generally cannot be reduced to a simple additive-noise form, i.e., a deterministic part plus an input-independent Gaussian perturbation, as shown in \autoref{fig:sde_local_vs_longrange}.
\begin{theorem}[Long-range SDE transitions require stochastic integration]\label{thm:non_exist}
Let $X_s$ solve the path-preserving SDE \autoref{eq:flow_sde} and let $\psi_{t\to r}$ be the deterministic flow map induced by the ODE \autoref{eq:ode}. For a non-infinitesimal interval $t<r$, the exact SDE transition is generally not representable as a simple additive Gaussian perturbation of a deterministic function of the flow maps:
\begin{equation}
X_r=G_{t,r}(X_t;\psi, d\psi,...)+ A_{t,r}\epsilon,\qquad
\epsilon\sim\mathcal{N}(0,I),
\end{equation}
where $A_{t,r}$ is independent of the input sample and $G_{t,r}(X_t;\psi,d\psi,\ldots)$ denotes a deterministic functional of the flow-map family $\psi$ and its derivatives.  Instead, the exact transition must be expressed by the stochastic integral $X_r=X_t+\int_t^r b_s(X_s)ds+\int_t^r \sigma_s dW_s$, where $b_s(x) = u_s(x) + \frac{\sigma_s^2}{2}\nabla_x\log p_s(x)$, except for special linear-Gaussian dynamics. See \autoref{sec:proof_non_exist} for proof.
\end{theorem}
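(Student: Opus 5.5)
The claim is a "generally not representable" statement, so I will prove it by exhibiting obstructions rather than establishing a universal impossibility. There are two parts: (i) a structural argument that an additive form with input-independent covariance forces the conditional law of $X_r$ given $X_t$ to be Gaussian with covariance independent of $X_t$; and (ii) an explicit counterexample in which the exact SDE transition violates this.

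\textbf{Step 1 (structural consequence of the additive form).} Suppose $X_r=G_{t,r}(X_t)+A_{t,r}\epsilon$ with $\epsilon\sim\mathcal{N}(0,I)$ independent of $X_t$. Then the transition kernel is
\begin{equation*}
p_{r|t}(\cdot\mid x)=\mathcal{N}\bigl(G_{t,r}(x),A_{t,r}A_{t,r}^\top\bigr).
\end{equation*}
This kernel is Gaussian for every $x$, and $\mathrm{Cov}[X_r\mid X_t=x]$ does not depend on $x$. It therefore suffices to find a path-preserving SDE~\autoref{eq:flow_sde} whose exact transition kernel, for some $t<r$, is either non-Gaussian or has an $x$-dependent conditional covariance. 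Either property is determined by the law of the SDE alone, whatever functional of $\psi$ and its derivatives $G$ might be.

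\textbf{Step 2 (linearize the kernel over a short interval).} For $r=t+h$, I will Itô-expand the SDE to second order:
\begin{equation*}
X_{t+h}=x+b_t(x)h+\sigma_t W_h+\tfrac12\sigma_t^2\,\nabla b_t(x)\!\int_t^{t+h}\!(W_s-W_t)\,ds\cdot(\ldots)+O(h^{2}).
\end{equation*}
The dominant covariance correction is $\sigma_t^2 h+\sigma_t^2h^2\bigl(\nabla b_t(x)+\nabla b_t(x)^\top\bigr)/2+o(h^2)$. Its $x$-dependence enters through the Jacobian $\nabla b_t(x)=\nabla u_t(x)+\tfrac{\sigma_t^2}{2}\nabla^2\log p_t(x)$.

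The quantity to check is then whether $\nabla b_t$ is constant in $x$. If it is not, the conditional covariance depends on $x$, and Step 1 rules out the additive form for $h$ small but finite. Constancy of $\nabla b_t$ in $x$ means $b_t$ is affine, which is exactly the linear-Gaussian exception. If $p_0$ is Gaussian and the path stays Gaussian, $u_t$ and $\nabla\log p_t$ are affine and the Ornstein–Uhlenbeck transition is indeed additive Gaussian, so this exception must be excluded explicitly.

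\textbf{Step 3 (concrete counterexample).} I take the flow-matching path $(a_t,b_t)=(t,1-t)$ with a one-dimensional data distribution $p_1=\tfrac12\delta_{-1}+\tfrac12\delta_{1}$, or a two-component Gaussian mixture for smoothness. In this case $p_t$ is a Gaussian mixture, and $\nabla\log p_t$ and $u_t$ contain $\tanh$ terms, so $b_t'(x)$ is non-constant. Step 2 then gives an $x$-dependent conditional variance for $X_{t+h}$.

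For a fully non-infinitesimal interval, I would alternatively argue through non-Gaussianity. As $\sigma$ grows, the transition from near $t=0$ to $r$ close to $1$ concentrates near the bimodal $p_r$, so the kernel is bimodal and cannot be Gaussian.

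\textbf{Conclusion and main obstacle.} Since the exact law of $X_r$ given $X_t$ is determined by the SDE and is not of additive-Gaussian form, the only general exact representation is the integral one, $X_r=X_t+\int_t^r b_s(X_s)\,ds+\int_t^r\sigma_s\,dW_s$.

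I expect the main obstacle to be making the "non-Gaussian/$x$-dependent covariance" claim rigorous over a genuinely finite interval rather than only asymptotically in $h$. My first attempt will be the small-$h$ expansion with explicit $O(h^2)$ remainder bounds in the mixture example. Analyticity in $h$ of the conditional covariance, which follows from smoothness of the coefficients, then extends the non-constancy in $x$ from small $h$ to generic finite intervals.
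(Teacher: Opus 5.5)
Your argument is correct, but it takes a different route from the paper.

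\textbf{The paper's route.} The paper argues through non-Gaussianity. It expands the transition semigroup to third order in $h$ and shows that the conditional third centered moment is $\kappa_3(h)=\sigma_t^4\,\partial_{xx}b_t(x)\,h^3+o(h^3)$. This is nonzero for small but finite $h$ whenever $\partial_{xx}b_t(x)\neq 0$, so the conditional law cannot be Gaussian.

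\textbf{Your route.} You work with the second moment instead. The additive form forces $\mathrm{Cov}[X_r\mid X_t=x]=A_{t,r}A_{t,r}^\top$ to be constant in $x$. The first-order It\^o--Taylor expansion, however, already gives the $x$-dependent correction $\sigma_t^2h^2\,\tfrac12\bigl(\nabla b_t+\nabla b_t^\top\bigr)(x)$ at order $h^2$.

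\textbf{What each buys.} Your obstruction appears one order earlier and needs only the drift Jacobian. It also works directly in $d$ dimensions, without the paper's reduction to scalar projections. Your two-point/$\tanh$ example explicitly exhibits a path-preserving SDE with non-affine drift, which the paper only asserts generically. What you give up is generality. Your argument depends essentially on the hypothesis that $A_{t,r}$ is input-independent. The paper's third-cumulant argument rules out every Gaussian kernel, including $\mathcal{N}(G(x),\Sigma(x))$ with state-dependent covariance.

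\textbf{Minor corrections.}
\begin{itemize}
\item In your displayed expansion, the coefficient of $\int_t^{t+h}(W_s-W_t)\,ds$ should be $\sigma_t\nabla b_t(x)$, not $\tfrac12\sigma_t^2\nabla b_t(x)$. Your covariance formula is nevertheless right, provided the remainder is controlled in $L^2$.
\item In the example, take $t\in(0,1)$, since $b_0$ is affine there.
\item The proposed extension to arbitrary finite intervals via ``analyticity in $h$'' does not follow from smoothness of the coefficients. It is also unnecessary: as in the paper, the obstruction for small but finite $h$ already establishes ``generally not representable.''
\end{itemize}
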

As a result, the SDE-based stochasticization used in Flow-GRPO cannot be directly transferred to flow maps, which calls for a different stochasticization mechanism tailored to long-range flow-map transitions.

\subsection{Anchored Stochastic Flow Map Composition}\label{sec:anchor}

\begin{figure}[t]
\centering
\begin{subfigure}[t]{0.32\linewidth}
\centering
\includegraphics[width=1.05\linewidth]{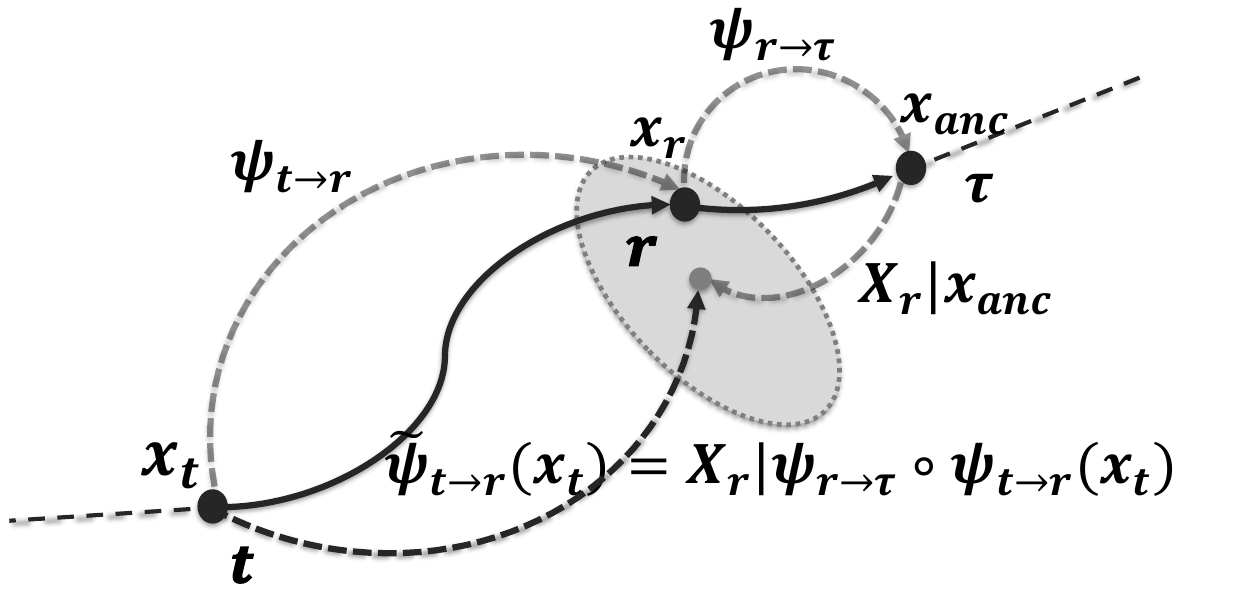}
\caption{ASFMC principle}
\label{fig:asfmc_principle}
\end{subfigure}
\hfill
\begin{subfigure}[t]{0.32\linewidth}
\centering
\includegraphics[width=1.05\linewidth]{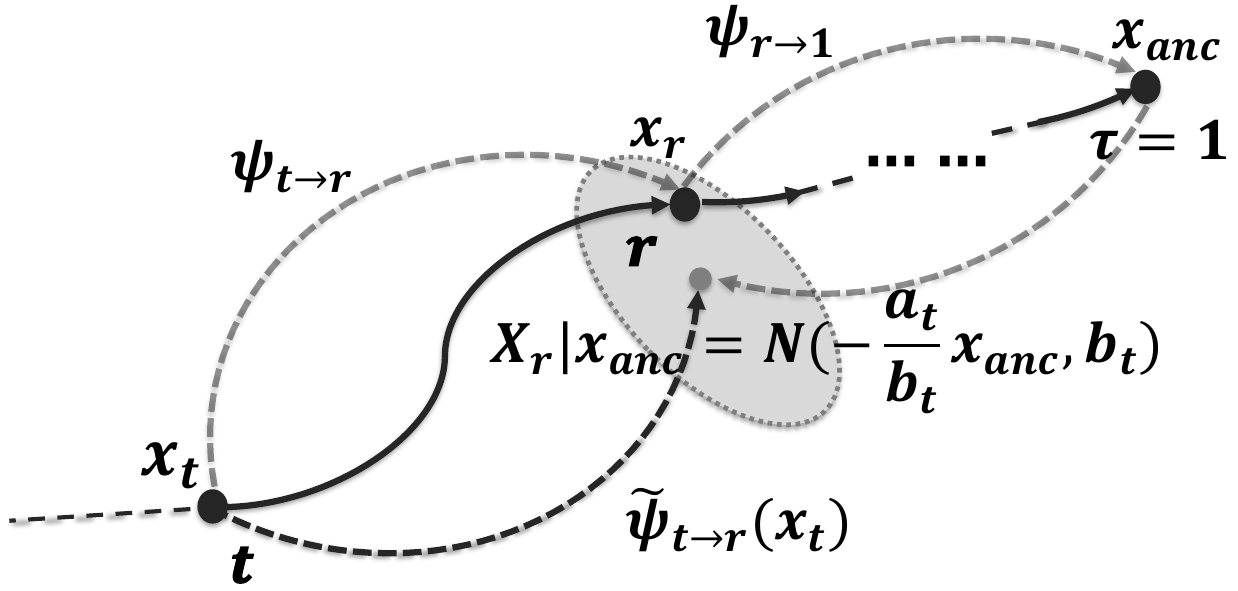}
\caption{Endpoint anchor}
\label{fig:asfmc_endpoint_anchor}
\end{subfigure}
\hfill
\begin{subfigure}[t]{0.32\linewidth}
\centering
\includegraphics[width=1.05\linewidth]{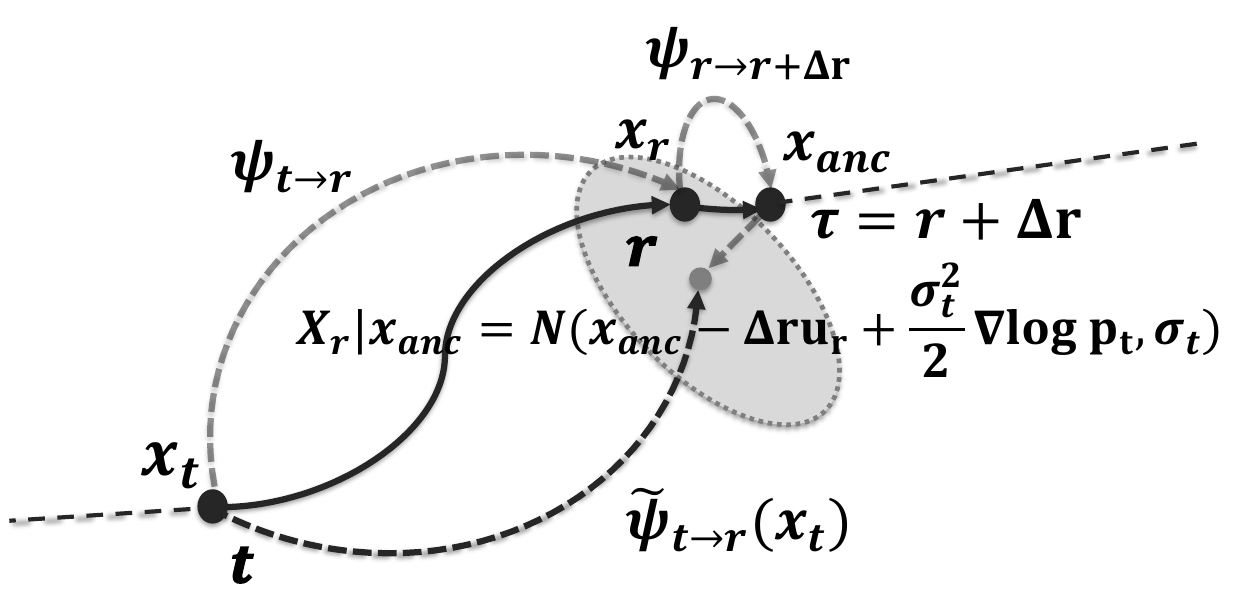}
\caption{Local anchor}
\label{fig:asfmc_local_anchor}
\end{subfigure}
\caption{Illustration of Anchored Stochastic Flow Map Composition (ASFMC). Left: ASFMC decomposes a long-range transition $\psi_{t\to r}$ into three flow-map segments, $t\to r$, $r\to \tau$, and $\tau\to r$, with anchor time $\tau$ and anchor state $x_{\mathrm{anc}}$. The reverse segment $r\to \tau$ samples a stochastic intermediate state conditioned on $X_r$, thereby injecting randomness into the original transition $t\to r$ while preserving the marginal path. Middle and right: two valid choices of the anchor time, $\tau=1$ and $\tau=r+\Delta r$ with small $\Delta r$, respectively. We discuss another natural but invalid choice of $\tau$ in \autoref{sec:intermediate_anchor}.}
\label{fig:asfmc}
\end{figure}

To address this challenge, we propose \emph{Anchored Stochastic Flow Map Composition} (ASFMC), which injects randomness into flow-map sampling through an auxiliary anchor while preserving the original marginal path. The key intuition is that although the forward flow-map transition $\psi_{t\to r}$ is deterministic, stochasticity can be introduced through a conditional transition, which is naturally stochastic, such as the noising or posterior transition in diffusion models. Specifically, ASFMC first moves the deterministic endpoint at time $r$ to an anchor point $x_{\mathrm{anc}}$ at the anchor time $\tau$, and then samples back to time $r$ through the conditional distribution $p_{r| \tau}(\cdot| x_{\mathrm{anc}})$. The anchor $(\tau,x_{\mathrm{anc}})$ is therefore the shared interface between the deterministic compensation segment and the stochastic resampling segment, allowing ASFMC to combine flow-map composition with stochastic resampling while keeping the overall transition from $t$ to $r$ unchanged.

We now formalize the construction; see \autoref{fig:asfmc_principle} for an illustration. Given an input state $x_t$ and a target time $r$, the original deterministic flow map produces $x_r = \psi_{t\to r}(x_t)$.  ASFMC then chooses an anchor time $\tau$ and defines the anchor state by transporting this deterministic endpoint to $\tau$: $x_{\mathrm{anc}} = \psi_{r\to \tau}(x_r) = \psi_{r\to \tau} \circ \psi_{t\to r}(x_t)$.  The randomized output at time $r$ is obtained by sampling from the conditional distribution back to the target time $\tilde{x}_r \sim p_{r| \tau}(\cdot| x_{\mathrm{anc}})$.  Namely, ASFMC defines the stochastic flow map
\begin{equation}\label{eq:anchor}
\tilde \psi_{t\to r}(x_t) = \tilde{x}_r,
\qquad \tilde{x}_r \sim p_{r| \tau} (\cdot | \psi_{r\to \tau}
\circ \psi_{t\to r}(x_t) ).
\end{equation}
In this construction, the segment $r\to\tau$ forms an anchor state with marginal $p_\tau$, and the conditional transition $\tau\to r$ injects randomness while returning to the target time. Thus, ASFMC preserves the original transition times and replaces the deterministic endpoint with a stochastic sample from the correct marginal.  Moreover, this construction preserves the marginal distribution, since marginalizing the conditional transition $p_{r| \tau}$ over the anchor marginal $p_\tau$ recovers $p_r$.

\begin{theorem}[Path preservation of ASFMC]\label{thm:asfmc_path_preservation}
Let $X_t\sim p_t$ and define $X_r = \psi_{t\to r}(X_t)$, $X_{\mathrm{anc}} = \psi_{r\to \tau}(X_r)$. If the randomized output $\tilde{X}_r$ is sampled from the reverse conditional distribution
\begin{equation}
\tilde{X}_r \sim p_{r| \tau}( \cdot | X_{\mathrm{anc}} ),
\end{equation}
then $\tilde{X}_r\sim p_r$. Therefore, the stochastic flow map $\tilde \psi_{t\to r}$ induced by ASFMC preserves the marginal distribution of $\psi_{t\to r}$.
\end{theorem}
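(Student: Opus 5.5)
The argument is a two-step pushforward followed by a marginalization. Here $p_{r|\tau}(\cdot|y)$ is the conditional law of $X_r$ given $X_\tau=y$ under the joint law of the underlying process $\{X_s\}$. Any coupling of $(X_r,X_\tau)$ with marginals $p_r$ and $p_\tau$ works; for the standard constructions this is the forward noising or posterior kernel of the conditional path. The only property we need is the disintegration identity
\begin{equation*}
\int p_{r|\tau}(x\,|\,y)\,p_\tau(y)\,dy = p_r(x).
\end{equation*}
This holds by definition of a conditional density, since the joint $p_{r,\tau}(x,y)=p_{r|\tau}(x|y)p_\tau(y)$ has $p_r$ as its first marginal.

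\textbf{Step 1: the deterministic segments push $p_t$ to $p_\tau$.} I would first show that $X_r=\psi_{t\to r}(X_t)\sim p_r$. Since $p_s=[\psi_{0\to s}]_\sharp p_0$ and $\psi_{t\to r}=\psi_{0\to r}\circ\psi_{0\to t}^{-1}$, we get
\begin{equation*}
[\psi_{t\to r}]_\sharp p_t=[\psi_{0\to r}\circ\psi_{0\to t}^{-1}\circ\psi_{0\to t}]_\sharp p_0=p_r.
\end{equation*}
The same computation gives $X_{\mathrm{anc}}=\psi_{r\to\tau}(X_r)\sim p_\tau$, and it applies whether $\tau>r$ or $\tau<r$, because the flows are invertible diffeomorphisms of the ODE. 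Equivalently, $X_{\mathrm{anc}}=\psi_{t\to\tau}(X_t)$ by the semigroup property $\psi_{r\to\tau}\circ\psi_{t\to r}=\psi_{t\to\tau}$.

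\textbf{Step 2: marginalize the resampling kernel.} Conditional on $X_{\mathrm{anc}}=y$, the output $\tilde X_r$ has density $p_{r|\tau}(\cdot|y)$. By the tower property, the law of $\tilde X_r$ is
\begin{equation*}
\int p_{r|\tau}(\cdot|y)\,p_\tau(y)\,dy=p_r.
\end{equation*}
Note that $\tilde X_r$ depends on $X_t$ only through $X_{\mathrm{anc}}$, so no information about the earlier trajectory enters the marginal. Finally, the marginal-preservation claim for $\tilde\psi_{t\to r}$ follows by induction over the steps of a few-step schedule $0=t_0<t_1<\dots<t_K=1$. If the input at $t_k$ has law $p_{t_k}$, the output at $t_{k+1}$ has law $p_{t_{k+1}}$, which gives $\tilde p_t=p_t$ at the grid times, as required by Proposition~\ref{prop:no_flow_map_sto}.

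\textbf{Main obstacle.} The hard part is not the calculation but checking that $p_{r|\tau}$ is compatible with the same marginals $p_r$ and $p_\tau$ that the flow map induces. For an endpoint anchor $\tau=1$, $p_{r|1}$ is the conditional path $p_r(\cdot|x_1)$, and compatibility is exactly the flow-matching identity $p_r(x)=\int p_r(x|x_1)p_1(x_1)dx_1$ from \autoref{sec:flow_map}. For a local anchor $\tau=r+\Delta r$, I would take the kernel induced by the interpolation coupling, i.e.\ the joint law of $(X_r,X_\tau)$ under $X_s=a_sX_1+b_sX_0$. Its first marginal is again $p_r$ by construction.

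The subtle point is that this identity holds only under exact marginals. With a learned map $\psi^\theta$, the statement has to be read with $p_s$ replaced by the model path $[\psi^\theta_{0\to s}]_\sharp p_0$. One then needs $p_{r|\tau}$ to be the conditional of a coupling with those marginals, which for a well-trained model coincides with the data path. I would state this explicitly as the standing assumption of Theorem~\ref{thm:asfmc_path_preservation}.
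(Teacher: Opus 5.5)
Your proposal is correct and follows the same argument the paper gives (as a one-line justification just before the theorem; there is no separate appendix proof): the deterministic segments push $p_t$ forward to the anchor marginal $p_\tau$, and marginalizing $p_{r|\tau}$ against $p_\tau$ returns $p_r$. Your explicit pushforward computation and your exact-marginals caveat for a learned $\psi^\theta$ go beyond what the paper writes; note only that for the local anchor the paper takes $p_{r|\tau}$ to be the reverse-time SDE kernel rather than the interpolation coupling, though either satisfies the disintegration identity you rely on.
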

We next discuss the choice of the anchor time $\tau$, which leads to different stochastic flow maps $\tilde{\psi}_{t\to r}$.  We consider three choices that are useful in different settings: a local anchor $\tau=r+\Delta r$ with small $\Delta r$, an endpoint anchor $\tau=1$, and an intermediate anchor $r<\tau<1$.

\begin{figure}[t]
    \centering
    \includegraphics[width=\linewidth]{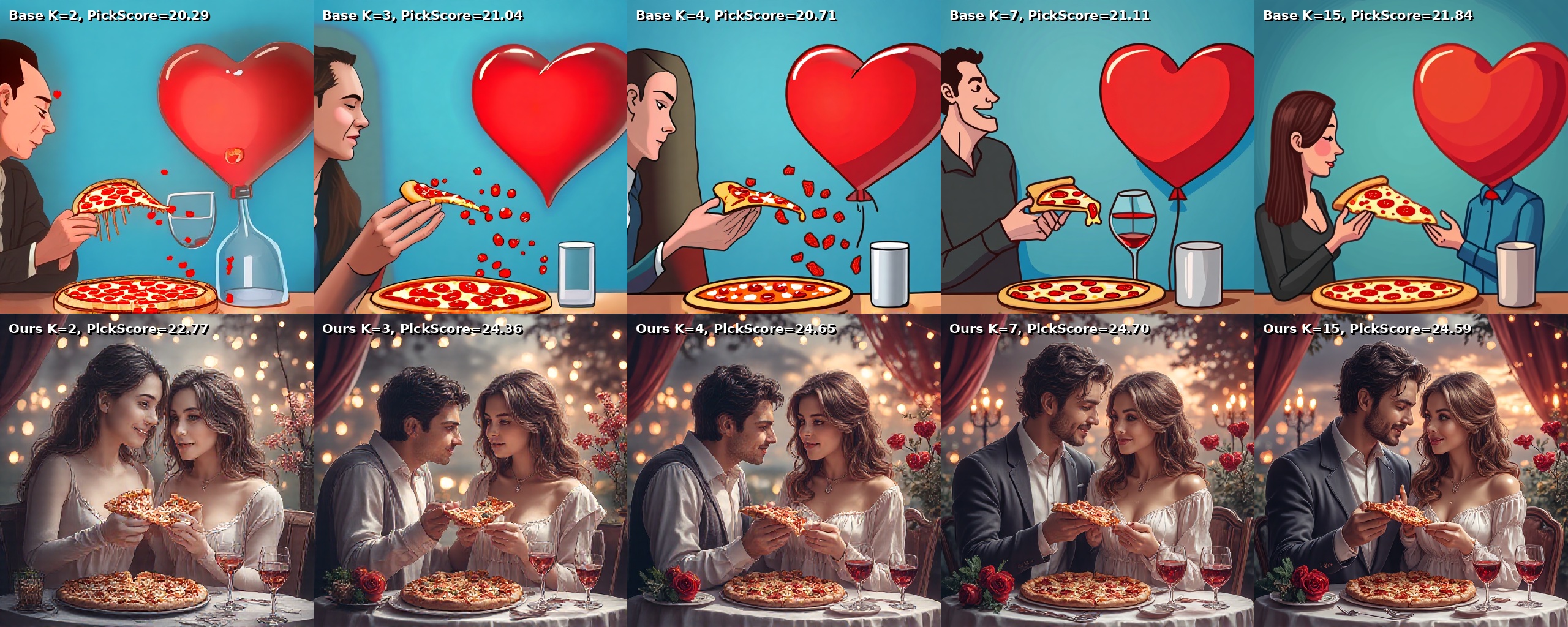}
    \caption{
    \textbf{MeanFlow post-training with PickScore reward.}
    Qualitative comparison between the base MeanFlow generator and the FlowMap-GRPO post-trained MeanFlow model on the prompt ``eating pizza at a romantic date.''
    The model is post-trained using PickScore reward with $K=4$ stochastic flow-map steps during training, and evaluated under multiple inference step budgets to examine cross-step generalization.
    }
    \label{fig:meanflow-pickscore-prompt-013}
\end{figure}

\subsubsection{Local Anchor}
The local anchor chooses an anchor time close to the target time, $\tau=r+\Delta r$, where $\Delta r>0$ is small. For the short anchor interval $[r,r+\Delta r]$, the conditional transition $p_{r|r+\Delta r}$ can be approximated by the closed-form local Gaussian transition induced by the SDE corresponding to the reverse direction of the ODE in \autoref{eq:ode} from $t=1$ to $t=0$, while preserving the marginal path of the reverse-time dynamics: $dX_t=[u_t(X_t)-\frac{\sigma_t^2}{2}\nabla_x\log p_t(X_t)]dt+\sigma_t dW_t$.We use the diffusion scale $\sigma_r=\lambda\sqrt{\frac{2b_r(\dot a_r b_r - a_r \dot b_r)}{a_r}}$ following \cite{liu2026flow}. For a small step $\Delta r>0$, the local transition from $\tau$ to $r$ takes the form by discretization
\begin{equation}
\begin{aligned}
    \tilde{X}_r &= X_\tau - \Delta r [ u_\tau(X_\tau) - \frac{\sigma_\tau^2}{2}\nabla_x\log p_\tau(X_\tau)] + \sigma_\tau\sqrt{\Delta r}\xi + O(\Delta r^{3/2})\\
    &= (1\!-\!\frac{\Delta r\sigma_\tau^2}{2}\frac{\dot a_r}{b_r(\dot a_r b_r - a_r \dot b_r)})X_\tau \!-\! \Delta r(1-\frac{\sigma_\tau^2}{2}\frac{a_r}{b_r(\dot a_r b_r - a_r \dot b_r)}) u_\tau(X_\tau) \!+\! \sigma_\tau\sqrt{\Delta r}\xi \!+\! O(\Delta r^{3/2})\\
    &= (1-\frac{\lambda^2\Delta r\dot a_\tau}{a_\tau})X_\tau - \Delta r(1-\lambda^2) u_\tau(X_\tau) + \sigma_\tau\sqrt{\Delta r}\xi + O(\Delta r^{3/2})\\
\end{aligned}
\end{equation}
where $\xi\sim\mathcal{N}(0,I)$, and $\nabla_x\log p_r(X_r)=\frac{a_ru_r(X_r)-\dot a_rX_r}{b_r(\dot a_r b_r - a_r \dot b_r)}$ is the score function of the affine probability path.  For the short forward deterministic segment $r\to\tau$, we have $X_\tau=\psi_{r\to r+\Delta r}(X_r)=X_r+\Delta r u_r(X_r)+O(\Delta r^2)$ by discretizing \autoref{eq:ode}. Substituting the above expressions for the deterministic segment $r\to\tau$ and the stochastic segment $\tau\to r$ into \autoref{eq:anchor} and using the local approximations $u_\tau(X_\tau)\approx u_r(X_r)$ and $\sigma_\tau\approx\sigma_r$, we obtain the local-anchor stochastic flow map
\begin{equation}
\tilde{\psi}_{t\to r}^{\mathrm{loc}}(X_t) =  X_r-\Delta r\lambda^2[\frac{\dot a_\tau}{a_\tau}X_r-u_r(X_r)]+\sigma_\tau\sqrt{\Delta r}\xi+O(\Delta r^{3/2}),
\qquad
\xi\sim\mathcal{N}(0,I).
\label{eq:local_anchor_update}
\end{equation}
This gives a simple closed-form stochasticization $\tilde \psi_{t\to r}$ of the deterministic flow-map transition $\psi_{t\to r}$, and the randomness is controlled by $\sigma_\tau\sqrt{\Delta r}$.  The following theorem proves the $O((\Delta r)^{3/2})$ error bound for the approximation used in the above local-anchor derivation.

\begin{theorem}[Error of the local-anchor approximation]\label{thm:local_anchor_error} Let $\tilde{X}_r^\star$ denote the exact sample obtained by running the reverse-time SDE transition $p_{r| \tau}(\cdot| X_\tau)$ from $\tau=r+\Delta r$ to $r$, where
$X_r=\psi_{t\to r}(X_t)$ and $X_\tau=\psi_{r\to r+\Delta r}(X_r)$. Let $\tilde{X}_r^{\mathrm{loc}}$ denote the local-anchor approximation $\tilde{X}_r^{\mathrm{loc}} = X_r - \Delta r\lambda^2[
\frac{\dot a_r}{a_r}X_r-u_r(X_r)] + \sigma_r\sqrt{\Delta r}\xi$, $\xi\sim\mathcal{N}(0,I)$, where $\sigma_r=\lambda\sqrt{\frac{2b_r(\dot a_r b_r-a_r\dot b_r)}{a_r}}$. Assume that $u_s(x)$, $a_s$, $b_s$, and $\sigma_s$ are smooth with bounded derivatives in a neighborhood of the trajectory. Then the local-anchor approximation satisfies the strong error bound
\begin{equation}
\|\tilde{X}_r^\star - \tilde{X}_r^{\mathrm{loc}}\|_{L^2}= O((\Delta r)^{3/2}).
\end{equation}
See \autoref{sec:proof_local_anchor_error} for the proof.
\end{theorem}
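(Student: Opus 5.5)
We compare the two samples on a common probability space: the exact reverse-time SDE and the one-step scheme are driven by the same Brownian increment, with $\xi=(W_{\tau}-W_r)/\sqrt{\Delta r}$. The exact sample is
\begin{equation*}
\tilde X_r^\star = X_\tau - \int_r^{\tau}\bar b_s(Y_s)\,ds + \int_r^{\tau}\sigma_s\,dW_s,
\qquad
\bar b_s(x)=u_s(x)-\tfrac{\sigma_s^2}{2}\nabla_x\log p_s(x).
\end{equation*}
Here $Y_s$ is the reverse-time solution started at $Y_\tau=X_\tau$. For the affine path, the score is the affine-in-$u$ expression stated above. Using $\sigma_s^2=2\lambda^2 b_s(\dot a_s b_s-a_s\dot b_s)/a_s$, the drift simplifies to
\begin{equation*}
\bar b_s(x)=(1-\lambda^2)u_s(x)+\lambda^2\frac{\dot a_s}{a_s}x,
\end{equation*}
exactly as in the third line of the local-anchor derivation. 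By the smoothness assumptions, $\bar b$ is Lipschitz in $x$ and $C^1$ in $s$ on a neighborhood of the trajectory, which is used repeatedly below.

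\textbf{Step 1: Itô–Taylor expansion of the exact transition.} First bound the deviation of the path: $\|Y_s-X_\tau\|_{L^2}=O((\tau-s)^{1/2})$, by a standard Grönwall/Itô isometry estimate. Then split the two integrals.
\begin{itemize}
\item For the drift integral, write $\int_r^\tau \bar b_s(Y_s)ds=\Delta r\,\bar b_\tau(X_\tau)+R_1$. Lipschitz continuity of $\bar b$ gives $\|R_1\|_{L^2}\le C\int_r^\tau (\tau-s)^{1/2}ds=O(\Delta r^{3/2})$.
\item For the noise integral, write $\int_r^\tau\sigma_s dW_s=\sigma_r\sqrt{\Delta r}\,\xi+R_2$, where $R_2=\int_r^\tau(\sigma_s-\sigma_r)dW_s$. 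By the Itô isometry, $\|R_2\|_{L^2}^2=\int_r^\tau(\sigma_s-\sigma_r)^2ds=O(\Delta r^3)$.
\end{itemize}
Because $\sigma$ is deterministic, no Milstein-type correction term appears. This is the key reason the strong order is $3/2$ rather than $1$.

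\textbf{Step 2: Replace the anchor quantities by quantities at $r$.} Substitute the deterministic segment $X_\tau=X_r+\Delta r\,u_r(X_r)+O(\Delta r^2)$, which holds by Taylor expansion of the ODE flow with bounded derivatives. Also use $\bar b_\tau(X_\tau)=\bar b_r(X_r)+O(\Delta r)$, by smoothness in both $s$ and $x$; multiplied by $\Delta r$, this error is $O(\Delta r^2)$. Collecting terms gives
\begin{equation*}
\tilde X_r^\star = X_r+\Delta r\,u_r(X_r)-\Delta r\Big[(1-\lambda^2)u_r(X_r)+\lambda^2\tfrac{\dot a_r}{a_r}X_r\Big]+\sigma_r\sqrt{\Delta r}\,\xi+O_{L^2}(\Delta r^{3/2}).
\end{equation*}
The $u_r$ terms combine as $\Delta r\,u_r-\Delta r(1-\lambda^2)u_r=\Delta r\,\lambda^2u_r$. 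The right-hand side is therefore exactly $\tilde X_r^{\mathrm{loc}}$, which yields the claimed bound.

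\textbf{Main obstacle.} The delicate step is keeping the $L^2$ constants uniform. This needs moment bounds for $Y_s$ and boundedness of $\bar b$ near the trajectory, which requires care because of the $1/a_s$ factor as $a_s\to0$ and the score blowing up near $s=0$. I would handle this by localizing: assume $r$ is bounded away from $0$, so that $a_s\ge a_{\min}>0$ on $[r,\tau]$. Then apply the stated hypothesis of bounded derivatives in a neighborhood, using a stopping-time argument to keep $Y_s$ inside that neighborhood with an exponentially small exit probability, which contributes only higher-order error.
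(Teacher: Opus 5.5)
Your proposal is correct and follows essentially the same route as the paper's proof. The shared steps are: write the exact reverse-time transition as a stochastic integral, reduce the drift to $(1-\lambda^2)u_s+\lambda^2\frac{\dot a_s}{a_s}x$ via the affine-path score, take a one-step Euler--Maruyama expansion with the shared increment $\xi$, freeze coefficients at $r$, and substitute $X_\tau=X_r+\Delta r\,u_r(X_r)+O((\Delta r)^2)$. The only differences are that you actually justify the $O((\Delta r)^{3/2})$ local Euler--Maruyama remainder, which the paper simply asserts, using path-deviation and It\^o-isometry bounds together with the state-independence of $\sigma_s$. You also add a localization argument that the paper omits.
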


\subsubsection{Endpoint Anchor}\label{sec:endpoint_anchor}

The endpoint anchor chooses $\tau=1$. In this case, the anchor state is obtained by transporting the deterministic endpoint to the data endpoint: $X_1=\psi_{r\to 1}(X_r)=$$X_r=\psi_{t\to r}(X_t)$.  For the affine probability path, we have $X_r=a_rX_1+b_rX_0$, where $X_0\sim\mathcal{N}(0,I)$ is independent of $X_1$. Therefore, the conditional distribution $p_{r| 1}$ has the closed-form Gaussian expression
\begin{equation}
p_{r| 1}(\cdot| X_1)
=
\mathcal{N}(a_rX_1,b_r^2 I).
\end{equation}
Thus, the endpoint-anchor stochastic flow map is given by
\begin{equation}
\tilde{\psi}_{t\to r}^{\mathrm{end}}(X_t)
=
\tilde{X}_r
=
a_rX_1+b_r\xi
=
a_r\psi_{r\to 1}\circ\psi_{t\to r}(X_t)+b_r\xi,
\qquad
\xi\sim\mathcal{N}(0,I).
\label{eq:endpoint_anchor_update}
\end{equation}
This gives a simple stochasticization of $\psi_{t\to r}$ through the endpoint anchor. Compared with the local anchor, the endpoint anchor performs resampling from the endpoint conditional distribution $p_{r|1}$ rather than from a short local interval, and therefore typically injects stronger randomness. However, this choice relies on the accuracy of the long-range map $\psi_{r\to 1}$. This requirement may be restrictive in practice, especially for two-time flow-map models such as MeanFlow, which parameterize general transitions $\psi_{t\to r}$ but are not specifically optimized as dedicated endpoint maps $\psi_{t\to 1}$.

\begin{figure}[t]
    \centering
    \includegraphics[width=\linewidth]{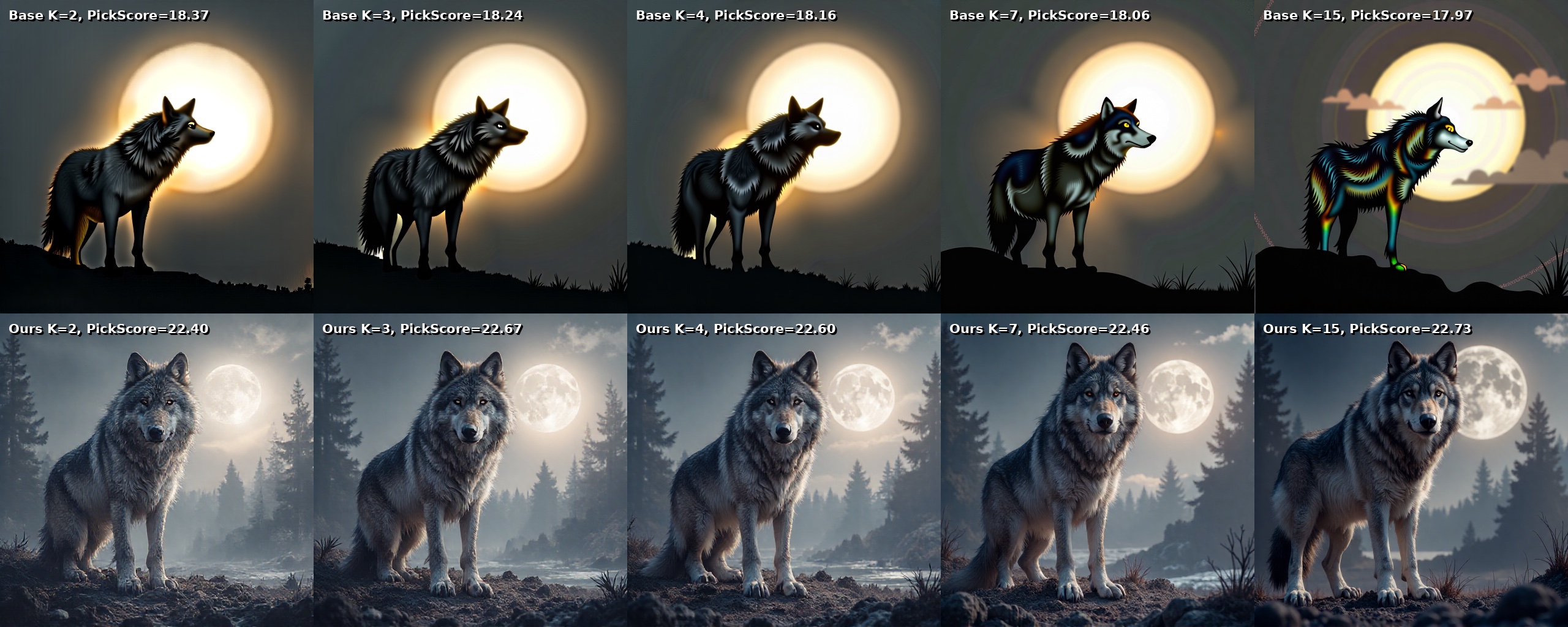}
    \caption{
    \textbf{sCM post-training with PickScore reward.}
    Qualitative comparison between the base sCM generator and the FlowMap-GRPO post-trained sCM model on the prompt ``Raw Photo, masterpiece award winning close up of a massive timber wolf standing in the moonlight in the dark in the darkness.''
    The model is post-trained using PickScore reward with $K=4$ stochastic flow-map steps during training, and evaluated under multiple inference step budgets to examine cross-step generalization.
    }
    \label{fig:scm-pickscore-prompt-106}
\end{figure}

\subsubsection{Intermediate Anchor}\label{sec:intermediate_anchor}
More generally, we may choose an intermediate anchor time $r<\tau<1$. A natural but incorrect idea is to mimic the endpoint anchor and use the affine path to construct a closed-form transition. Since $X_\tau=a_\tau X_1+b_\tau X_0$ and $X_r=a_r X_1+b_r X_0$, one may sample a fresh noise variable $\xi\sim\mathcal{N}(0,I)$, estimate $X_1$ by $\hat{X}_1=\frac{X_\tau-b_\tau \xi}{a_\tau}$, and then form
\begin{equation}
\tilde{\psi}_{t\to r}(X_t) = \tilde{X}_r^{\mathrm{naive}} = a_r\hat{X}_1+b_r\xi=\frac{a_r}{a_\tau}X_\tau+(b_r-\frac{a_r b_\tau}{a_\tau})\xi,\qquad\xi\sim\mathcal{N}(0,I).
\end{equation}
However, this approximation treats the noise variable $X_0$ as an unconditional Gaussian even after conditioning on $X_\tau$.  Indeed, the exact conditional distribution should sample $X_0$ from its posterior given $X_\tau$: $p_{0| \tau}(x_0| x_\tau) \propto p_0(x_0) p_1 ( \frac{x_\tau-b_\tau x_0}{a_\tau} )$, up to the constant Jacobian factor. Therefore, the exact conditional transition is
\begin{equation}
X_r| X_\tau
=
\frac{a_r}{a_\tau}X_\tau
+
(
b_r-\frac{a_r b_\tau}{a_\tau}
)X_0,
\qquad
X_0\sim p_{0| \tau}(\cdot| X_\tau).
\end{equation}
Equivalently, $p_{r| \tau}(x_r| x_\tau) = \int \delta (x_r-\frac{a_r}{a_\tau}x_\tau-( b_r-\frac{a_r b_\tau}{a_\tau} )x_0) p_{0| \tau}(x_0| x_\tau)dx_0$.  The naive Gaussian formula above replaces $p_{0| \tau}(\cdot| X_\tau)$ with the unconditional prior $p_0=\mathcal{N}(0,I)$, thereby ignoring the posterior constraint imposed by the anchor state. This missing posterior correction can be substantial for intermediate anchors, because $X_\tau$ contains information about both the data variable $X_1$ and the noise variable $X_0$. In our experiments, this approximation leads to large errors and unstable RL post-training.

\subsection{Flow-Map GRPO}\label{sec:flowmap_grpo}

With ASFMC, a deterministic flow map is converted into a stochastic policy while preserving the original probability path. For a flow-map step from $t_j$ to $r_j$, we define the state as $s_j=(c,t_j,r_j,x_{t_j})$ and the action as the next latent state $a_j=x_{r_j}$. The policy is induced by the stochastic flow map $\pi_\theta(a_j| s_j)= p_\theta(x_{r_j}| x_{t_j},c)=p_\theta(a_j=\tilde{\psi}_{t_j\to r_j}^{\theta}(x_{t_j}|c)|s_j)$. Given the sampled action $a_j=x_{r_j}$, the MDP transition is deterministic $P(s_{j+1}| s_j,a_j)=\delta_{(c,r_j,r_{j+1},a_j)}(s_{j+1})$, where the next state uses the sampled latent $a_j$ as its input state. Therefore, ASFMC turns flow-map sampling into a stochastic multi-step MDP as in \autoref{sec:rl_flow}, enabling policy-gradient post-training. In this work, we instantiate the RL objective with GRPO, while noting that ASFMC is a stochasticization mechanism and can be combined with other RL algorithms.

Flow-map models can be broadly divided into two categories. Single-time flow maps learn a fixed endpoint map $\psi_{t\to 1}$, while two-time flow maps directly learn transitions $\psi_{t\to r}$ between arbitrary time pairs. We summarize these parameterizations in \autoref{sec:appendix_one_time_two_time}. We next discuss how to apply Flow-Map GRPO to these two settings.

\subsubsection{Two-Time Flow Maps}\label{sec:two_time}
Two-time flow maps can be naturally combined with both the local anchor and the endpoint anchor. We take MeanFlow~\cite{geng2025mean} as an example, which parameterizes the transition by an average velocity $u^\theta_{t\to r}$:$\psi^\theta_{t\to r}(x_t|c) = x_t+(r-t)u^\theta_{t\to r}(x_t|c)$.  We split the interval $[0,1]$ into $K+1$ segments. The last segment $[r_{\mathrm{end}},1]$ is kept deterministic and is not optimized during RL post-training, similar to Flow-GRPO. The remaining stochastic transitions are defined on $[0,r_{\mathrm{end}}]$. Specifically, we first define uniform reference points $\hat r_i=\frac{i}{K}r_{\mathrm{end}}$ for $i=0,\ldots,K$, and then optionally apply an exponential time shift $r_i = r_{\mathrm{end}} \frac{\exp(\gamma i/K)-1}{\exp(\gamma)-1}$, $i=0,\ldots,K$, where $\gamma$ controls the concentration of time points.

For each stochastic segment $r_{j-1}\to r_j$, we instantiate the policy density using the ASFMC-randomized flow map $a_j=x_{r_j}\sim \pi_\theta(\cdot\mid s_j)=\tilde{\psi}^{\theta}_{r_{j-1}\to r_j}(x_{r_{j-1}}|c)$, which can be computed by either the local-anchor update in \autoref{eq:local_anchor_update} or the endpoint-anchor update in \autoref{eq:endpoint_anchor_update}
\begin{equation}\label{eq:two_time_anchor}
    \begin{aligned}
        a_j^{loc} &= \hat x_{r_j}-\Delta r\lambda^2[\frac{\dot a_\tau}{a_\tau}\hat x_{r_j}-u^\theta_{r\to r}(\hat x_{r_j})]+\sigma_\tau\sqrt{\Delta r}\xi\\
        a_j^{end} &= a_r\psi^\theta_{r\to 1}(\hat x_{r_j})+b_r\xi,\\
    \end{aligned}
\end{equation}
where $\xi\sim\mathcal{N}(0,I)$ and $\hat x_{r_j} = (r_j-r_{j-1})u^{\theta}_{r_{j-1}\to r_j}(x_{r_{j-1}}|c)+x_{r_{j-1}}$.  After the $K$ stochastic policy steps, the final deterministic segment maps $x_{r_{\mathrm{end}}}$ to the data endpoint $x_1=\psi^\theta_{r_{\mathrm{end}}\to 1}(x_{r_{\mathrm{end}}}|c)$, and the reward is evaluated only at the final sample, $R(x_1|c)$. Given a prompt $c$, we sample a group of $G$ trajectories $\{\tau^i\}_{i=1}^G$ from the old policy $\pi_{\theta_{\mathrm{old}}}$ and compute the group-normalized advantage $\hat{A}^i = \frac{ R(x^i_1|c)-\operatorname{mean}({R(x^j_1|c)}_{j=1}^G)}{ \operatorname{std}({R(x^j_1|c)}_{j=1}^G)}$.  We optimize the flow-map policy with the GRPO objective
\begin{equation}\label{eq:FM_GRPO}
\begin{aligned}
    \mathcal{J}_{\text{FM-GRPO}}(\theta)&= \mathbb{E}_{c,{\tau^i}_{i=1}^G\sim \pi_{\theta_{\mathrm{old}}}}[\frac{1}{G} \sum_{i=1}^G\frac{1}{K}\sum_{j=1}^K( \min(\rho^i_j(\theta)\hat{A}^i, \operatorname{clip}(\rho^i_j(\theta),1-\epsilon,1+\epsilon)\hat{A}^i)\\
    &\qquad\qquad\qquad\qquad- \beta D_{\mathrm{KL}}
(\pi_\theta(\cdot\mid s^i_j)|\pi_{\mathrm{ref}}(\cdot\mid s^i_j)))],
\end{aligned}
\end{equation}
where $s^i_j=(c,r_{j-1},r_j,x^i_{r_{j-1}})$ and $\rho^i_j(\theta)=\frac{\pi_\theta(x^i_{r_j}\mid s^i_j)}{\pi_{\theta_{\mathrm{old}}}(x^i_{r_j}\mid s^i_j)}$. The final deterministic segment $r_{\mathrm{end}}\to 1$ is used only to obtain the reward sample and is excluded from the policy-ratio and KL terms.  

\subsubsection{One-Time Flow Maps}\label{sec:one_time}

One-time flow maps learn a fixed endpoint map $\psi^\theta_{t\to 1}$ rather than arbitrary transitions $\psi^\theta_{t\to r}$. We take sCM as an example, where the model directly predicts the endpoint state $X_1=\psi^\theta_{t\to 1}(X_t|c)$. Following the sCM time parameterization, we uniformly split the angular interval $[0,\pi/2]$ into $K+1$ segments, and use the corresponding cosine time points $\{r_j\}_{j=0}^{K+1}$. The first $K$ segments are used as stochastic policy steps for RL post-training, while the last segment $[r_K,r_{K+1}]$ is kept deterministic and used only to obtain the final sample. The main difference from two-time flow maps is that one-time flow maps cannot directly use the local anchor, since the local anchor requires a short transition $\psi_{r\to r+\Delta r}$ or the instantaneous velocity $u_r$, which is not provided by a fixed-endpoint model.

Fortunately, the endpoint anchor is directly compatible with one-time flow maps. For each stochastic segment $r_{j-1}\to r_j$, we first predict the endpoint state $X_1=\psi^\theta_{r_{j-1}\to 1}(X_{r_{j-1}}|c)$, and then sample the next latent state from the affine conditional path:
\begin{equation}\label{eq:one_time_anchor}
\begin{aligned}
a_j^{end}&=a_{r_j}\psi^\theta_{r_{j-1}\to 1}(x_{r_{j-1}}|c)+b_{r_j}\xi,\\
\xi&\sim\mathcal{N}(0,I).    
\end{aligned}
\end{equation}
Thus, the policy density is $p_\theta(x_{r_j}\mid x_{r_{j-1}}|c)=\mathcal{N}(a_{r_j}\psi^\theta_{r_{j-1}\to 1}(x_{r_{j-1}}|c), b_{r_j}^2I)$. Given the sampled action $a_j=x_{r_j}$, the MDP transition remains deterministic as before: $P(s_{j+1}\mid s_j,a_j)=\delta_{(c,r_j,r_{j+1},a_j)}(s_{j+1})$.  After the $K$ stochastic endpoint-anchor steps, we use the deterministic endpoint map to obtain the final sample $x_1=\psi^\theta_{r_{\mathrm{end}}\to 1}(x_{r_{\mathrm{end}}}|c)$, and evaluate the reward $R(x_1|c)$. The GRPO objective is the same as in the two-time case, with the policy ratio computed using the one-time endpoint-anchor policy density above. 

The detailed algorithm for \autoref{sec:two_time} and \autoref{sec:one_time} is given in Algorithm~\autoref{alg:flowmap_grpo}. This algorithm enables GRPO post-training of flow-map-based few-step generators and can be combined with existing GRPO variants such as MixGRPO. During inference, the sampling procedure follows the standard one-time and two-time flow-map samplers, and we summarize these two sampling parameterizations in Algorithm \autoref{alg:flowmap_sampling}.



\section{Experiment}\label{sec:experiment}
\subsection{Text-to-Image Generation}

\begin{wraptable}{r}{0.52\linewidth}
\vspace{-3.7em}
\centering
\small
\begin{minipage}{1.0\linewidth}
\hrule
\vspace{0.2em}
\refstepcounter{algorithm}
\textbf{Algorithm \thealgorithm} Flow-Map GRPO with ASFMC
\label{alg:flowmap_grpo}
\vspace{0.3em}
\hrule
\vspace{0.4em}
\begin{algorithmic}[1]
\Require prompt dataset $\mathcal{D}$, group size $G$, stochastic steps $K$, time grid $\{r_j\}_{j=0}^{K}$ with $r_K=r_{\mathrm{end}}$, policy $\pi_\theta$, reference policy $\pi_{\mathrm{ref}}$, reward model $R$, clip range $\epsilon$, KL weight $\beta$, learning rate $\eta$
\Repeat
\State Sample prompt $c\sim\mathcal{D}$ and set $\theta_{\mathrm{old}}\gets\theta$
\For{$i=1,\ldots,G$}
\State Sample initial latent $x^i_{r_0}\sim p_0$
\For{$j=1,\ldots,K$}
\State $s^i_j\gets(c,r_{j-1},r_j,x^i_{r_{j-1}})$
\If{$\mathrm{type}=\mathrm{two\text{-}time}$}
\State Compute $\hat{x}^i_{r_j}\gets\psi^{\theta_{\mathrm{old}}}_{r_{j-1}\!\to\! r_j}(x^i_{r_{j-1}}|c)$
\If{$\alpha=\mathrm{loc}$}
\State Sample $a^i_j$ with local or end-\\ \qquad\qquad\qquad\qquad point anchor using eq.\ref{eq:two_time_anchor}
\EndIf
\Else
\State Compute $x^i_1\gets\psi^{\theta_{\mathrm{old}}}_{r_{j-1}\to 1}(x^i_{r_{j-1}}|c)$
\State Sample $a^i_j$ using anchor eq.\ref{eq:one_time_anchor}
\EndIf
\State Update $s^i_{j+1}\gets(c,r_j,r_{j+1},x^i_{r_j})$
\EndFor
\State Compute $x^i_1\gets\psi^{\theta_{\mathrm{old}}}_{r_{\mathrm{end}}\to 1}(x^i_{r_{\mathrm{end}}}|c)$
\State Compute reward $R^i\gets R(x^i_1|c)$
\EndFor
\State Compute $\mathcal{J}(\theta)$ using eq. \ref{eq:FM_GRPO}.
\State $\theta\gets\theta+\eta\nabla_\theta\mathcal{J}(\theta)$
\Until{convergence}
\end{algorithmic}
\hrule
\end{minipage}
\end{wraptable}

We first evaluate Flow-Map GRPO for text-to-image post-training using the official T2I-Distill checkpoints released by \cite{pu2025few}, which are distilled from FLUX.1-lite and include both MeanFlow and sCM few-step generators. Starting from these pretrained flow-map models, we freeze the base generator and train only LoRA adapters. For each backbone, we perform reward-specific post-training with PickScore, OCR accuracy, and GenEval as the final-image reward, respectively. Each reward defines a separate training run and produces a separate LoRA adapter; across these runs, we keep the resolution, guidance scale, optimizer, LoRA architecture, rollout group size, and ASFMC stochasticization hyperparameters fixed. Thus, the only changes across the PickScore, OCR, and GenEval runs are the reward function and the corresponding prompt set.  Details are shown in \autoref{appendix:t2i_details}.

During RL post-training, all models use $K=4$ stochastic ASFMC transitions. The reward is computed only from the final decoded image, while the likelihood-ratio and KL terms are evaluated on the $K$ stochastic policy transitions. The final deterministic endpoint transition is used only to obtain the reward sample and is excluded from the policy-ratio and KL terms.  At evaluation time, we follow the standard deterministic sampling procedure of each flow-map parameterization, as summarized in \autoref{sec:appendix_one_time_two_time}.

We report task-level, perceptual, and preference-based metrics, including GenEval, OCR accuracy, PickScore, aesthetic score, DQA, ImageReward, and UniReward, following \cite{liu2026flow}. Since MeanFlow and sCM use different flow-map parameterizations and native sampling schedules, we present their results in separate tables: MeanFlow results are shown in \autoref{tab:ocr_meanflow_uniform}, \autoref{tab:pickscore_meanflow_uniform}, and \autoref{tab:geneval_meanflow_uniform}, while sCM results are shown in \autoref{tab:ocr_scm_uniform}, \autoref{tab:pickscore_scm_uniform}, and \autoref{tab:geneval_scm_uniform}. Within each table, results are grouped by the number of inference steps.  Flow-Map GRPO LoRA post-training is performed with $K=4$, while evaluation is reported across multiple sampling steps to assess generalization across inference step counts.  For each sampling budget, the pretrained base checkpoint and its Flow-Map GRPO LoRA counterpart are evaluated with the same sampler, inference schedule, guidance scale, resolution, and random seed. This isolates the effect of RL post-training within each flow-map parameterization.  Additional qualitative comparisons are provided in \autoref{sec:additional}.

\begin{table}[t]
\centering
\caption{Text-to-image post-training results for the MeanFlow OCR checkpoint from T2I-Distill under uniform sampling steps.  \textit{OCR Acc.} is the primary optimization target and denotes OCR evaluation on the OCR/text-rendering task. DrawBench is used as a general text-to-image prompt benchmark to evaluate whether OCR-oriented post-training preserves broader image-quality, alignment, and reward metrics. $\uparrow$ indicates higher is better.}
\label{tab:ocr_meanflow_uniform}
\resizebox{\linewidth}{!}{
\begin{tabular}{c l c c c @{\hspace{0.8em}}|@{\hspace{0.8em}} c c c c c c}
\toprule
\multirow{2}{*}{\textbf{K}}
& \multirow{2}{*}{\textbf{Model}}
& \multicolumn{3}{c}{\textbf{Task Metrics}}
& \multicolumn{6}{c}{\textbf{DrawBench Metrics}} \\
\cmidrule(lr){3-5}
\cmidrule(lr){6-11}
& 
& \textbf{GenEval}$\uparrow$
& \textbf{OCR Acc.}$\uparrow$
& \textbf{PickScore Task}$\uparrow$
& \textbf{DB-PickScore}$\uparrow$
& \textbf{Aesthetic}$\uparrow$
& \textbf{DeQA}$\uparrow$
& \textbf{ImgRwd}$\uparrow$
& \textbf{UniRwd}$\uparrow$
& \textbf{DB Avg}$\uparrow$ \\
\midrule

\multirow{2}{*}{2}
& MeanFlow Base 
& -- & 0.2704 & -- 
& \textbf{21.2756} & \textbf{5.3766} & \textbf{3.8431} & \textbf{0.1884} & \textbf{0.4931} & \textbf{31.1769} \\
& MeanFlow + Flow-Map GRPO LoRA 
& -- & \textbf{0.7404} & -- 
& 21.2036 & 4.9996 & 3.0576 & -0.0145 & 0.4804 & 29.7266 \\
\midrule

\multirow{2}{*}{3}
& MeanFlow Base 
& -- & 0.3081 & -- 
& 21.5679 & \textbf{5.5148} & \textbf{4.0511} & \textbf{0.3482} & \textbf{0.5338} & \textbf{32.0158} \\
& MeanFlow + Flow-Map GRPO LoRA 
& -- & \textbf{0.8477} & -- 
& \textbf{21.6494} & 5.2110 & 3.5126 & 0.2763 & 0.5308 & 31.1801 \\
\midrule

\multirow{2}{*}{4}
& MeanFlow Base 
& -- & 0.3698 & -- 
& 21.7724 & \textbf{5.5561} & \textbf{4.1318} & \textbf{0.4710} & 0.5551 & \textbf{32.4864} \\
& MeanFlow + Flow-Map GRPO LoRA 
& -- & \textbf{0.8815} & -- 
& \textbf{21.9126} & 5.3086 & 3.7527 & 0.4334 & \textbf{0.5595} & 31.9668 \\
\midrule

\multirow{2}{*}{7}
& MeanFlow Base 
& -- & 0.4667 & -- 
& 22.1137 & \textbf{5.6304} & \textbf{4.2106} & 0.6016 & 0.5873 & \textbf{33.1437} \\
& MeanFlow + Flow-Map GRPO LoRA 
& -- & \textbf{0.9154} & -- 
& \textbf{22.2798} & 5.4462 & 4.0441 & \textbf{0.6696} & \textbf{0.6084} & 33.0481 \\
\midrule

\multirow{2}{*}{15}
& MeanFlow Base 
& -- & 0.5339 & -- 
& 22.2734 & \textbf{5.6611} & \textbf{4.2425} & 0.7030 & 0.6078 & \textbf{33.4879} \\
& MeanFlow + Flow-Map GRPO LoRA 
& -- & \textbf{0.9288} & -- 
& \textbf{22.4172} & 5.4708 & 4.1558 & \textbf{0.7579} & \textbf{0.6199} & 33.4217 \\

\bottomrule
\end{tabular}
}
\end{table}

\begin{table}[t]
\centering
\small
\setlength{\tabcolsep}{3.5pt}
\renewcommand{\arraystretch}{1.08}
\caption{Text-to-image post-training results for MeanFlow checkpoints from T2I-Distill under uniform sampling steps. \textit{PickScore Task} is the primary optimization target and denotes the standalone PickScore evaluation. DrawBench is used as a general text-to-image prompt benchmark to evaluate whether PickScore-oriented post-training transfers to broader image-quality, alignment, and reward metrics. $\uparrow$ indicates higher is better.}
\label{tab:pickscore_meanflow_uniform}
\resizebox{\linewidth}{!}{
\begin{tabular}{c l c c c @{\hspace{0.8em}}|@{\hspace{0.8em}} c c c c c c}
\toprule
\multirow{2}{*}{\textbf{K}}
& \multirow{2}{*}{\textbf{Model}}
& \multicolumn{3}{c}{\textbf{Task Metrics}}
& \multicolumn{6}{c}{\textbf{DrawBench Metrics}} \\
\cmidrule(lr){3-5}
\cmidrule(lr){6-11}
& 
& \textbf{GenEval}$\uparrow$
& \textbf{OCR Acc.}$\uparrow$
& \textbf{PickScore Task}$\uparrow$
& \textbf{DB-PickScore}$\uparrow$
& \textbf{Aesthetic}$\uparrow$
& \textbf{DeQA}$\uparrow$
& \textbf{ImgRwd}$\uparrow$
& \textbf{UniRwd}$\uparrow$
& \textbf{DB Avg}$\uparrow$ \\
\midrule

\multirow{2}{*}{2}
& MeanFlow Base 
& -- & -- & 20.5865 
& 21.2756 & 5.3766 & \textbf{3.8431} & 0.1884 & 0.4936 & 31.1774 \\
& MeanFlow + Flow-Map GRPO LoRA 
& -- & -- & \textbf{22.5396} 
& \textbf{22.9231} & \textbf{5.7641} & 3.6197 & \textbf{1.1055} & \textbf{0.5977} & \textbf{34.0101} \\
\midrule

\multirow{2}{*}{3}
& MeanFlow Base 
& -- & -- & 20.8831 
& 21.5679 & 5.5148 & \textbf{4.0511} & 0.3482 & 0.5338 & 32.0157 \\
& MeanFlow + Flow-Map GRPO LoRA 
& -- & -- & \textbf{23.0108} 
& \textbf{23.4095} & \textbf{5.9767} & 3.9998 & \textbf{1.2127} & \textbf{0.6455} & \textbf{35.2442} \\
\midrule

\multirow{2}{*}{4}
& MeanFlow Base 
& -- & -- & 21.0952 
& 21.7724 & 5.5561 & 4.1318 & 0.4710 & 0.5555 & 32.4868 \\
& MeanFlow + Flow-Map GRPO LoRA 
& -- & -- & \textbf{23.2047} 
& \textbf{23.6309} & \textbf{6.0623} & \textbf{4.1365} & \textbf{1.2436} & \textbf{0.6655} & \textbf{35.7386} \\
\midrule

\multirow{2}{*}{7}
& MeanFlow Base 
& -- & -- & 21.3676 
& 22.1137 & 5.6304 & 4.2106 & 0.6016 & 0.5876 & 33.1440 \\
& MeanFlow + Flow-Map GRPO LoRA 
& -- & -- & \textbf{23.4061} 
& \textbf{23.8306} & \textbf{6.1903} & \textbf{4.2691} & \textbf{1.2837} & \textbf{0.6826} & \textbf{36.2563} \\
\midrule

\multirow{2}{*}{15}
& MeanFlow Base 
& -- & -- & 21.4136 
& 22.2734 & 5.6611 & 4.2425 & 0.7030 & 0.6078 & 33.4879 \\
& MeanFlow + Flow-Map GRPO LoRA 
& -- & -- & \textbf{23.4496} 
& \textbf{23.8891} & \textbf{6.2606} & \textbf{4.3341} & \textbf{1.3039} & \textbf{0.6892} & \textbf{36.4769} \\

\bottomrule
\end{tabular}
}
\end{table}

\begin{table}[t]
\centering
\small
\setlength{\tabcolsep}{3.5pt}
\renewcommand{\arraystretch}{1.08}
\caption{Text-to-image post-training results for the MeanFlow GenEval checkpoint from T2I-Distill under uniform sampling steps. \textit{GenEval} is the primary optimization target and denotes the official full GenEval evaluation. DrawBench is used as a general text-to-image prompt benchmark to evaluate whether GenEval-oriented post-training preserves broader image-quality, alignment, and reward metrics. $\uparrow$ indicates higher is better.}
\label{tab:geneval_meanflow_uniform}
\resizebox{\linewidth}{!}{
\begin{tabular}{c l c c c @{\hspace{0.8em}}|@{\hspace{0.8em}} c c c c c c}
\toprule
\multirow{2}{*}{\textbf{K}}
& \multirow{2}{*}{\textbf{Model}}
& \multicolumn{3}{c}{\textbf{Task Metrics}}
& \multicolumn{6}{c}{\textbf{DrawBench Metrics}} \\
\cmidrule(lr){3-5}
\cmidrule(lr){6-11}
& 
& \textbf{GenEval}$\uparrow$
& \textbf{OCR Acc.}$\uparrow$
& \textbf{PickScore Task}$\uparrow$
& \textbf{DB-PickScore}$\uparrow$
& \textbf{Aesthetic}$\uparrow$
& \textbf{DeQA}$\uparrow$
& \textbf{ImgRwd}$\uparrow$
& \textbf{UniRwd}$\uparrow$
& \textbf{DB Avg}$\uparrow$ \\
\midrule

\multirow{2}{*}{3}
& MeanFlow Base 
& 0.4919 & -- & -- 
& \textbf{21.5679} & \textbf{5.5148} & \textbf{4.0511} & \textbf{0.3482} & \textbf{0.5338} & \textbf{32.0158} \\
& MeanFlow + Flow-Map GRPO LoRA 
& \textbf{0.6330} & -- & -- 
& 20.9773 & 4.9029 & 2.5845 & -0.1520 & 0.4502 & 28.7628 \\
\midrule

\multirow{2}{*}{4}
& MeanFlow Base 
& 0.5117 & -- & -- 
& \textbf{21.7724} & \textbf{5.5561} & \textbf{4.1318} & \textbf{0.4710} & \textbf{0.5551} & \textbf{32.4864} \\
& MeanFlow + Flow-Map GRPO LoRA 
& \textbf{0.7315} & -- & -- 
& 21.4273 & 5.0450 & 3.0630 & 0.2152 & 0.5068 & 30.2574 \\
\midrule

\multirow{2}{*}{7}
& MeanFlow Base 
& 0.5428 & -- & -- 
& \textbf{22.1137} & \textbf{5.6304} & \textbf{4.2106} & \textbf{0.6016} & \textbf{0.5873} & \textbf{33.1437} \\
& MeanFlow + Flow-Map GRPO LoRA 
& \textbf{0.8113} & -- & -- 
& 21.8074 & 5.1332 & 3.5618 & 0.4990 & 0.5551 & 31.5566 \\
\midrule

\multirow{2}{*}{15}
& MeanFlow Base 
& 0.5497 & -- & -- 
& \textbf{22.2734} & \textbf{5.6611} & \textbf{4.2425} & \textbf{0.7030} & \textbf{0.6078} & \textbf{33.4879} \\
& MeanFlow + Flow-Map GRPO LoRA 
& \textbf{0.8409} & -- & -- 
& 21.9382 & 5.1779 & 3.7964 & 0.5704 & 0.5803 & 32.0632 \\

\bottomrule
\end{tabular}
}
\end{table}

\begin{table}[t]
\centering
\small
\setlength{\tabcolsep}{3.5pt}
\renewcommand{\arraystretch}{1.08}
\caption{Text-to-image post-training results for the sCM OCR checkpoint from T2I-Distill under uniform sampling steps. \textit{OCR Acc.} is the primary optimization target and denotes OCR evaluation on the OCR/text-rendering task. DrawBench is used as a general text-to-image prompt benchmark to evaluate whether OCR-oriented post-training preserves broader image-quality, alignment, and reward metrics. $\uparrow$ indicates higher is better.}
\label{tab:ocr_scm_uniform}
\resizebox{\linewidth}{!}{
\begin{tabular}{c l c c c @{\hspace{0.8em}}|@{\hspace{0.8em}} c c c c c c}
\toprule
\multirow{2}{*}{\textbf{K}}
& \multirow{2}{*}{\textbf{Model}}
& \multicolumn{3}{c}{\textbf{Task Metrics}}
& \multicolumn{6}{c}{\textbf{DrawBench Metrics}} \\
\cmidrule(lr){3-5}
\cmidrule(lr){6-11}
& 
& \textbf{GenEval}$\uparrow$
& \textbf{OCR Acc.}$\uparrow$
& \textbf{PickScore Task}$\uparrow$
& \textbf{DB-PickScore}$\uparrow$
& \textbf{Aesthetic}$\uparrow$
& \textbf{DeQA}$\uparrow$
& \textbf{ImgRwd}$\uparrow$
& \textbf{UniRwd}$\uparrow$
& \textbf{DB Avg}$\uparrow$ \\
\midrule

\multirow{2}{*}{2}
& sCM Base 
& -- & 0.3347 & -- 
& 21.4096 & \textbf{5.3940} & 3.2930 & \textbf{0.5690} & 2.6314 & 33.2970 \\
& sCM + Flow-Map GRPO LoRA 
& -- & \textbf{0.8449} & -- 
& \textbf{21.7509} & 5.1688 & \textbf{3.3645} & 0.3984 & \textbf{2.6761} & \textbf{33.3587} \\
\midrule

\multirow{2}{*}{3}
& sCM Base 
& -- & 0.4243 & -- 
& 21.4594 & \textbf{5.4468} & 3.4805 & \textbf{0.5783} & 2.7145 & 33.6795 \\
& sCM + Flow-Map GRPO LoRA 
& -- & \textbf{0.9038} & -- 
& \textbf{21.9248} & 5.2884 & \textbf{3.7097} & 0.4750 & \textbf{2.7878} & \textbf{34.1857} \\
\midrule

\multirow{2}{*}{4}
& sCM Base 
& -- & 0.4783 & -- 
& 21.4914 & \textbf{5.4745} & 3.5734 & \textbf{0.5785} & 2.7046 & 33.8224 \\
& sCM + Flow-Map GRPO LoRA 
& -- & \textbf{0.9221} & -- 
& \textbf{21.9349} & 5.3522 & \textbf{3.8697} & 0.5109 & \textbf{2.8652} & \textbf{34.5329} \\
\midrule

\multirow{2}{*}{7}
& sCM Base 
& -- & 0.6086 & -- 
& 21.4984 & \textbf{5.5112} & 3.6908 & \textbf{0.6071} & 2.7467 & 34.0542 \\
& sCM + Flow-Map GRPO LoRA 
& -- & \textbf{0.9318} & -- 
& \textbf{21.8895} & 5.4672 & \textbf{4.0625} & 0.5179 & \textbf{2.8546} & \textbf{34.7917} \\
\midrule

\multirow{2}{*}{15}
& sCM Base 
& -- & 0.6694 & -- 
& 21.3812 & \textbf{5.5457} & 3.7325 & \textbf{0.5216} & 2.6981 & 33.8791 \\
& sCM + Flow-Map GRPO LoRA 
& -- & \textbf{0.9186} & -- 
& \textbf{21.6972} & 5.4668 & \textbf{4.0658} & 0.4416 & \textbf{2.8257} & \textbf{34.4971} \\

\bottomrule
\end{tabular}
}
\end{table}

\begin{table}[t]
\centering
\small
\setlength{\tabcolsep}{3.5pt}
\renewcommand{\arraystretch}{1.08}
\caption{Text-to-image post-training results for sCM checkpoints from T2I-Distill under uniform sampling steps.  \textit{PickScore Task} is the primary optimization target and denotes the standalone PickScore evaluation. DrawBench is used as a general text-to-image prompt benchmark to evaluate whether PickScore-oriented post-training transfers to broader image-quality, alignment, and reward metrics. $\uparrow$ indicates higher is better.}
\label{tab:pickscore_scm_uniform}
\resizebox{\linewidth}{!}{
\begin{tabular}{c l c c c @{\hspace{0.8em}}|@{\hspace{0.8em}} c c c c c c}
\toprule
\multirow{2}{*}{\textbf{K}}
& \multirow{2}{*}{\textbf{Model}}
& \multicolumn{3}{c}{\textbf{Task Metrics}}
& \multicolumn{6}{c}{\textbf{DrawBench Metrics}} \\
\cmidrule(lr){3-5}
\cmidrule(lr){6-11}
& 
& \textbf{GenEval}$\uparrow$
& \textbf{OCR Acc.}$\uparrow$
& \textbf{PickScore Task}$\uparrow$
& \textbf{DB-PickScore}$\uparrow$
& \textbf{Aesthetic}$\uparrow$
& \textbf{DeQA}$\uparrow$
& \textbf{ImgRwd}$\uparrow$
& \textbf{UniRwd}$\uparrow$
& \textbf{DB Avg}$\uparrow$ \\
\midrule

\multirow{2}{*}{2}
& sCM Base 
& -- & -- & 20.2924 
& 21.4096 & 5.3940 & 3.2930 & 0.5690 & 2.6300 & 33.2956 \\
& sCM + Flow-Map GRPO LoRA 
& -- & -- & \textbf{23.0733} 
& \textbf{23.4608} & \textbf{5.8943} & \textbf{3.8811} & \textbf{1.1759} & \textbf{3.1291} & \textbf{37.5412} \\
\midrule

\multirow{2}{*}{3}
& sCM Base 
& -- & -- & 20.2841 
& 21.4594 & 5.4468 & 3.4805 & 0.5783 & 2.7155 & 33.6805 \\
& sCM + Flow-Map GRPO LoRA 
& -- & -- & \textbf{23.2256} 
& \textbf{23.6317} & \textbf{5.9099} & \textbf{4.0854} & \textbf{1.2126} & \textbf{3.2613} & \textbf{38.1009} \\
\midrule

\multirow{2}{*}{4}
& sCM Base 
& -- & -- & 20.3520 
& 21.4914 & 5.4745 & 3.5734 & 0.5785 & 2.7059 & 33.8237 \\
& sCM + Flow-Map GRPO LoRA 
& -- & -- & \textbf{23.2781} 
& \textbf{23.6559} & \textbf{5.9057} & \textbf{4.1699} & \textbf{1.2092} & \textbf{3.2988} & \textbf{38.2395} \\
\midrule

\multirow{2}{*}{7}
& sCM Base 
& -- & -- & 20.2870 
& 21.4984 & 5.5112 & 3.6908 & 0.6071 & 2.7470 & 34.0545 \\
& sCM + Flow-Map GRPO LoRA 
& -- & -- & \textbf{23.2254} 
& \textbf{23.6130} & \textbf{5.8814} & \textbf{4.2523} & \textbf{1.1777} & \textbf{3.3508} & \textbf{38.2752} \\
\midrule

\multirow{2}{*}{15}
& sCM Base 
& -- & -- & 20.0630 
& 21.3812 & 5.5457 & 3.7325 & 0.5216 & 2.6999 & 33.8809 \\
& sCM + Flow-Map GRPO LoRA 
& -- & -- & \textbf{22.9545} 
& \textbf{23.3792} & \textbf{5.8529} & \textbf{4.2720} & \textbf{1.1069} & \textbf{3.2947} & \textbf{37.9057} \\

\bottomrule
\end{tabular}
}
\end{table}

\begin{table}[t]
\centering
\small
\setlength{\tabcolsep}{3.5pt}
\renewcommand{\arraystretch}{1.08}
\caption{Text-to-image post-training results for the sCM GenEval checkpoint from T2I-Distill under uniform sampling steps. \textit{GenEval} is the primary optimization target and denotes the full GenEval evaluation. DrawBench is used as a general text-to-image prompt benchmark to evaluate whether GenEval-oriented post-training preserves broader image-quality, alignment, and reward metrics. $\uparrow$ indicates higher is better.}
\label{tab:geneval_scm_uniform}
\resizebox{\linewidth}{!}{
\begin{tabular}{c l c c c @{\hspace{0.8em}}|@{\hspace{0.8em}} c c c c c c}
\toprule
\multirow{2}{*}{\textbf{K}}
& \multirow{2}{*}{\textbf{Model}}
& \multicolumn{3}{c}{\textbf{Task Metrics}}
& \multicolumn{6}{c}{\textbf{DrawBench Metrics}} \\
\cmidrule(lr){3-5}
\cmidrule(lr){6-11}
& 
& \textbf{GenEval}$\uparrow$
& \textbf{OCR Acc.}$\uparrow$
& \textbf{PickScore Task}$\uparrow$
& \textbf{DB-PickScore}$\uparrow$
& \textbf{Aesthetic}$\uparrow$
& \textbf{DeQA}$\uparrow$
& \textbf{ImgRwd}$\uparrow$
& \textbf{UniRwd}$\uparrow$
& \textbf{DB Avg}$\uparrow$ \\
\midrule

\multirow{2}{*}{2}
& sCM Base 
& 0.5436 & -- & -- 
& \textbf{21.4096} & \textbf{5.3940} & \textbf{3.2930} & \textbf{0.5690} & \textbf{2.6290} & \textbf{33.2946} \\
& sCM + Flow-Map GRPO LoRA 
& \textbf{0.8423} & -- & -- 
& 20.9817 & 4.9896 & 3.0097 & 0.0562 & 2.6094 & 31.6466 \\
\midrule

\multirow{2}{*}{3}
& sCM Base 
& 0.5127 & -- & -- 
& \textbf{21.4594} & \textbf{5.4468} & \textbf{3.4805} & \textbf{0.5783} & 2.7148 & \textbf{33.6798} \\
& sCM + Flow-Map GRPO LoRA 
& \textbf{0.8814} & -- & -- 
& 21.2339 & 5.0886 & 3.3370 & 0.2568 & \textbf{2.7611} & 32.6774 \\
\midrule

\multirow{2}{*}{4}
& sCM Base 
& 0.5205 & -- & -- 
& \textbf{21.4914} & \textbf{5.4745} & \textbf{3.5734} & \textbf{0.5785} & 2.7073 & \textbf{33.8251} \\
& sCM + Flow-Map GRPO LoRA 
& \textbf{0.8947} & -- & -- 
& 21.2995 & 5.1348 & 3.5168 & 0.3306 & \textbf{2.8204} & 33.1021 \\
\midrule

\multirow{2}{*}{7}
& sCM Base 
& 0.4830 & -- & -- 
& \textbf{21.4984} & \textbf{5.5112} & 3.6908 & \textbf{0.6071} & 2.7480 & \textbf{34.0555} \\
& sCM + Flow-Map GRPO LoRA 
& \textbf{0.9074} & -- & -- 
& 21.4316 & 5.1999 & \textbf{3.7408} & 0.4452 & \textbf{2.9215} & 33.7390 \\
\midrule

\multirow{2}{*}{15}
& sCM Base 
& 0.4337 & -- & -- 
& 21.3812 & \textbf{5.5457} & 3.7325 & \textbf{0.5216} & 2.7001 & \textbf{33.8811} \\
& sCM + Flow-Map GRPO LoRA 
& \textbf{0.9094} & -- & -- 
& \textbf{21.3911} & 5.1999 & \textbf{3.8372} & 0.4767 & \textbf{2.8975} & 33.8024 \\

\bottomrule
\end{tabular}
}
\end{table}

\section{Related Work}
\paragraph{Continuous-time generative models.}
Diffusion models and continuous-time flow-based models have become a central
framework for high-quality generative modeling. Diffusion models define a
stochastic noising process and learn to reverse it through denoising or score
estimation \citep{ho2020denoising, song2020denoising, song2020score}. In
parallel, flow-based formulations such as flow matching and rectified flow learn
a time-dependent velocity field that transports a simple prior distribution to
the data distribution through a probability-flow ODE
\citep{lipman2022flow, liu2023flow}. Stochastic interpolants further
provide a broad framework that connects deterministic flows and stochastic
diffusions through probability paths \citep{albergo2025stochastic}. These
continuous-time models provide a principled foundation for generative sampling,
but typically require many numerical integration steps at inference time.

\paragraph{Few-step flow-map generation.}
To reduce sampling cost, recent methods learn long-range transport operators,
or flow maps, that directly map samples between time points. Consistency models
learn mappings from noisy states to clean data and enable one-step or few-step
generation through self-consistency constraints
\citep{song2023consistency, song2023improved, geng2024consistency}. Flow Map
Matching provides a mathematical framework for learning two-time flow maps of an
underlying probability-flow dynamics and connects flow maps with consistency
models and progressive distillation \citep{boffi2025flow}. Shortcut models
and MeanFlow further parameterize long-range transitions or average velocities,
allowing direct jumps over arbitrary time intervals and enabling efficient
few-step generation \citep{frans2025shortcut, geng2025mean, geng2025improved}. These methods achieve fast inference by replacing iterative ODE integration with deterministic long-range mappings. However, because the learned transition
$\psi_{t\rightarrow r}$ is deterministic, these models do not naturally define
stochastic trajectories or transition likelihoods, which makes direct
reinforcement learning post-training nontrivial.

\paragraph{Reinforcement learning for generative models.}
Reinforcement learning has recently been used to align generative models with
task-level rewards, including human preference, aesthetic quality, and
downstream evaluation metrics. DDPO formulates diffusion sampling as a
multi-step Markov decision process and applies policy-gradient optimization to
stochastic denoising trajectories \citep{black2024training}. Subsequent methods
extend similar ideas to visual generation and flow-based models
\citep{xue2025dancegrpo, liu2026flow, li2025mixgrpo}. In particular,
Flow-GRPO introduces stochasticity into velocity-based flow models by replacing
the deterministic probability-flow ODE with a path-preserving SDE, thereby
obtaining stochastic transitions and likelihood ratios for GRPO optimization.
Relatedly, GLASS Flows \citep{holderrieth2026glass} improve stochastic
transition sampling for inference-time alignment of continuous-time flow and
diffusion models.  The main difference from our setting is that these methods target diffusion or  velocity-based flow models, where generation is represented through  infinitesimal or finely discretized transitions. In contrast, few-step
flow-map models directly parameterize long-range transports
$\psi_{t\rightarrow r}$. For such transitions, the corresponding SDE-induced
transition generally depends on the stochastic path between $t$ and $r$, and
therefore cannot be treated as a simple local Gaussian perturbation of the
deterministic flow map. As a result, stochasticization techniques developed for
velocity-based samplers are not directly applicable to deterministic flow-map
generators without an additional path-preserving construction.

\paragraph{Stochastic and reward-aware flow maps.}
Several recent works have explored stochastic variants of flow maps for
posterior sampling and reward alignment. Meta Flow Maps (MFMs)
\citep{potaptchik2026meta} extend consistency models and flow maps into the
stochastic regime by training a model to perform one-step posterior sampling,
producing multiple samples from $p(x_1 \mid x_t)$ given an intermediate noisy
state. Diamond Maps \citep{holderrieth2026diamond} propose stochastic flow-map
models designed for efficient reward alignment, aiming to make adaptability to
arbitrary rewards an intrinsic property of the generative model. Strong
Stochastic Flow Maps \citep{mccallum2026strong} directly generalize
deterministic flow maps to the stochastic setting by learning strong solution
maps of additive-noise SDEs, while stochastic few-step models
\citep{passaro2026stochastic} study efficient few-step sampling from
SDE-defined conditional distributions. Relatedly, Flow Map Reward Guidance
\citep{huang2026guide} uses flow maps for training-free reward
guidance from an optimal-control perspective.

These methods are closely related in that they also recognize the importance of
stochasticity or reward-aware control for efficient alignment. However, they
address a different regime from ours. Existing stochastic flow-map methods
typically train or redesign the generative model to be stochastic by
construction, for example by learning a native stochastic transition kernel,
posterior sampler, or stochastic solution map. In contrast, our goal is to start
from a pretrained deterministic flow map and introduce stochasticity only as a
post-hoc transformation for RL post-training. \textbf{Our ASFMC construction does not change the deterministic flow-map parameterization.} Instead, it composes the
deterministic flow map with anchor-based conditional resampling, yielding a
path-preserving stochastic policy suitable for likelihood-ratio-based GRPO
optimization.

\paragraph{Positioning.}
Our work is complementary to native stochastic flow-map learning. Rather than
training a new stochastic flow-map model class, we provide a stochasticization
mechanism for existing deterministic few-step generators. This distinction is
important for practical post-training: ASFMC allows pretrained consistency-style
or MeanFlow-style models to be converted into stochastic policies without
modifying their original training objective or model parameterization.

\section{Conclusion}
We presented Flow-Map GRPO, a reinforcement learning post-training
framework for deterministic few-step flow-map generators. Our starting point is
the observation that existing SDE-based stochasticization methods, although
effective for velocity-based diffusion and flow models, are intrinsically local
and do not directly extend to long-range flow-map transitions. To overcome this
limitation, we introduced Anchored Stochastic Flow Map Composition
, which converts a deterministic flow map into a stochastic policy by
combining deterministic transport with anchor-based conditional resampling. This
construction preserves the original marginal probability path while providing
the trajectory-level stochasticity needed for policy-gradient optimization.  Building on ASFMC, we formulated few-step flow-map sampling as a stochastic MDP
and derived a GRPO-style objective that applies to both single-time and two-time
flow-map parameterizations. Empirically, we validated our method on few-step
FLUX-based text-to-image generators, showing that RL post-training can improve
pretrained MeanFlow and sCM checkpoints across reward-based, perceptual, and
task-level metrics.  

Our work suggests that deterministic few-step generators can benefit from the  same reward-driven post-training paradigm that has proven effective for  diffusion and velocity-based flow models, provided that stochasticity is introduced in a way that respects the underlying probability path. We hope this opens a path toward broader RL alignment of fast generative models, including more general flow-map architectures and other few-step image, video, and
multimodal generators.

\bibliography{physics_informed,refs_application,refs_flowmap,refs_generative,refs_obj,refs_rl}
\bibliographystyle{iclr2025_conference}

\clearpage
\appendix

\startcontents[appendix]
\section*{Appendix}
\addcontentsline{toc}{section}{Appendix}

\printcontents[appendix]{}{1}{\setcounter{tocdepth}{2}}

\section{Flow-Map-Based Few-Step Method} \label{appendix:flow_map}

\subsection{Flow Map Learning}\label{sec:flow_map_learning}

As discussed in \autoref{sec:flow_map}, flow-map learning aims to directly learn the long-range transport map $\psi_{t\to r}$ associated with an underlying probability flow. Once such a map is available, one-step generation can be performed by
$x_1 = \psi^\theta_{0\to 1}(x_0)$, $x_0 \sim p_0$.  This perspective differs from conventional multi-step generative modeling, which learns an instantaneous velocity $u_t$ or score field $\nabla \log p_t$ and then numerically integrates the corresponding dynamics. Flow-map learning instead attempts to amortize this integration into a mapping that performs a long-range transition in one or a few network evaluations.

A central difficulty is that there is usually no easily computable closed-form target flow map $\psi_{t\to r}(x_t)$ for direct supervision. Therefore, one cannot simply train the model with a direct regression objective of the form $|\psi^\theta_{t\to r}(x_t)-\psi_{t\to r}(x_t)|_2^2$. In multi-step methods such as Flow Matching, this issue is circumvented by constructing a conditional probability path $p_t(x_t|x_1)$ and using an associated instantaneous quantity, such as the conditional velocity $u_t(x_t|x_1)$, to regress the marginal velocity:
\begin{equation}
    \mathcal{L}_{\mathrm{FM}}(\theta)
    =
    \mathbb{E}_{t,\,x_1\sim p_1,\,x_t\sim p_{t|1}(\cdot|x_1)}
    [
    \|
    u^\theta(x_t,t)-u_t(x_t|x_1)
    \|_2^2
    ].
\end{equation}
The same strategy, however, does not directly extend to flow maps. Unlike instantaneous velocity fields, long-range flow maps do not admit a self-consistent conditional counterpart.

\begin{theorem}[(Non-existence of conditional flow maps; Theorem 4.1 of \cite{li2026trajectory})]\label{thm:non_exist2}
There exists no conditional flow map $\psi_{t\to r}(x|x_1)$ that simultaneously (1) is consistent with the conditional velocity $u_t(x|x_1)$ under the flow-map evolution equation in \autoref{eq:ode}; and (2) satisfies the marginal consistency relation $\psi_{t\to r}(x)=\mathbb{E}_{x_1\sim p_t(x_1|x)}[\psi_{t\to r}(x|x_1)]$. Consequently, a self-consistent conditional cumulative field does not exist.
\end{theorem}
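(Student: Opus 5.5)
We argue by contradiction, using the simplest possible counterexample: a conditional path with a single data point. Take $p_1=\delta_{y}$ for a fixed $y\in\mathcal{X}$. With the affine path $X_t=a_tX_1+b_tX_0$, the conditional path $p_t(\cdot|x_1)$ and the marginal path $p_t$ then coincide, namely $p_t=\mathcal{N}(a_t y,b_t^2 I)$. Likewise the marginal velocity equals the conditional velocity, $u_t(x)=u_t(x|y)$.

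The conditional flow map is forced by requirement (1). Integrating the linear ODE $\dot x_s=\frac{\dot b_s}{b_s}x_s+(\dot a_s-\frac{a_s\dot b_s}{b_s})y$ from $t$ to $r$ gives the unique solution
\[
\psi_{t\to r}(x|y)=a_r y+\frac{b_r}{b_t}(x-a_t y),
\]
which is the straight-line transport of the noise variable.

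Next we compute the marginal map and the posterior appearing in (2). The marginal flow map $\psi_{t\to r}(x)$ must solve the ODE with the marginal velocity, so it equals the same expression. The posterior $p_t(x_1|x)$ is $\delta_y$. In this degenerate case (2) therefore holds trivially, so a single point cannot give the contradiction.

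This fixes the plan: take a two-point data distribution, $p_1=\frac12(\delta_{y_1}+\delta_{y_2})$, which is the simplest non-degenerate case. The steps are as follows.

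\begin{enumerate}
\item Requirement (1) still pins down each conditional map uniquely as $\psi_{t\to r}(x|y_i)=a_r y_i+\frac{b_r}{b_t}(x-a_t y_i)$, because the conditional velocity for fixed $x_1$ is a linear ODE with a unique solution.
\item The right-hand side of (2) is then
\[
\mathbb{E}[\psi_{t\to r}(x|X_1)\mid X_t=x]=\frac{b_r}{b_t}x+\Big(a_r-\frac{b_r a_t}{b_t}\Big)m_t(x),
\]
where $m_t(x)=\mathbb{E}[X_1|X_t=x]$ is a logistic (tanh-type) interpolation between $y_1$ and $y_2$.
\item The left-hand side is the true marginal flow map, obtained by integrating $\dot x_s=\frac{\dot b_s}{b_s}x_s+(\dot a_s-\frac{a_s\dot b_s}{b_s})m_s(x_s)$ along the trajectory. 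If (2) held for all $r$, differentiating in $r$ at $r=t$ returns only the velocity, which is consistent. The real constraint appears at second order in $r$.
\end{enumerate}

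To close the argument we differentiate (2) twice in $r$ at $r=t$, or equivalently use the semigroup property $\psi_{s\to r}\circ\psi_{t\to s}=\psi_{t\to r}$. The marginal map satisfies this property, whereas the averaged expression does not, since the posterior mean $m_s(\psi_{t\to s}(x))$ is not constant along trajectories: $X_1$ is only conditionally known. Concretely, the second derivative of the marginal trajectory contains the term $(\dot a_s-\frac{a_s\dot b_s}{b_s})\,\frac{d}{ds}m_s(x_s)$. The averaged map is affine in $m_t(x)$ with $x$ frozen, so it has no such term.

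We then show $\frac{d}{ds}m_s(x_s)\neq0$ for some $x$. Since $m_s$ is a tanh of a quadratic in $x_s$ with $s$-dependent coefficients, it suffices to check this at a generic point, for instance near the midpoint $x=a_t\frac{y_1+y_2}{2}$ with a perpendicular offset, or numerically at one point.

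The main obstacle is verifying that this derivative is genuinely nonzero for the chosen schedule, for example $(a_t,b_t)=(t,1-t)$. We will attempt it by explicit computation for $d=1$, $y_{1,2}=\pm1$. In that case $m_s(x)=\tanh(a_s x/b_s^2)$, which is nonconstant along the odd trajectory through a point $x\neq0$. The existence of one counterexample distribution suffices for the non-existence claim.
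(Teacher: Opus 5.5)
The paper does not prove this statement; it cites it as Theorem 4.1 of \cite{li2026trajectory}, so there is no in-paper argument to compare against. Judged on its own, your proof is correct. Your Dirac computation is the right sanity check: for $p_1=\delta_y$ condition (2) holds. The statement can therefore only mean failure for non-degenerate data, and one counterexample establishes ``not in general''.

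Two places need tightening.

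First, your second-order claim requires all the other terms to cancel, which you assert but do not check. It is cleaner to work at first order. Set $\Phi_{t\to r}(x)=\frac{b_r}{b_t}x+(a_r-\frac{a_tb_r}{b_t})m_t(x)$, $c_r=\dot a_r-\frac{a_r\dot b_r}{b_r}$, and note $u_r(y)=\frac{\dot b_r}{b_r}y+c_r m_r(y)$. Then one gets exactly
\[
\partial_r\Phi_{t\to r}(x)-u_r\bigl(\Phi_{t\to r}(x)\bigr)=c_r\bigl(m_t(x)-m_r(\Phi_{t\to r}(x))\bigr).
\]
So, given $c_r\neq0$, (2) holds iff the posterior mean is constant along marginal trajectories. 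Differentiating once more at $r=t$ recovers your mismatch $c_t(\partial_t m_t+u_t\cdot\nabla m_t)$.

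Second, your ``main obstacle'' needs no numerics. Take $y_{1,2}=\pm1$ and $(a_t,b_t)=(t,1-t)$. Then $c_r=1/(1-r)\neq0$ and $m_0\equiv0$. For $s\in(0,1)$, $m_s(x_s)=\tanh(sx_s/(1-s)^2)\neq0$, because $u_s(0)=0$ and ODE uniqueness keep any trajectory with $x_0\neq0$ away from $0$.

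Two small slips:
\begin{itemize}
\item The tanh argument is affine in $x$, not quadratic; the quadratic terms cancel in the log-posterior ratio.
\item A ``perpendicular offset'' from the midpoint is exactly where the check fails. The bisecting hyperplane is flow-invariant and $m_s\equiv\frac{y_1+y_2}{2}$ on it, so the material derivative vanishes there. The offset needs a component along $y_1-y_2$, as your 1D choice $x\neq0$ has.
\end{itemize}

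A much shorter route gives the stronger statement for every non-Dirac $p_1$. At $t=0$, $X_0$ is independent of $X_1$, so $p_0(x_1|x)=p_1(x_1)$. Condition (1) gives $\psi_{0\to r}(x|x_1)=a_rx_1+b_rx$, and (2) then forces $\psi_{0\to r}(x)=a_r\mathbb{E}X_1+b_rx$. This map pushes $p_0$ to $\mathcal{N}(a_r\mathbb{E}X_1,b_r^2I)$, whereas $p_r$ has covariance $a_r^2\mathrm{Cov}(X_1)+b_r^2I$. For $r\in(0,1)$ these agree only if $p_1$ is a point mass.

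Your local computation gives a pointwise characterization: (2) fails wherever the posterior mean drifts along the flow. The $t=0$ argument gives generality over all non-degenerate $p_1$ and a two-line proof.
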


Because such a conditional flow map does not exist, flow-map learning is typically formulated through surrogate objectives based on trajectory consistency. In particular, the semigroup property of the exact flow map implies
\begin{equation}
    \psi_{t\to r}(x_t)=\psi_{s\to r}(\psi_{t\to s}(x_t)),
    \qquad t,s,r\in[0,1],
\end{equation}
which motivates the trajectory-consistency loss $\mathcal{L}^{\mathrm{TC}}(\theta)=\mathbb{E}_{t,s,r,\,x_t\sim p_t} [\|\psi^\theta_{t\to r}(x_t)-\psi^\theta_{s\to r}(\psi^\theta_{t\to s}(x_t))\|_2^2]$.  However, this loss alone does not provide dataset-level supervision: all terms are generated by the model itself. Therefore, $\mathcal{L}^{\mathrm{TC}}$ can enforce internal consistency of the learned maps, but it cannot by itself guarantee that $\psi^\theta_{0\to 1}$ pushes $p_0$ to the desired target distribution $p_1$.

Existing flow-map-based generative models can therefore be broadly organized into two categories according to how they obtain supervision for long-range maps: \textbf{progressive distillation} methods and \textbf{derivative-based} methods \cite{li2026trajectory}. The former progressively transfers supervision from short-range transitions to longer-range maps, while the latter derives differential identities that connect long-range maps to instantaneous velocity or score fields. We first summarize these two families and then discuss two representative derivative-based models used in our experiments: \textbf{sCM} and \textbf{MeanFlow}.

\paragraph{Progressive distillation methods.}
Progressive distillation methods learn short-range transitions first and then extend them to longer intervals through consistency relations or teacher-student distillation. Representative examples include ShortCut~\cite{frans2025shortcut}, the PSD variant of Flow Map Matching~\cite{boffi2025flow}, and SplitMeanFlow~\cite{guo2025splitmeanflow}. These methods usually start from a short-step map that is close to an instantaneous velocity model and can therefore be supervised using data-dependent conditional velocities.  For example, ShortCut first learns a short-step velocity through
\begin{equation}
\mathcal{L}_{\mathrm{short}}(\theta) =
\mathbb{E}_{x_1\sim p_1,,x\sim p_{t|1}(\cdot|x_1)}
[|u^\theta_{t\to t+d}(x)-u_t(x|x_1)|_2^2], \qquad u_t(x|x_1)=\frac{x_1-x}{1-t}.
\end{equation}
It then recursively extends the model to longer intervals by enforcing a consistency relation such as
\begin{equation}
\mathcal{L}_{\mathrm{prog}}(\theta) = \mathbb{E}_{x_t\sim p_t}[\| u^\theta_{t\to t+2d}(x_t)- \frac{1}{2}(u^\theta_{t\to t+d}(x_t) + u^\theta_{t+d\to t+2d}(x_{t+d}))|_2^2],
\end{equation}
where $x_{t+d} =x_t+du^\theta_{t\to t+d}(x_t)$.  In this way, long-range maps are learned recursively from shorter model predictions rather than being directly supervised by data. This makes the training objective relatively simple and avoids high-order derivatives, but errors in short-step maps may be propagated or amplified when constructing longer jumps.

\paragraph{Derivative-based methods.}
Derivative-based methods supervise long-range flow maps by relating them to instantaneous velocity or score fields through differential identities implied by trajectory consistency. For an exact flow map, one has Lagrangia transport equation
\begin{equation}\label{eq:lag}
\frac{d}{dr}\psi_{t\to r}(x)
=
u_r(\psi_{t\to r}(x)),
\end{equation}
and, equivalently, the Eulerian transport equation
\begin{equation}\label{eq:euler}
\partial_t \psi_{t\to r}(x)
+
u_t(x)\cdot \nabla_x \psi_{t\to r}(x)
=
0 .
\end{equation}
These identities make it possible to train flow-map models using data-dependent conditional velocities or teacher velocity fields. Compared with progressive distillation, derivative-based methods provide a more direct learning signal for non-infinitesimal transitions. However, they often require derivatives of the learned map with respect to time or input, such as Jacobian-vector products (JVPs), which can increase memory usage, computational cost, and optimization instability.  Two representative derivative-based flow-map methods are sCM and MeanFlow.

\paragraph{sCM.}
Consistency Models can be viewed as endpoint flow-map models. Instead of learning a general two-time transition $\psi_{t\to r}$, they learn the endpoint map $\psi^\theta_{t\to 1}(x_t)$,  which maps a point at time $t$ directly to the data endpoint. The defining consistency condition states that points on the same probability-flow trajectory should be mapped to the same endpoint. That is, for $x_s=\psi_{t\to s}(x_t)$, $\psi^\theta_{t\to 1}(x_t) = \psi^\theta_{s\to 1}(x_s)$.  Continuous-time consistency models (sCM), train this endpoint map by applying the Eulerian transport identity (\autoref{eq:euler}) above to the special case $r=1$. In practice, this gives a derivative-based objective of the form
\begin{equation}
\mathcal{L}_{\mathrm{sCM}}(\theta) =  \mathbb{E}_{t,x_t} [ \lambda(t) \| \partial_t \psi^\theta_{t\to 1}(x_t) + u_t(x_t)\nabla \psi^\theta_{t\to 1}(x_t)\|_2^2],
\end{equation}
where $u_t$ denotes the velocity target: $u_t=u_t^\phi$ in the distillation setting, and $u_t=u_t(x_t|x_1)$ when training from scratch. The term $u_t(x_t)\cdot\nabla_x \psi^\theta_{t\to 1}(x_t)$ is computed through a Jacobian-vector product.  Thus, sCM belongs to the derivative-based family: it learns an endpoint flow map using the differential transport constraint inherited from trajectory consistency. 

\paragraph{MeanFlow.}
MeanFlow~\cite{geng2025mean} is another representative derivative-based flow-map method. Unlike sCM, which learns an endpoint map $\psi_{t\to 1}$, MeanFlow directly parameterizes two-time flow maps through an average velocity over a finite interval: $u_{t\to r}(x_t) = \frac{\psi_{t\to r}(x_t)-x_t}{r-t}$.  Thus, a single evaluation of $u^\theta_{0\to 1}$ gives a one-step generator $x_1 = x_0 + u^\theta_{0\to 1}(x_0)$, $x_0\sim p_0$. Substituting
$\psi_{t\to r}(x)=x+(r-t)u_{t\to r}(x)$ into \autoref{eq:lag} yields
\begin{equation}
u_{t\to r}(x) = u_t(x) + (r-t) ( \partial_t u_{t\to r}(x) + u_t(x)\cdot \nabla_x u_{t\to r}(x) ).
\end{equation}
Under the rectified-flow path, when training from scratch, the instantaneous velocity target $u_t$ is given by the data-dependent conditional velocity $u_t(x_t|x_1)=\frac{x_1-x_t}{1-t}$, whereas in the distillation setting, this target is replaced by the teacher velocity field $u_t^\phi(x_t)$. Therefore, MeanFlow trains $u^\theta_{t\to r}$ by substituting the corresponding instantaneous velocity target $u_t$ into the above identity, yielding the regression objective
\begin{equation}
\mathcal{L}_{\mathrm{MF}}(\theta) = \mathbb{E}_{t,r,x_t}[\| u^\theta_{t\to r}(x_t)-\mathrm{sg}[u_t(x_t)+(r-t)(\partial_t u^\theta_{t\to r}(x_t)+u_t(x_t)\cdot \nabla_x u^\theta_{t\to r}(x_t))]\|_2^2].
\end{equation}
where $sg$ denotes the stop-gradient operator. 

\subsection{One-Time and Two-Time Flow-Map Parameterizations}
\label{sec:appendix_one_time_two_time}

The above discussion also highlights an important distinction between one-time and two-time flow-map parameterizations. Although both are flow-map-based few-step generators, they differ in what map is learned during training and how intermediate sampling steps are constructed at inference time.

\begin{wraptable}{r}{0.54\linewidth}
\vspace{-1.5em}
\centering
\small
\begin{minipage}{1.0\linewidth}
\hrule
\vspace{0.3em}
\refstepcounter{algorithm}
\textbf{Algorithm \thealgorithm} Sampling with Flow Maps
\label{alg:flowmap_sampling}
\vspace{0.3em}
\hrule
\vspace{0.4em}
\begin{algorithmic}[1]
\Require condition $c$, sampling steps $N$, time grid $\{r_j\}_{j=0}^{N}$ with $r_0=0$ and $r_N=1$, initial latent distribution $p_0$, flow-map model $\psi^\theta$, path coefficients $(a_t,b_t)$
\State Sample initial latent $x_{r_0}\sim p_0$
\For{$j=1,\ldots,N$}
\If{$\mathrm{type}=\mathrm{two\text{-}time}$}
\State Compute  $x_{r_j}\gets \psi^\theta_{r_{j-1}\to r_j}(x_{r_{j-1}}|c)$
\Else
\State Predict the endpoint $\hat{x}_1\gets \psi^\theta_{r_{j-1}\to 1}(x_{r_{j-1}}|c)$
\If{$r_j<1$}
\State Sample $\xi_j$ according to the sampler
\State Calculate $x_{r_j}\gets a_{r_j}\hat{x}_1+b_{r_j}\xi_j$
\Else
\State Set final output $x_{r_j}\gets \hat{x}_1$
\EndIf
\EndIf
\EndFor
\State \Return final sample $x_{r_N}$
\end{algorithmic}
\vspace{0.4em}
\hrule
\end{minipage}
\vspace{-1.0em}
\end{wraptable}

\paragraph{One-time endpoint maps.}
One-time flow-map models, such as sCM, learn only the endpoint map $\psi^\theta_{t\to 1}(x_t)$,  which maps a state at time $t$ directly to the data endpoint. During training, the model is therefore supervised through endpoint consistency or through the derivative-based transport constraint obtained by setting $r=1$ in \autoref{eq:euler}. In other words, the model is not trained to represent arbitrary transitions $\psi_{t\to r}$ for $r<1$.  At sampling time, this means that an intermediate transition from $t$ to $r$ cannot be obtained by directly querying a learned map $\psi^\theta_{t\to r}$. Instead, the model first predicts the endpoint $\hat{x}_1 = \psi^\theta_{t\to 1}(x_t)$,  and then constructs an intermediate state using the prescribed probability path. For an affine path, this has the form $x_r = a_r \hat{x}_1 + b_r \xi$, $\xi\sim\mathcal{N}(0,I)$, or its deterministic counterpart under a fixed sampling rule. Thus, one-time models perform few-step generation by repeatedly predicting the endpoint and projecting back to intermediate noise levels according to the path schedule. This makes endpoint-map models simple and efficient, but less flexible for arbitrary finite-time transitions.

\paragraph{Two-time flow maps.}
Two-time flow-map models, such as MeanFlow, directly parameterize transitions between arbitrary time pairs, $\psi^\theta_{t\to r}(x_t)$, $0\leq t,r\leq 1$. For example, MeanFlow represents this transition through the average velocity $\psi^\theta_{t\to r}(x_t)= x_t+ (r-t)u^\theta_{t\to r}(x_t)$.  During training, the model is supervised over sampled time pairs $(t,r)$ using the derivative identity derived from the flow-map transport equation. At sampling time, a finite-time transition is obtained directly by evaluating the learned two-time map: $x_r=\psi^\theta_{t\to r}(x_t)$.  Therefore, two-time models naturally support arbitrary sampling grids and can take long-range jumps between any selected time points.

\paragraph{Implications for Flow-Map GRPO.}
This distinction determines how ASFMC is instantiated during RL post-training. For two-time flow maps, the model already provides access to $\psi^\theta_{t\to r}$ for arbitrary time pairs, so both local anchors and endpoint anchors can be used to stochasticize a finite-time transition. In contrast, for one-time endpoint maps, the model only provides $\psi^\theta_{t\to 1}$ and does not directly provide short local transitions such as $\psi^\theta_{r\to r+\Delta r}$. Therefore, the local-anchor construction is not directly available for one-time models, and we use the endpoint anchor to define the stochastic policy transition. 

\section{Missing Proofs}
\subsection{Proof of \autoref{thm:non_exist}}\label{sec:proof_non_exist}

\paragraph{Theorem \ref{thm:non_exist}}(Long-range SDE transitions require stochastic integration) Let $X_s$ solve the path-preserving SDE \autoref{eq:flow_sde} and let $\psi_{t\to r}$ be the deterministic flow map induced by the ODE \autoref{eq:ode}. For a non-infinitesimal interval $t<r$, the exact SDE transition is generally not representable as a simple additive Gaussian perturbation of a deterministic function of the flow maps:
\begin{equation}
X_r=G_{t,r}(X_t;\psi, d\psi,...)+ A_{t,r}\epsilon,\qquad
\epsilon\sim\mathcal{N}(0,I),
\end{equation}
where $A_{t,r}$ is independent of the input sample and $G_{t,r}(X_t;\psi,d\psi,\ldots)$ denotes a deterministic functional of the flow-map family $\psi$ and its derivatives.  Instead, the exact transition must be expressed by the stochastic integral $X_r=X_t+\int_t^r b_s(X_s)ds+\int_t^r \sigma_s dW_s$, where $b_s(x) = u_s(x) + \frac{\sigma_s^2}{2}\nabla_x\log p_s(x)$, except for special linear-Gaussian dynamics.  

\begin{proof}
It suffices to show that the exact SDE transition is generally non-Gaussian on an arbitrarily small but finite interval. Indeed, if a representation of the form $X_r= G_{t,r}(X_t;\psi,d\psi,\ldots) + A_{t,r}\epsilon$, $\epsilon\sim\mathcal{N}(0,I)$,  were valid for general finite-time transitions, then, conditioned on $X_t=x$, the law of $X_r$ would be Gaussian. In particular, its third centered moment would have to vanish. We show that this is not true for a generic nonlinear SDE, even for the special case $r=t+h$ with $h>0$.

We prove the obstruction in one dimension; the multidimensional case follows either by considering one-dimensional systems or by applying the same argument to scalar projections. Consider the scalar SDE
\begin{equation}
    dX_s=b_s(X_s)\,ds+\sigma_s\,dW_s,
\end{equation}
where $b_s$ is smooth and $\sigma_s>0$ is state-independent. Fix $X_t=x$ and set $r=t+h$. The exact solution satisfies
\begin{equation}
    X_{t+h}
    =
    x
    +
    \int_t^{t+h} b_s(X_s)\,ds
    +
    \int_t^{t+h}\sigma_s\,dW_s .
\end{equation}
Thus, the transition depends on the full stochastic trajectory $\{X_s\}_{s\in[t,t+h]}$.

Let
\begin{equation}
    \mu_h
    =
    \mathbb{E}[X_{t+h}\mid X_t=x],
    \qquad
    \kappa_3(h)
    =
    \mathbb{E}
    \left[
    (X_{t+h}-\mu_h)^3
    \mid X_t=x
    \right]
\end{equation}
be the conditional mean and third centered moment. A short-time expansion of the diffusion transition gives
\begin{equation}
    \kappa_3(h)
    =
    \sigma_t^4\,\partial_{xx}b_t(x)\,h^3
    +
    o(h^3).
    \label{eq:kappa3_expansion}
\end{equation}
For completeness, we outline the calculation. Freezing the coefficients at time $t$ and using the local infinitesimal generator
\begin{equation}
    \mathcal{L}_t f(x)
    =
    b_t(x)\partial_x f(x)
    +
    \frac{\sigma_t^2}{2}\partial_{xx}f(x),
\end{equation}
the diffusion semigroup satisfies
\begin{equation}
    \mathbb{E}[f(X_{t+h})\mid X_t=x]
    =
    f(x)
    +
    h\mathcal{L}_t f(x)
    +
    \frac{h^2}{2}\mathcal{L}_t^2 f(x)
    +
    \frac{h^3}{6}\mathcal{L}_t^3 f(x)
    +
    o(h^3).
\end{equation}
Applying this expansion to $f(x)=x$, $f(x)=x^2$, and $f(x)=x^3$, and substituting the resulting raw moments into
\begin{equation}
\begin{aligned}
    \kappa_3(h)
    &=
    \mathbb{E}[X_{t+h}^3\mid X_t=x]
    -
    3
    \mathbb{E}[X_{t+h}\mid X_t=x]\,
    \mathbb{E}[X_{t+h}^2\mid X_t=x]  \\
    &\qquad
    +
    2
    \mathbb{E}[X_{t+h}\mid X_t=x]^3,
\end{aligned}
\end{equation}
all terms up to order $h^2$ cancel, and the leading term is exactly
\begin{equation}
    \kappa_3(h)
    =
    \sigma_t^4\,\partial_{xx}b_t(x)\,h^3
    +
    o(h^3).
\end{equation}
The same leading term is obtained for smooth time-inhomogeneous coefficients by the corresponding It\^o--Taylor expansion; the displayed generator calculation captures the local spatial-curvature obstruction.  Therefore, whenever $\partial_{xx}b_t(x)\neq 0$, the third centered moment is nonzero for all sufficiently small but finite $h>0$. Hence the conditional transition law of $X_{t+h}$ given $X_t=x$ is not Gaussian. This contradicts any representation of the form
\begin{equation}
    X_{t+h}
    =
    G_{t,t+h}(x;\psi,d\psi,\ldots)
    +
    A_{t,t+h}\epsilon,
    \qquad
    \epsilon\sim\mathcal{N}(0,I),
\end{equation}
whose conditional distribution is Gaussian and therefore has zero third centered moment.

Since the interval $t\to t+h$ is a finite, nonzero transition and is a special case of $t\to r$, this already rules out such a simple additive-Gaussian representation for general finite-time SDE-induced flow-map transitions. Thus, for nonlinear drift fields, the exact SDE transition must be described through stochastic integration over the full random path. Only in the special linear-Gaussian case, where the drift is affine in the state and the diffusion is state-independent, does the finite-time SDE transition remain Gaussian.
\end{proof}

\subsection{Proof of \autoref{thm:local_anchor_error}}\label{sec:proof_local_anchor_error}
\paragraph{Theorem \ref{thm:local_anchor_error}}(error of the local-anchor approximation) Let $\tilde{X}_r^\star$ denote the exact sample obtained by running the reverse-time SDE transition $p_{r| \tau}(\cdot| X_\tau)$ from $\tau=r+\Delta r$ to $r$, where
$X_r=\psi_{t\to r}(X_t)$ and $X_\tau=\psi_{r\to r+\Delta r}(X_r)$. Let $\tilde{X}_r^{\mathrm{loc}}$ denote the local-anchor approximation $\tilde{X}_r^{\mathrm{loc}} = X_r - \Delta r\lambda^2[
\frac{\dot a_r}{a_r}X_r-u_r(X_r)] + \sigma_r\sqrt{\Delta r}\xi$, $\xi\sim\mathcal{N}(0,I)$, where $\sigma_r=\lambda\sqrt{\frac{2b_r(\dot a_r b_r-a_r\dot b_r)}{a_r}}$. Assume that $u_s(x)$, $a_s$, $b_s$, and $\sigma_s$ are smooth with bounded derivatives in a neighborhood of the trajectory. Then the local-anchor approximation satisfies the strong error bound
\begin{equation}
\|\tilde{X}_r^\star - \tilde{X}_r^{\mathrm{loc}}\|_{L^2}= O((\Delta r)^{3/2}).
\end{equation}

\begin{proof}
Define the reverse-time drift
\begin{equation}
B_s(x)
=
u_s(x)
-
\frac{\sigma_s^2}{2}\nabla_x\log p_s(x).
\end{equation}
The exact reverse-time SDE transition from $\tau=r+\Delta r$ to $r$ can be written as
\begin{equation}
\tilde{X}_r^\star
=
X_\tau
-
\int_r^\tau B_s(X_s)ds
+
\int_r^\tau \sigma_sdW_s .
\end{equation}
By the Euler--Maruyama local expansion, using the same Gaussian increment $\xi\sim\mathcal{N}(0,I)$, we have
\begin{equation}
\tilde{X}_r^\star
=
X_\tau
-
\Delta rB_\tau(X_\tau)
+
\sigma_\tau\sqrt{\Delta r}\xi
+
R_{\mathrm{sde}},
\qquad
\|R_{\mathrm{sde}}\|_{L^2}=O((\Delta r)^{3/2}).
\end{equation}
For the affine probability path, the score satisfies
\begin{equation}
\nabla_x\log p_s(x)
=
\frac{a_su_s(x)-\dot a_s x}{b_s(\dot a_s b_s-a_s\dot b_s)}.
\end{equation}
Using $\sigma_s^2=\lambda^2\frac{2b_s(\dot a_s b_s-a_s\dot b_s)}{a_s}$, the reverse-time drift becomes
\begin{equation}
B_s(x)
=
u_s(x)
-
\lambda^2
[
u_s(x)
-
\frac{\dot a_s}{a_s}x
]
=
(1-\lambda^2)u_s(x)
+
\lambda^2\frac{\dot a_s}{a_s}x .
\end{equation}
Since $\tau=r+\Delta r$ and the coefficients are smooth,
\begin{equation}
B_\tau(X_\tau)
=
B_r(X_r)
+
O(\Delta r),
\qquad
\sigma_\tau
=
\sigma_r
+
O(\Delta r).
\end{equation}
Therefore,
\begin{equation}
\tilde{X}_r^\star
=
X_\tau
-
\Delta rB_r(X_r)
+
\sigma_r\sqrt{\Delta r}\xi
+
O((\Delta r)^{3/2})
\end{equation}
in $L^2$. For the short deterministic segment $r\to\tau$, discretizing the ODE gives
\begin{equation}
X_\tau
=
\psi_{r\to r+\Delta r}(X_r)
=
X_r+\Delta ru_r(X_r)+O((\Delta r)^2).
\end{equation}
Substituting this expression and the formula for $B_r$ gives
\begin{equation}
\begin{aligned}
\tilde{X}_r^\star
&=
X_r
+
\Delta ru_r(X_r)
-
\Delta r
[
(1-\lambda^2)u_r(X_r)
+
\lambda^2\frac{\dot a_r}{a_r}X_r
]
+
\sigma_r\sqrt{\Delta r}\xi
+
O((\Delta r)^{3/2}) \\
&=
X_r
-
\Delta r\lambda^2
[
\frac{\dot a_r}{a_r}X_r-u_r(X_r)
]
+
\sigma_r\sqrt{\Delta r}\xi
+
O((\Delta r)^{3/2}).
\end{aligned}
\end{equation}
The leading terms are exactly $\tilde{X}_r^{\mathrm{loc}}$. Hence
\begin{equation}
\|
\tilde{X}_r^\star
-
\tilde{X}_r^{\mathrm{loc}}
\|_{L^2}
=
O((\Delta r)^{3/2}),
\end{equation}
which proves the claim.
\end{proof}

\section{Experiment Details}
\subsection{Text-to-Image Post-Training Details}
\label{appendix:t2i_details}

\paragraph{Base checkpoints.}
We use the FLUX.1-lite few-step generators released by T2I-Distill~\cite{pu2025few}. The released checkpoints include both MeanFlow and sCM variants. MeanFlow is treated as a two-time flow-map model, which directly parameterizes transitions $\psi^\theta_{t\to r}$, while sCM is treated as a one-time endpoint flow-map model, which predicts $\psi^\theta_{t\to 1}$. For each method, the ``Base'' model denotes the corresponding released T2I-Distill checkpoint without any RL post-training. The post-trained model denotes the same frozen base generator plus a reward-specific LoRA adapter trained with Flow-Map GRPO.

\paragraph{Reward-specific post-training.}
We train separate LoRA adapters for three reward families:
\begin{equation}
    R \in \{\mathrm{PickScore},\ \mathrm{OCR},\ \mathrm{GenEval}\}.
\end{equation}
For each reward, the model is post-trained on the corresponding prompt set and the reward is evaluated on the final generated image \cite{liu2026flow}. All reward-specific runs share the same training protocol unless otherwise stated. In particular, the pretrained generator is frozen and only the LoRA adapter is updated. This setup allows us to evaluate whether Flow-Map GRPO can improve few-step flow-map generators without full-parameter finetuning.

\paragraph{Flow-Map GRPO rollout.}
During RL post-training, we use $K=4$ stochastic flow-map transitions for all models and all rewards. In implementation, the sampler uses a five-node rollout schedule, where four transitions are valid stochastic policy steps and the final endpoint transition is deterministic. The stochastic transitions are used to compute policy likelihood ratios, while the final deterministic transition is used only to obtain the decoded image for reward evaluation. For a prompt $c$, we sample a group of $G$ trajectories, compute the final-image rewards, normalize the rewards into group-level advantages, and optimize the LoRA policy with the clipped GRPO objective in \autoref{eq:FM_GRPO}.

\paragraph{Common training hyperparameters.}
Unless otherwise specified, the following hyperparameters are shared across PickScore, OCR, and GenEval post-training runs, and across the MeanFlow and sCM backbones.

\begin{table}[h]
\centering
\small
\caption{Common Flow-Map GRPO post-training hyperparameters for text-to-image experiments.}
\label{tab:appendix_t2i_training_config}
\begin{tabular}{lc}
\toprule
\textbf{Hyperparameter} & \textbf{Value} \\
\midrule
Base model & T2I-Distill FLUX.1-lite MeanFlow / sCM checkpoints \\
Trainable parameters & LoRA only \\
Frozen parameters & Pretrained flow-map generator \\
Resolution & $512 \times 512$ \\
Guidance scale & $3.5$ \\
Max text sequence length & $256$ \\
Stochastic policy steps & $K=4$ \\
Images per prompt / group size & $G=24$ \\
Training batch size & $3$ per GPU \\
Number of GPUs & $8$ A100 GPUs \\
Mixed precision & bf16 \\
Gradient accumulation steps & $24$ \\
Optimizer & AdamW \\
Learning rate & $3\times 10^{-4}$ \\
Adam betas & $(0.9, 0.999)$ \\
Weight decay & $10^{-4}$ \\
Max gradient norm & $1.0$ \\
GRPO clip range & $10^{-4}$ \\
Advantage clipping & $5$ \\
KL coefficient $\beta$ & $0.0$ \\
EMA & enabled \\
EMA decay & $0.9$ \\
EMA update interval & $8$ optimizer steps \\
Seed & $42$ \\
\bottomrule
\end{tabular}
\end{table}

\paragraph{LoRA configuration.}
All post-training runs use the same LoRA architecture. The LoRA rank is set to $64$, the LoRA scaling factor is set to $128$, and the LoRA dropout is set to $0.0$. We apply LoRA to the attention projections and feed-forward projections of the FLUX transformer: $\texttt{attn.to\_q}$, $\texttt{attn.to\_k}$, $\texttt{attn.to\_v}$, $\texttt{attn.to\_out.0}$, $\texttt{attn.add\_q\_proj}$, $\texttt{attn.add\_k\_proj}$, $\texttt{attn.add\_v\_proj},$ $\texttt{attn.to\_add\_out}$, $\texttt{ff.net.0.proj}$,  $\texttt{ff.net.2}$, 
$\texttt{ff\_context.net.0.proj}$, $\texttt{ff\_context.net.2}$ following \cite{liu2026flow}

\paragraph{Advantage normalization and GRPO objective.}
For each prompt group, rewards are gathered across devices and normalized before the policy update. We use per-prompt reward statistics together with a global batch standard deviation. Concretely, for a trajectory $i$ associated with prompt $c$, the advantage is computed as
\begin{equation}
    \hat{A}^i
    =
    \frac{
    R(x^i_1|c)-\mu_c
    }{
    \sigma_{\mathrm{global}}+\epsilon_{\mathrm{std}}
    },
\end{equation}
where $\mu_c$ is the running per-prompt reward mean, $\sigma_{\mathrm{global}}$ is the reward standard deviation computed over the gathered rollout batch, and $\epsilon_{\mathrm{std}}=10^{-4}$ is a numerical stabilizer. The policy update recomputes the log probability of each stochastic ASFMC transition and applies the clipped GRPO objective. 

\paragraph{Stochasticization during training.}
For MeanFlow, which directly parameterizes two-time flow maps, we instantiate ASFMC on each stochastic segment. The rollout uses a post-flow stochastic kernel with stochasticity enabled during training. The flow-map delta is set to $0.03$, the terminal base sigma is set to $0.05$, and the post-SDE and endpoint noise levels are set to $0.7$ during training. The final endpoint segment is deterministic. For sCM, which predicts endpoint maps, we use the endpoint-anchor version of ASFMC described in \autoref{sec:endpoint_anchor}. In both cases, only the stochastic ASFMC transitions contribute to the GRPO likelihood-ratio objective.

\paragraph{Evaluation protocol.}
All reported evaluations are deterministic. We disable the training-time stochasticity by setting the evaluation post-SDE and endpoint noise levels to zero. The base checkpoint and the corresponding LoRA checkpoint are evaluated with identical resolution, guidance scale, random seed, and inference schedule within each model family. We report task-level, perceptual, and preference-based metrics, including GenEval, OCR accuracy, PickScore, aesthetic score, DQA, ImageReward, and UniReward.

\section{Additional Results}\label{sec:additional}
\subsection{MeanFlow - OCR}
\clearpage
\begin{figure}[H]
    \centering
    \includegraphics[width=\linewidth]{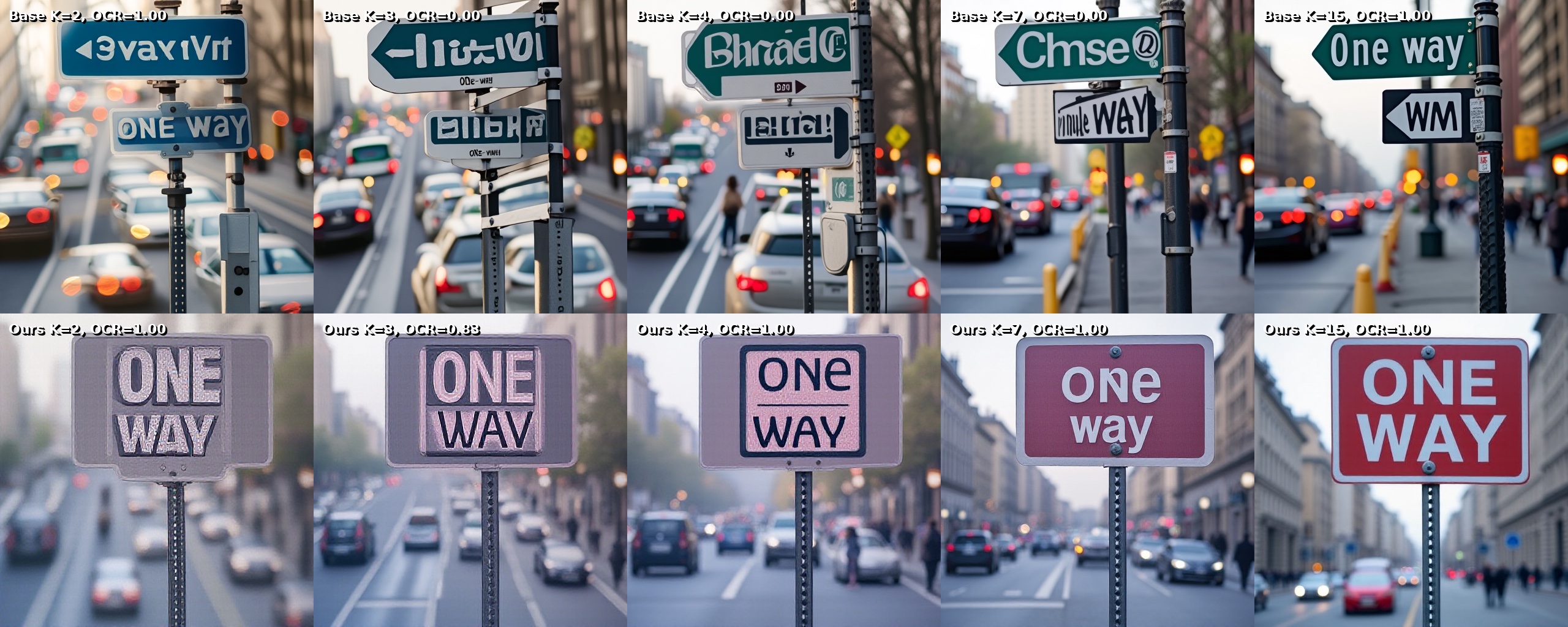}
    \caption{MeanFlow-OCR comparison: A realistic photograph of a street sign with ``one way'' prominently displayed, set against a backdrop of a busy urban street with cars and pedestrians.}
    \label{fig:meanflow-ocr-prompt-046}
\end{figure}

\begin{figure}[H]
    \centering
    \includegraphics[width=\linewidth]{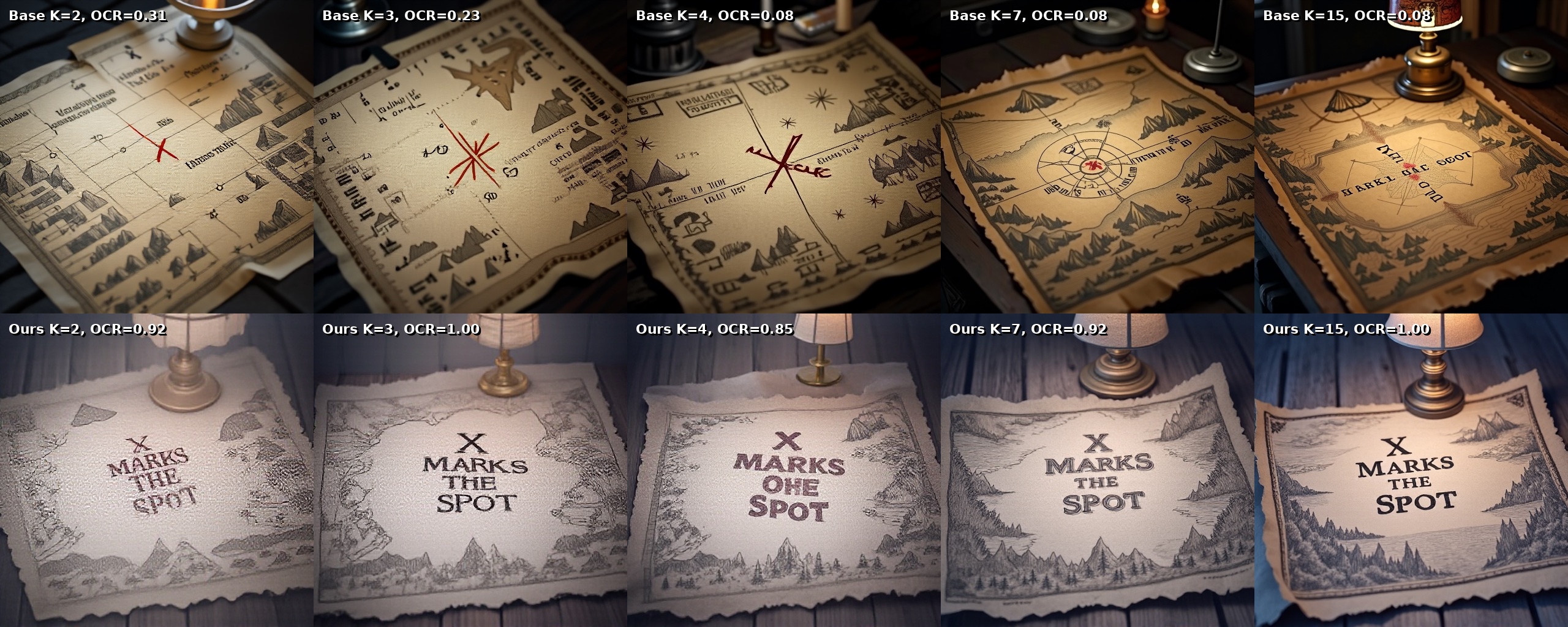}
    \caption{MeanFlow-OCR comparison: A weathered treasure map laid out on an old wooden table, with ``X Marks the Spot'' clearly visible in the center, surrounded by intricate illustrations of mountains, forests, and a distant coastline, all under the warm glow of a vintage lamp.}
    \label{fig:meanflow-ocr-prompt-053}
\end{figure}

\begin{figure}[H]
    \centering
    \includegraphics[width=\linewidth]{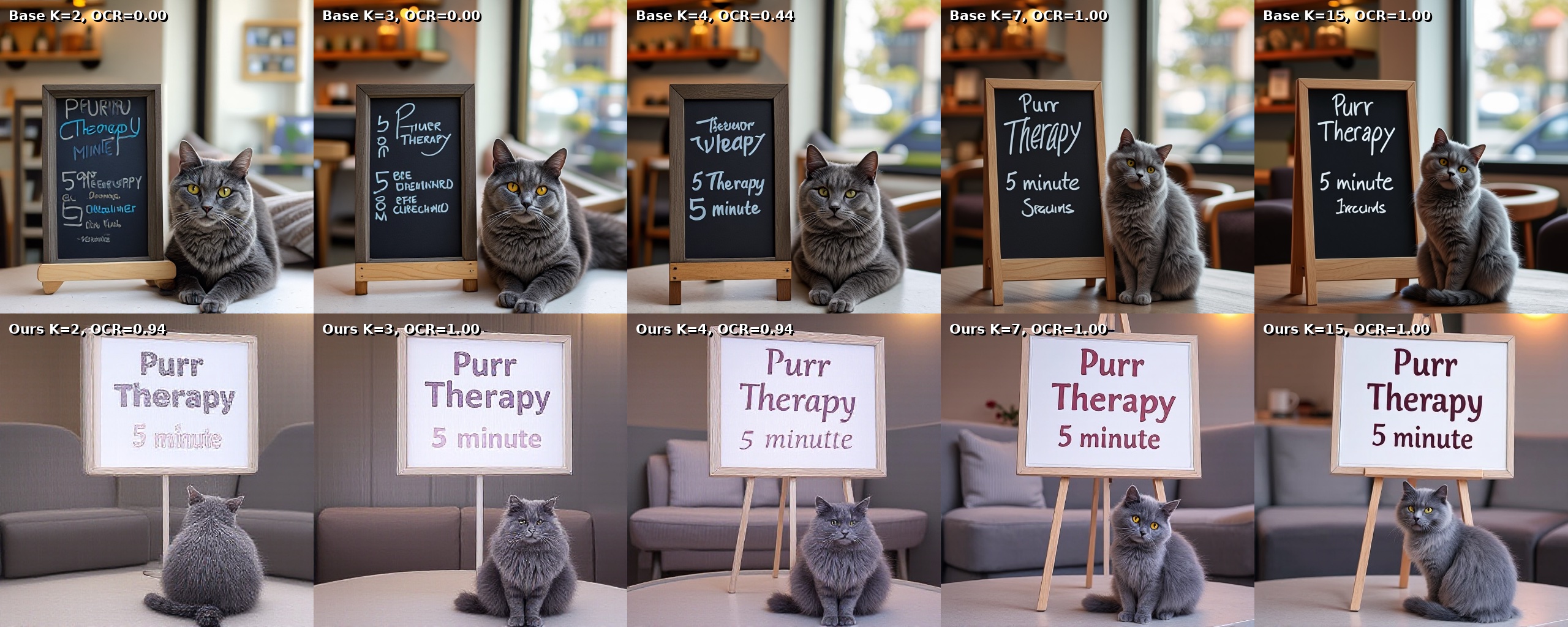}
    \caption{MeanFlow-OCR comparison: In a cozy cat cafe, a menu board displays ``Purr Therapy 5 minute'' among other offerings. A fluffy gray cat sits nearby, looking relaxed and ready to offer its soothing presence to patrons. The scene is warm and inviting, with soft lighting and comfortable seating.}
    \label{fig:meanflow-ocr-prompt-059}
\end{figure}

\begin{figure}[H]
    \centering
    \includegraphics[width=\linewidth]{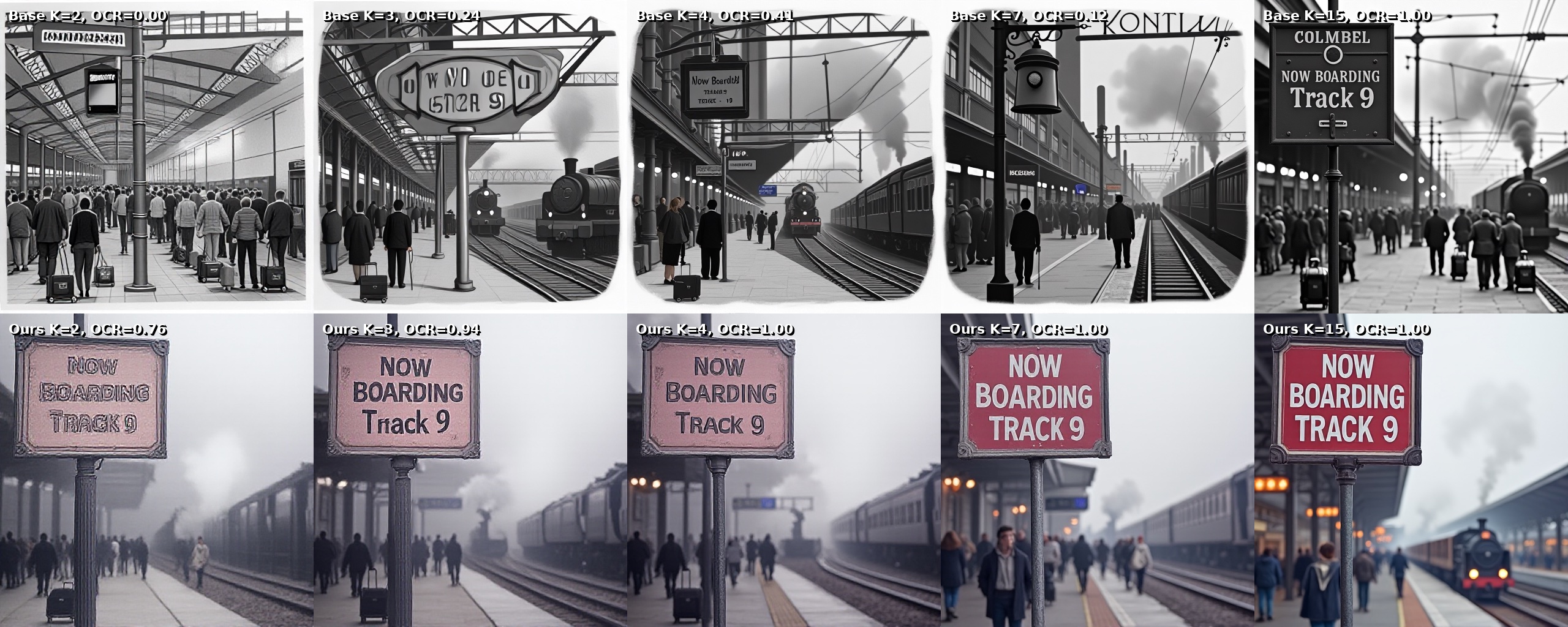}
    \caption{MeanFlow-OCR comparison: A bustling train station with a vintage aesthetic, the platform speaker hanging from a metal pole, clearly announcing ``Now Boarding Track 9'' amidst the crowd of travelers, luggage carts, and the distant steam of an approaching locomotive.}
    \label{fig:meanflow-ocr-prompt-086}
\end{figure}

\begin{figure}[H]
    \centering
    \includegraphics[width=\linewidth]{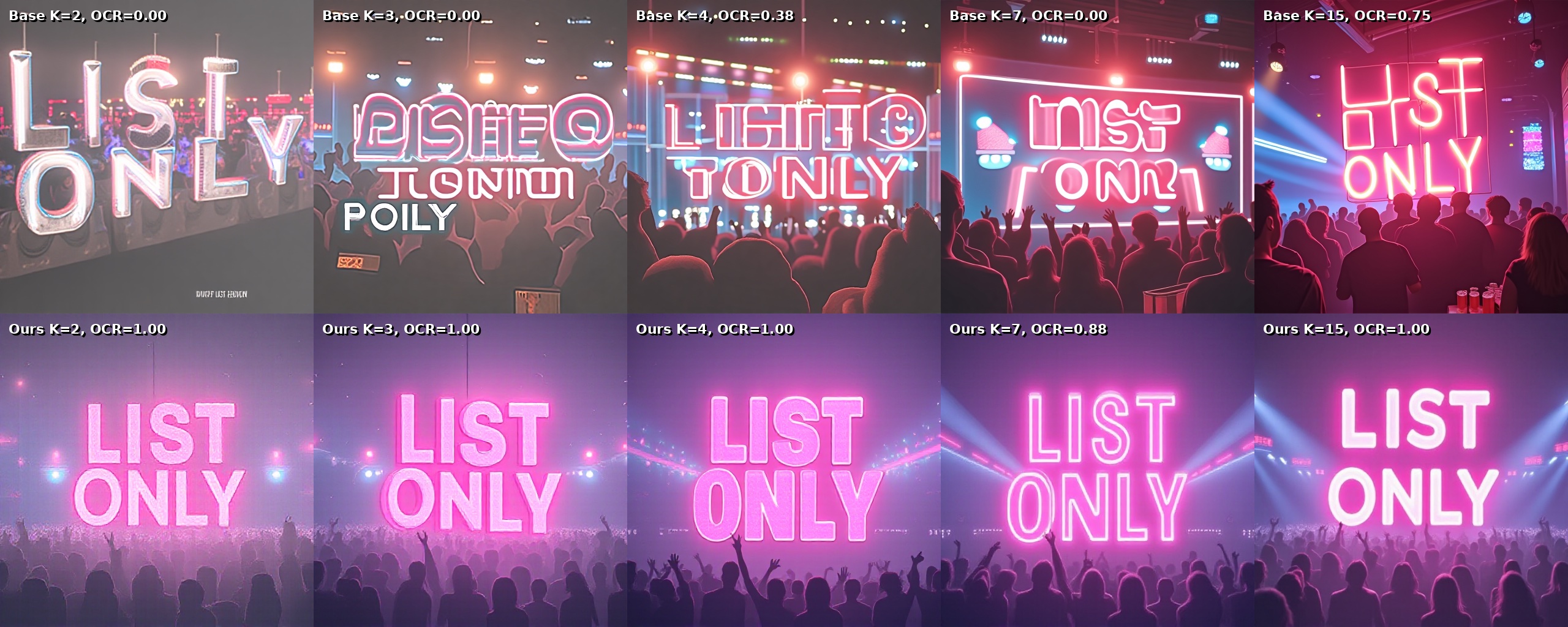}
    \caption{MeanFlow-OCR comparison: A neon-lit nightclub VIP section with a modern, sleek design. The sign prominently displays ``List Only'' in bold, glowing letters, set against a backdrop of pulsing lights and a crowd enjoying the vibrant nightlife scene.}
    \label{fig:meanflow-ocr-prompt-091}
\end{figure}

\begin{figure}[H]
    \centering
    \includegraphics[width=\linewidth]{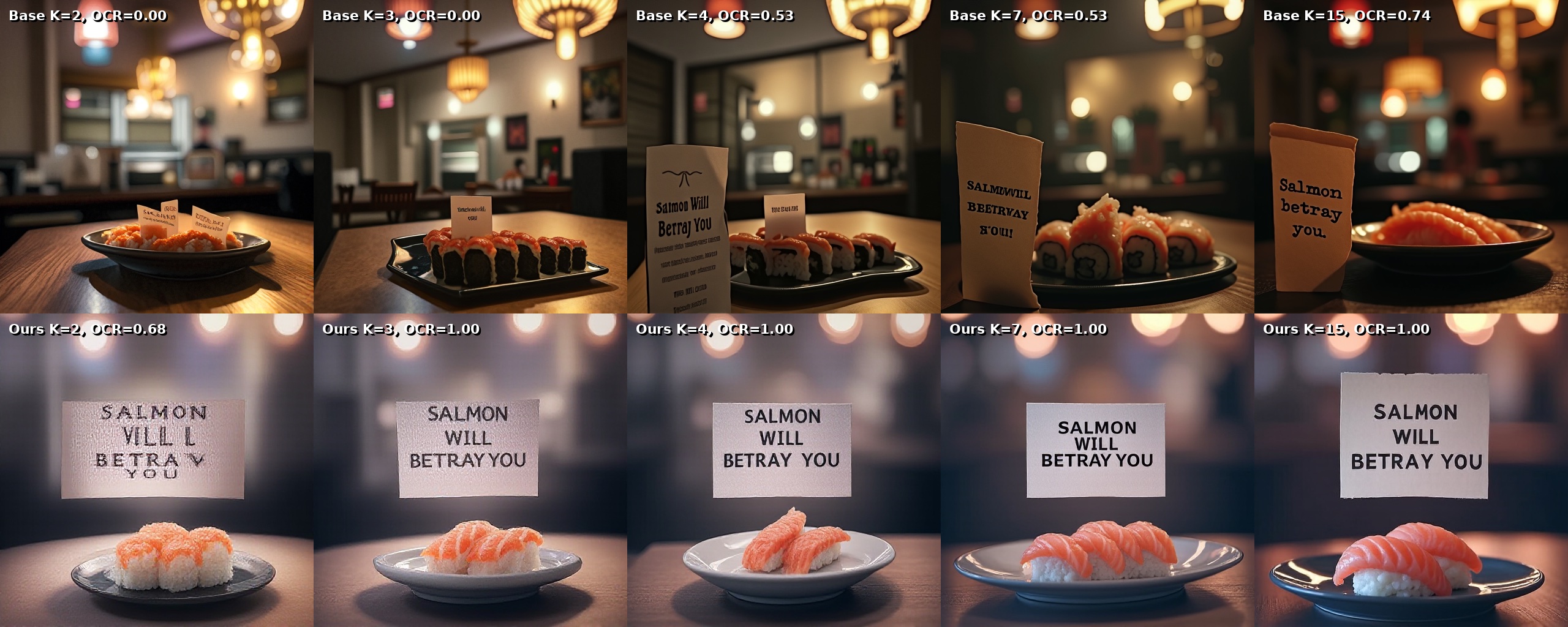}
    \caption{MeanFlow-OCR comparison: A dimly lit, cozy restaurant with a fortune cookie slip prominently displayed, reading ``Salmon will betray you'', next to a half-empty plate of sushi. The scene captures the moment of revelation, with a subtle, mysterious atmosphere.}
    \label{fig:meanflow-ocr-prompt-108}
\end{figure}

\subsection{MeanFlow - PickScore}

\begin{figure}[H]
    \centering
    \includegraphics[width=\linewidth]{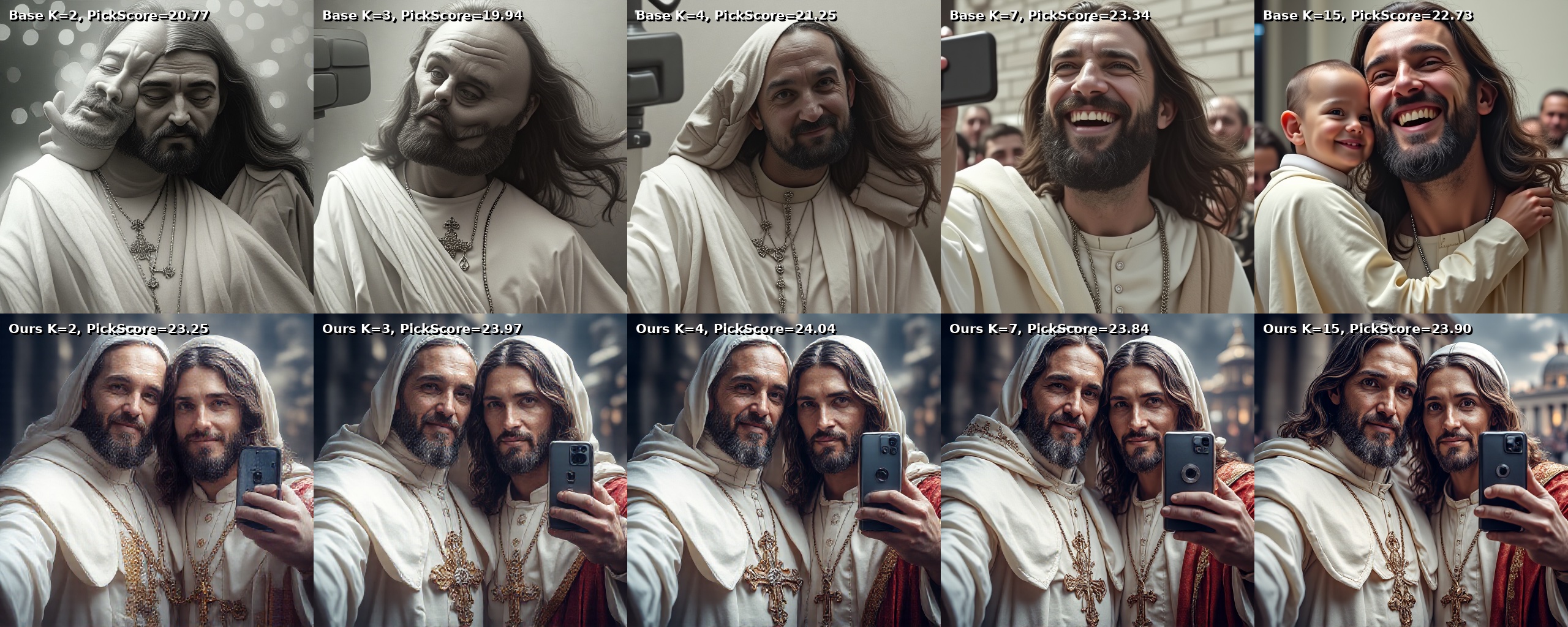}
    \caption{MeanFlow-PickScore comparison: selfie photo of jesus and pope.}
    \label{fig:meanflow-pickscore-prompt-023}
\end{figure}

\begin{figure}[H]
    \centering
    \includegraphics[width=\linewidth]{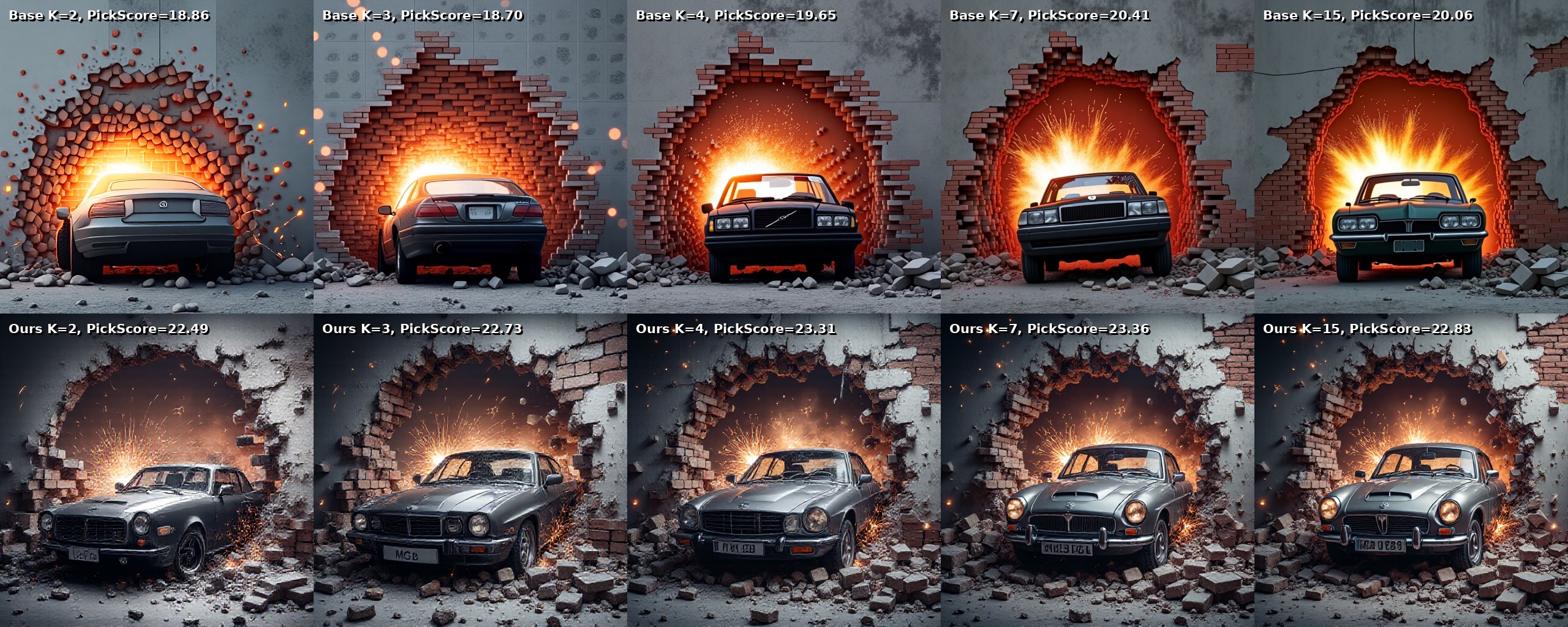}
    \caption{MeanFlow-PickScore comparison: MGb car smashing through hole in the wall, sparks dust rubble bricks, studio lighting.}
    \label{fig:meanflow-pickscore-prompt-038}
\end{figure}

\begin{figure}[H]
    \centering
    \includegraphics[width=\linewidth]{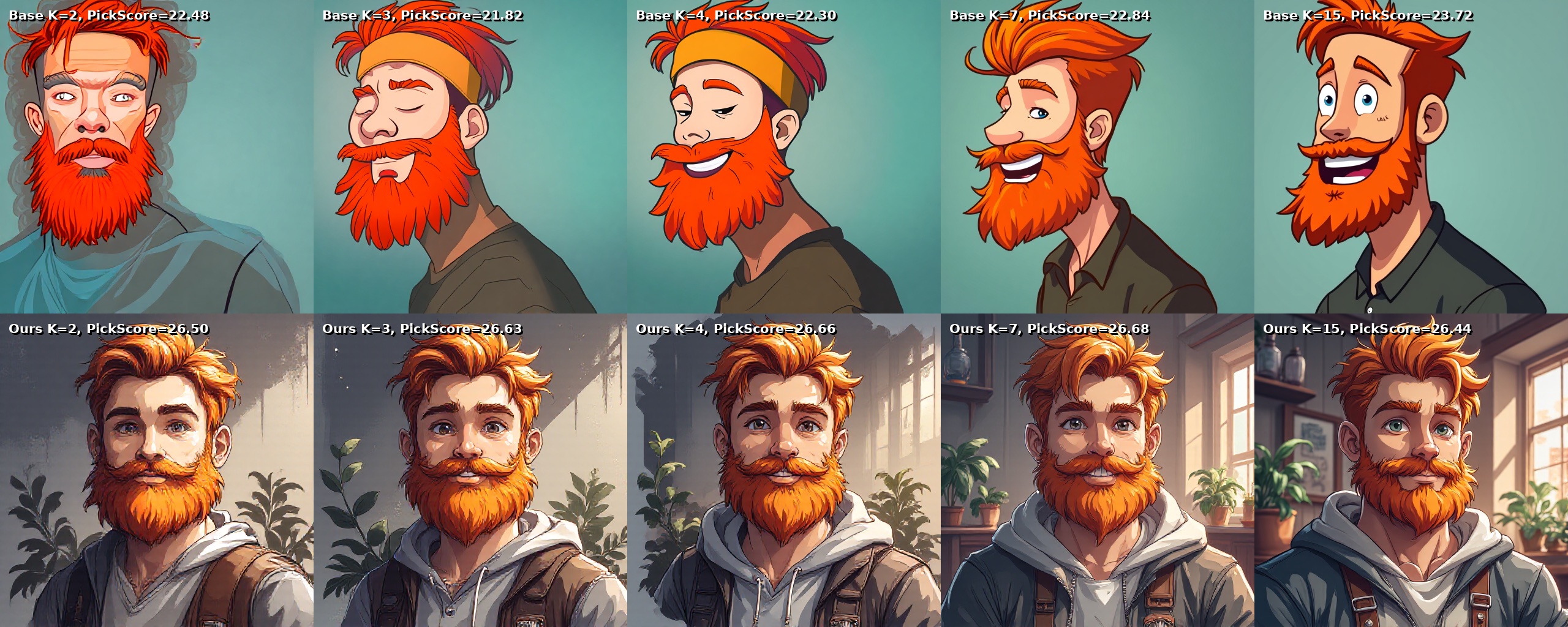}
    \caption{MeanFlow-PickScore comparison: Young man with an orange beard, cartoon style.}
    \label{fig:meanflow-pickscore-prompt-049}
\end{figure}

\begin{figure}[H]
    \centering
    \includegraphics[width=\linewidth]{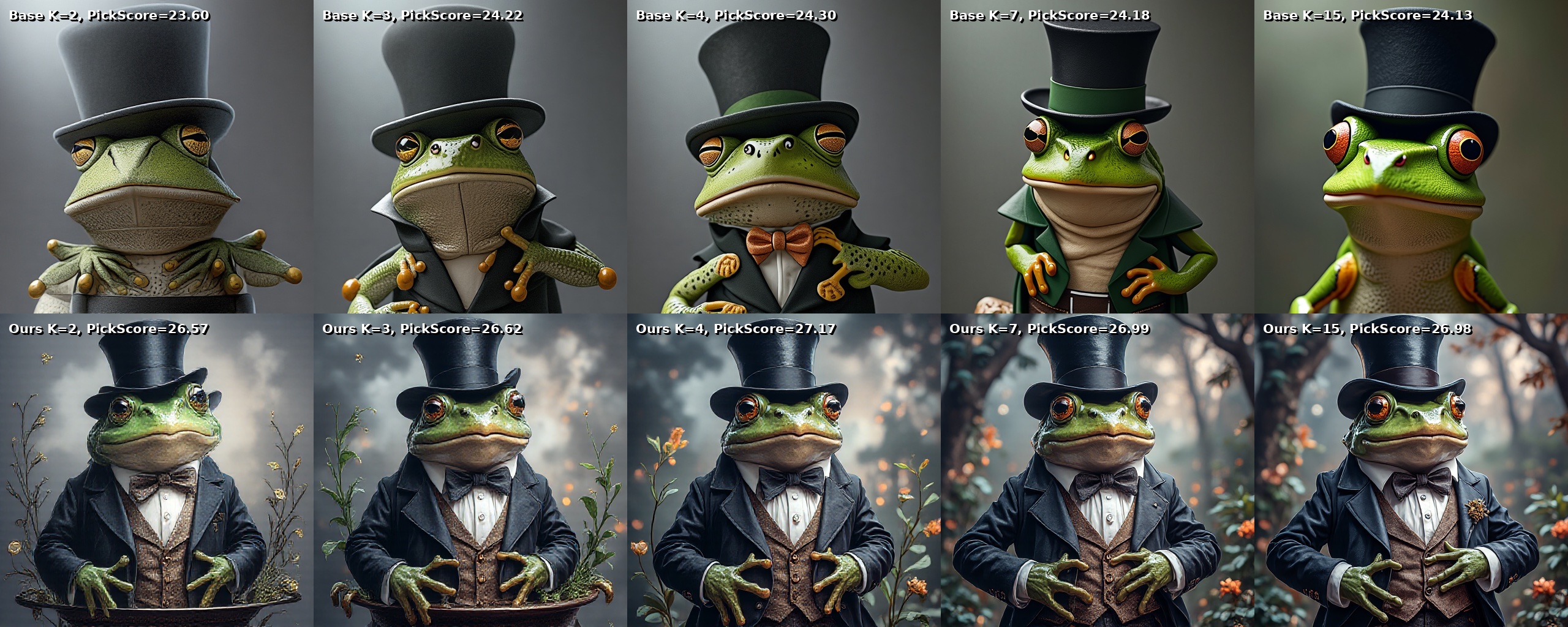}
    \caption{MeanFlow-PickScore comparison: gentleman frog in a top hat.}
    \label{fig:meanflow-pickscore-prompt-054}
\end{figure}

\begin{figure}[H]
    \centering
    \includegraphics[width=\linewidth]{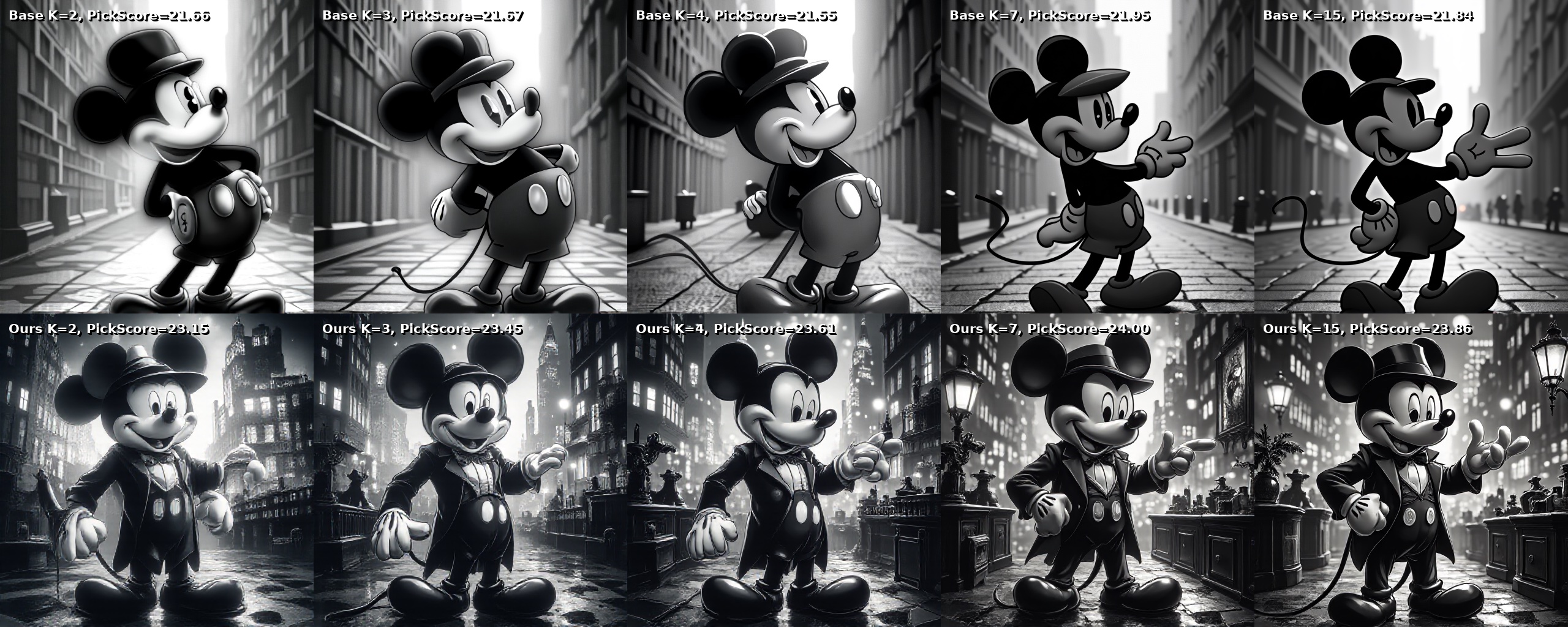}
    \caption{MeanFlow-PickScore comparison: mickey mouse in black and white film noir gangster style new york.}
    \label{fig:meanflow-pickscore-prompt-055}
\end{figure}

\begin{figure}[H]
    \centering
    \includegraphics[width=\linewidth]{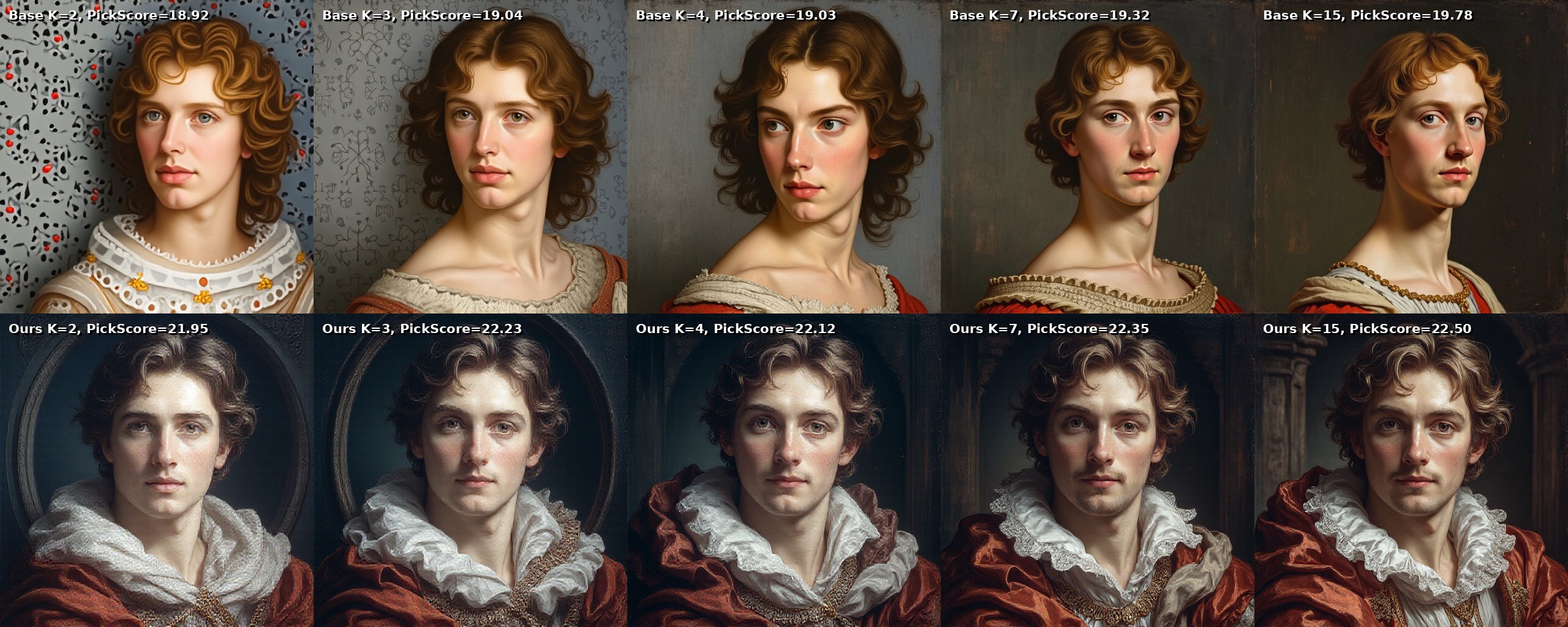}
    \caption{MeanFlow-PickScore comparison: A portrait of a renaissance prince painted by Raphael.}
    \label{fig:meanflow-pickscore-prompt-064}
\end{figure}

\vspace{-3mm}
\subsection{Consistency Model - OCR}
\vspace{-3mm}
\begin{figure}[H]
    \centering
    \includegraphics[width=\linewidth]{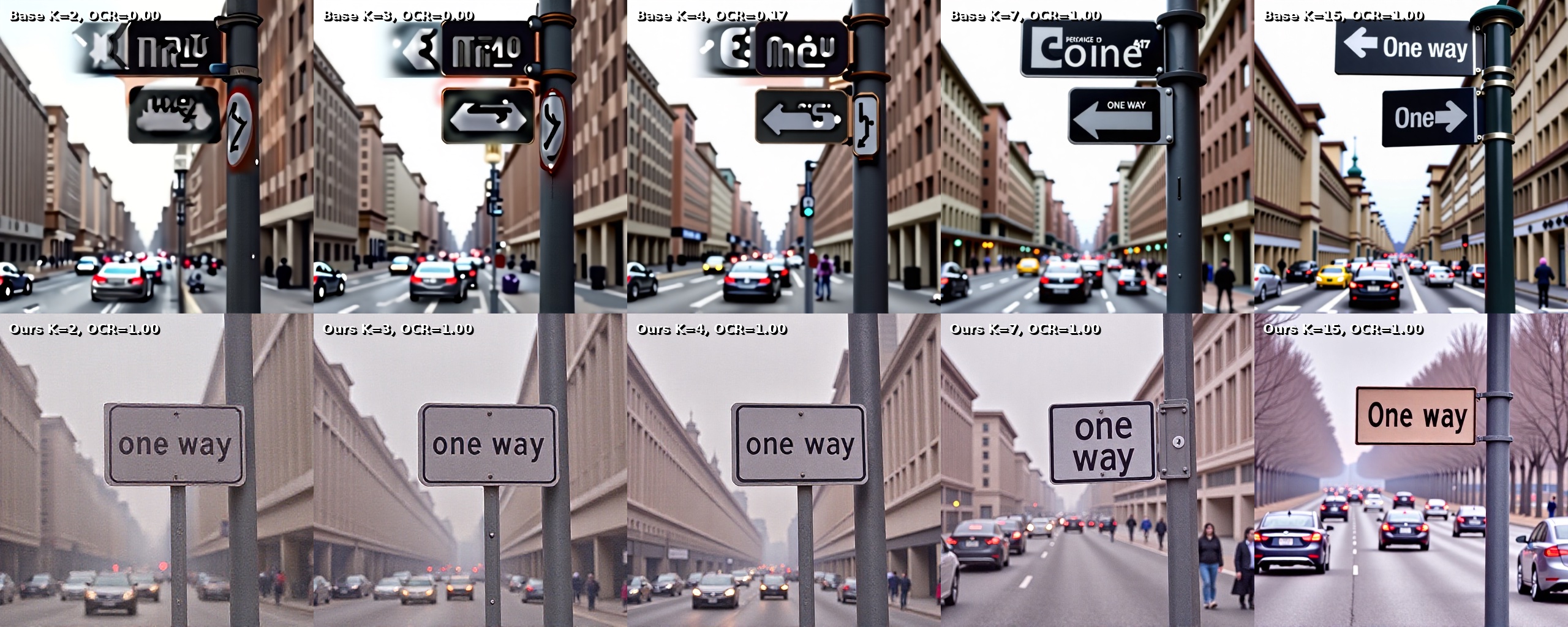}
    \caption{sCM-OCR comparison: A realistic photograph of a street sign with ``one way'' prominently displayed, set against a backdrop of a busy urban street with cars and pedestrians.}
    \label{fig:scm-ocr-prompt-046}
\end{figure}
\vspace{-3mm}
\begin{figure}[H]
    \centering
    \includegraphics[width=\linewidth]{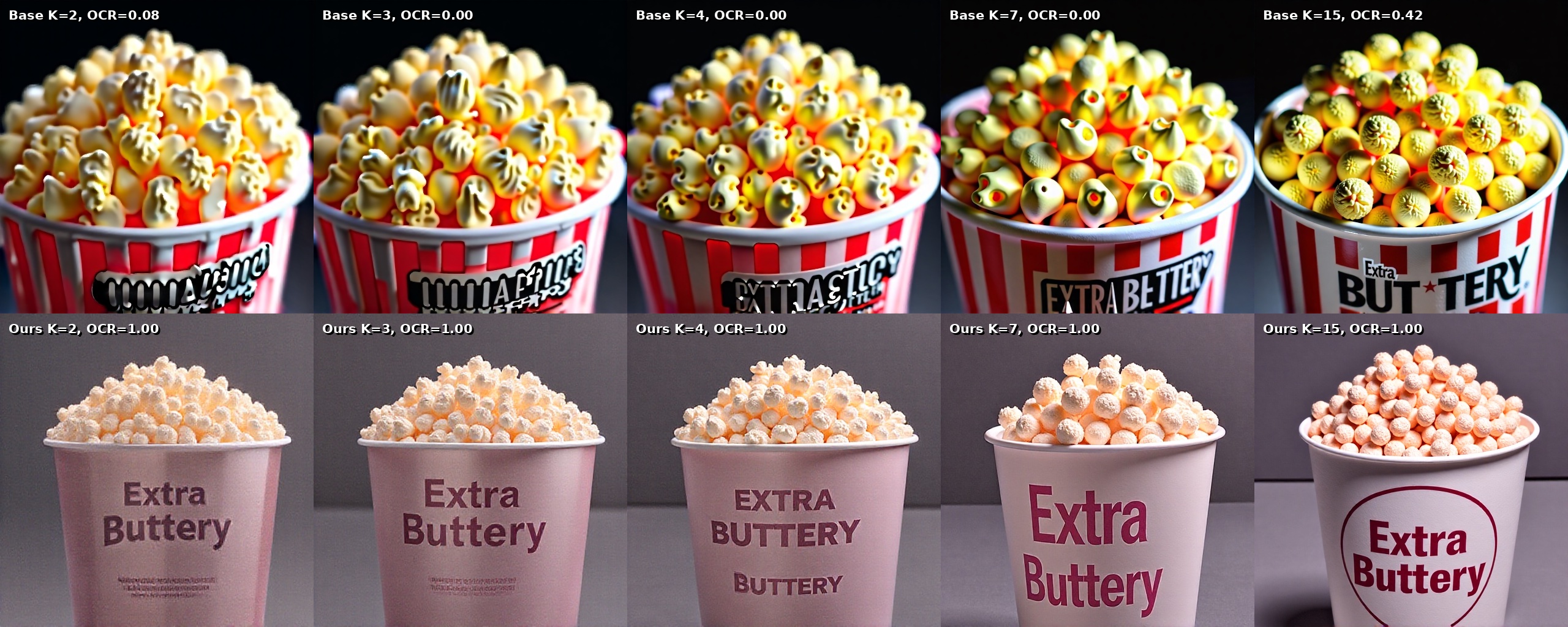}
    \caption{sCM-OCR comparison: A close-up shot of a movie theater popcorn bucket, prominently displaying the text ``Extra Buttery'' in bold, with the bucket filled to the brim with golden, buttery popcorn, set against a dark, cinematic background.}
    \label{fig:scm-ocr-prompt-075}
\end{figure}
\vspace{-3mm}
\begin{figure}[H]
    \centering
    \includegraphics[width=\linewidth]{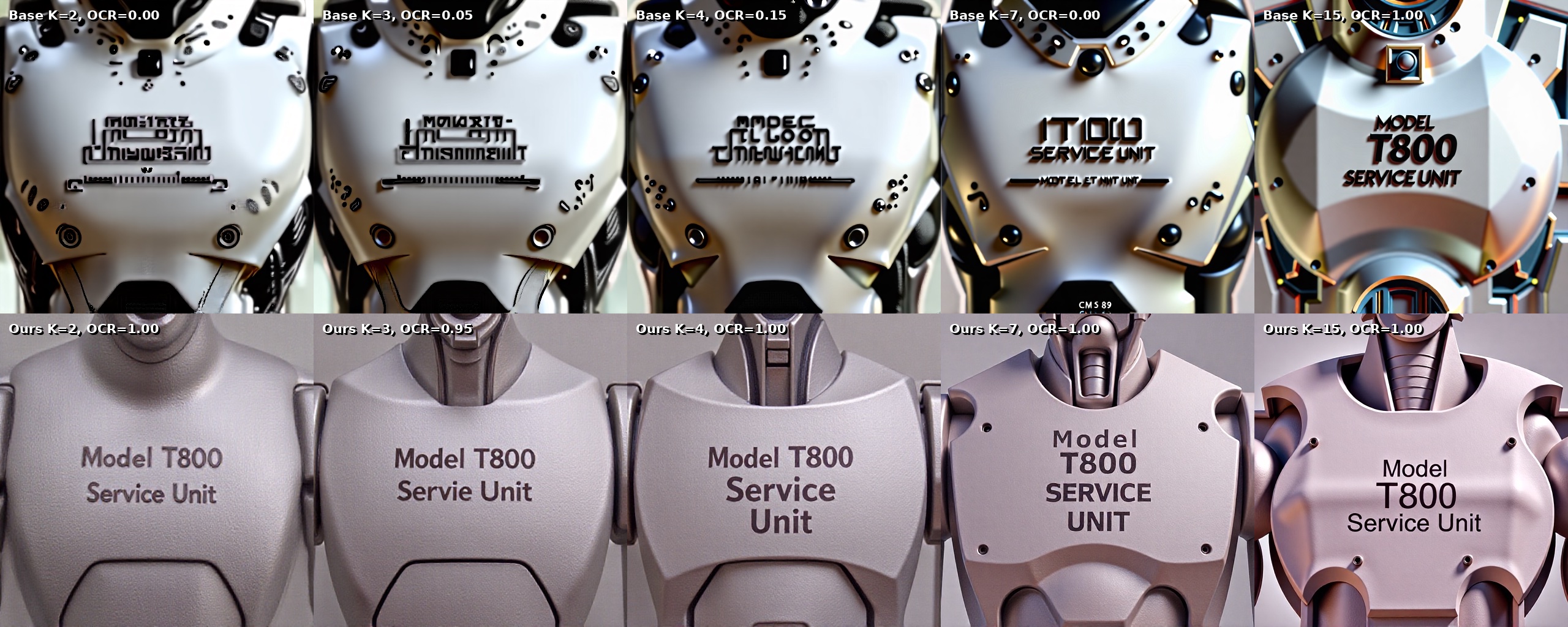}
    \caption{sCM-OCR comparison: A close-up of a futuristic robot chest plate, marked with the inscription ``Model T800 Service Unit'', set against a sleek, metallic background, showcasing detailed mechanical components and subtle wear, emphasizing its advanced yet utilitarian design.}
    \label{fig:scm-ocr-prompt-087}
\end{figure}

\begin{figure}[H]
    \centering
    \includegraphics[width=\linewidth]{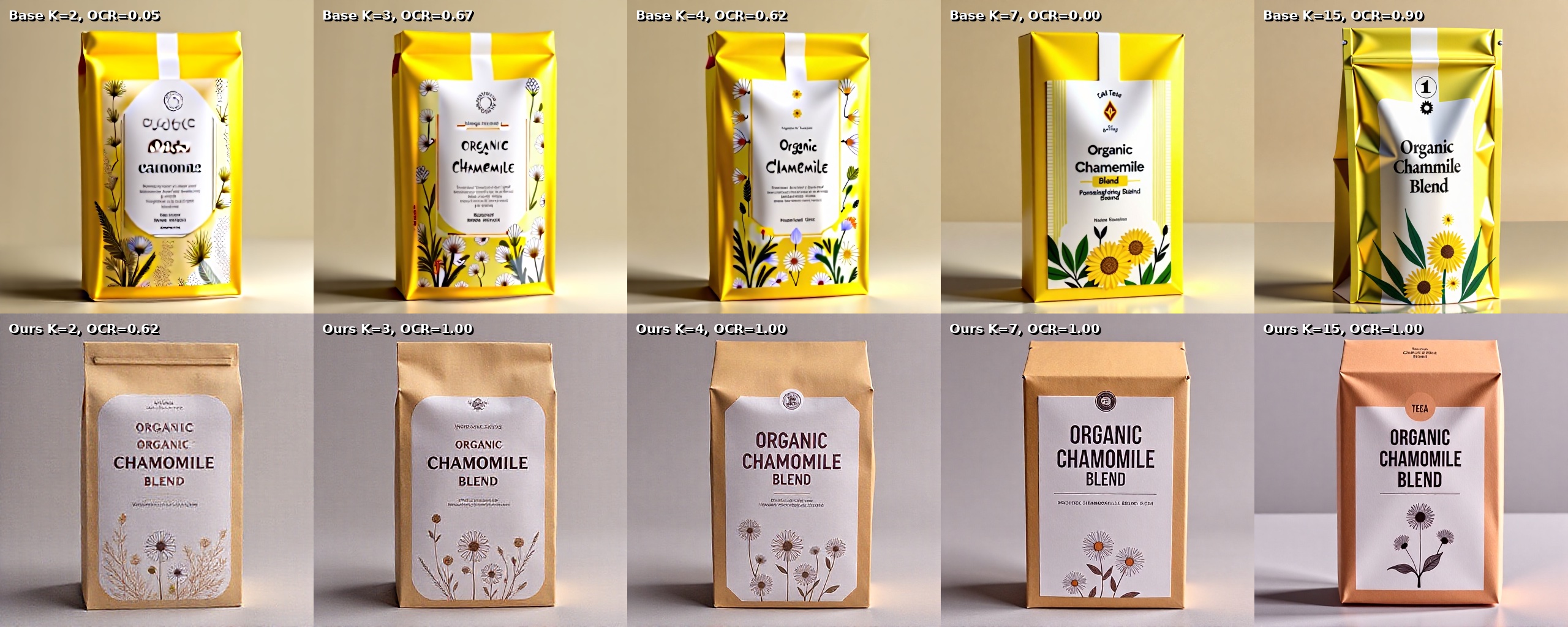}
    \caption{sCM-OCR comparison: A stylish, modern tea packaging design for ``Organic Chamomile Blend'', featuring a serene yellow and white color scheme with delicate illustrations of chamomile flowers and leaves, set against a clean, minimalist background.}
    \label{fig:scm-ocr-prompt-095}
\end{figure}

\begin{figure}[H]
    \centering
    \includegraphics[width=\linewidth]{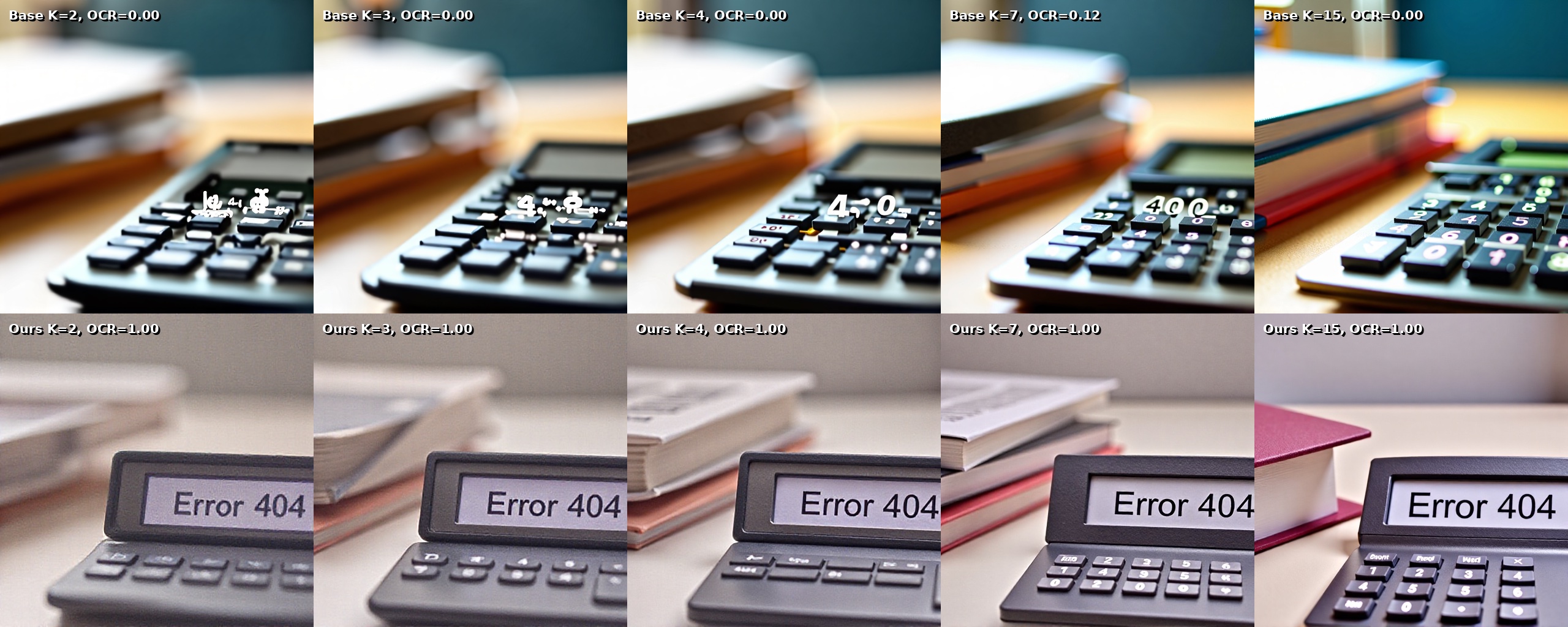}
    \caption{sCM-OCR comparison: A close-up of a calculator screen displaying ``Error 404'', set against a blurred background of math books and a desk, with a faint glow around the screen to highlight the error message.}
    \label{fig:scm-ocr-prompt-161}
\end{figure}

\begin{figure}[H]
    \centering
    \includegraphics[width=\linewidth]{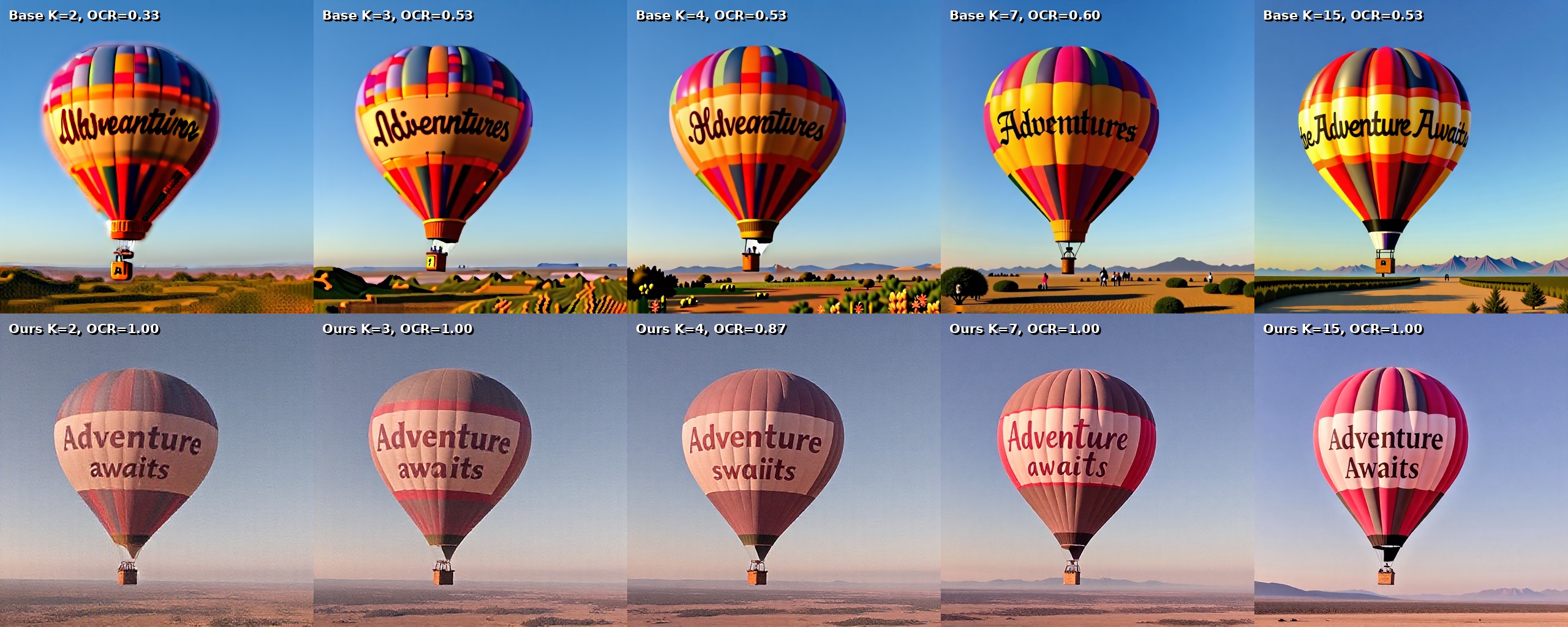}
    \caption{sCM-OCR comparison: A vibrant hot air balloon ascends into a clear blue sky, trailing a banner that reads ``Adventure Awaits'' in bold, flowing letters. The balloon's colorful pattern contrasts beautifully against the serene landscape below, inviting viewers to join in the journey.}
    \label{fig:scm-ocr-prompt-181}
\end{figure}

\subsection{Consistency Model - PickScore}

\begin{figure}[H]
    \centering
    \includegraphics[width=\linewidth]{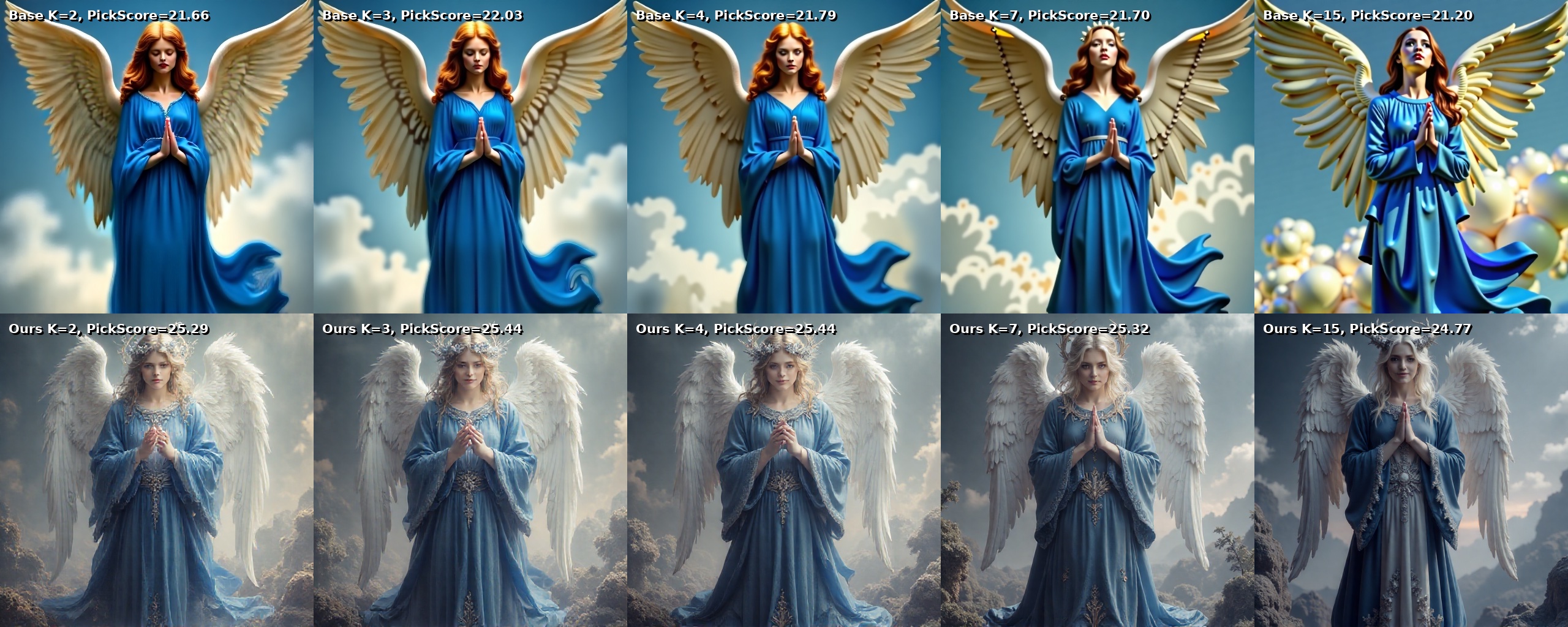}
    \caption{sCM-PickScore comparison: an epic angel dressed in blue with white wings.}
    \label{fig:scm-pickscore-prompt-007}
\end{figure}

\begin{figure}[H]
    \centering
    \includegraphics[width=\linewidth]{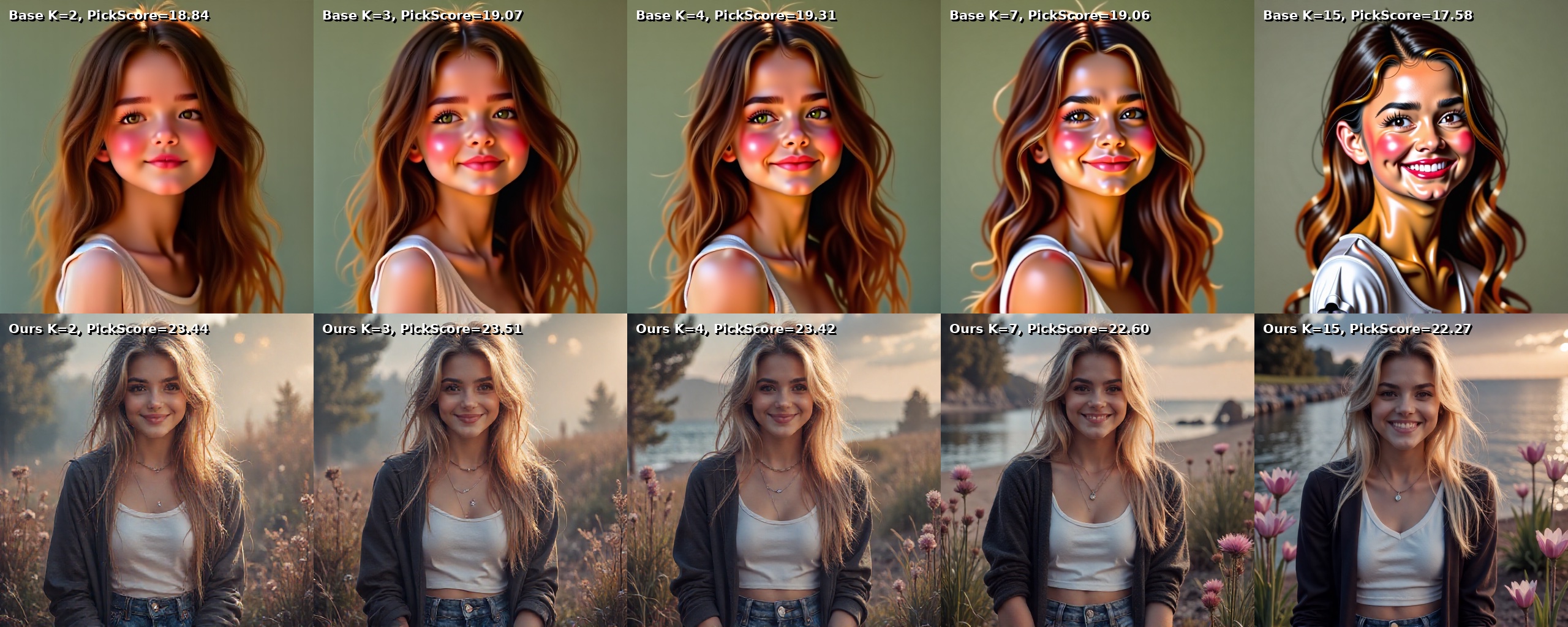}
    \caption{sCM-PickScore comparison: A photo of a beautiful young woman, cute.}
    \label{fig:scm-pickscore-prompt-033}
\end{figure}

\begin{figure}[H]
    \centering
    \includegraphics[width=\linewidth]{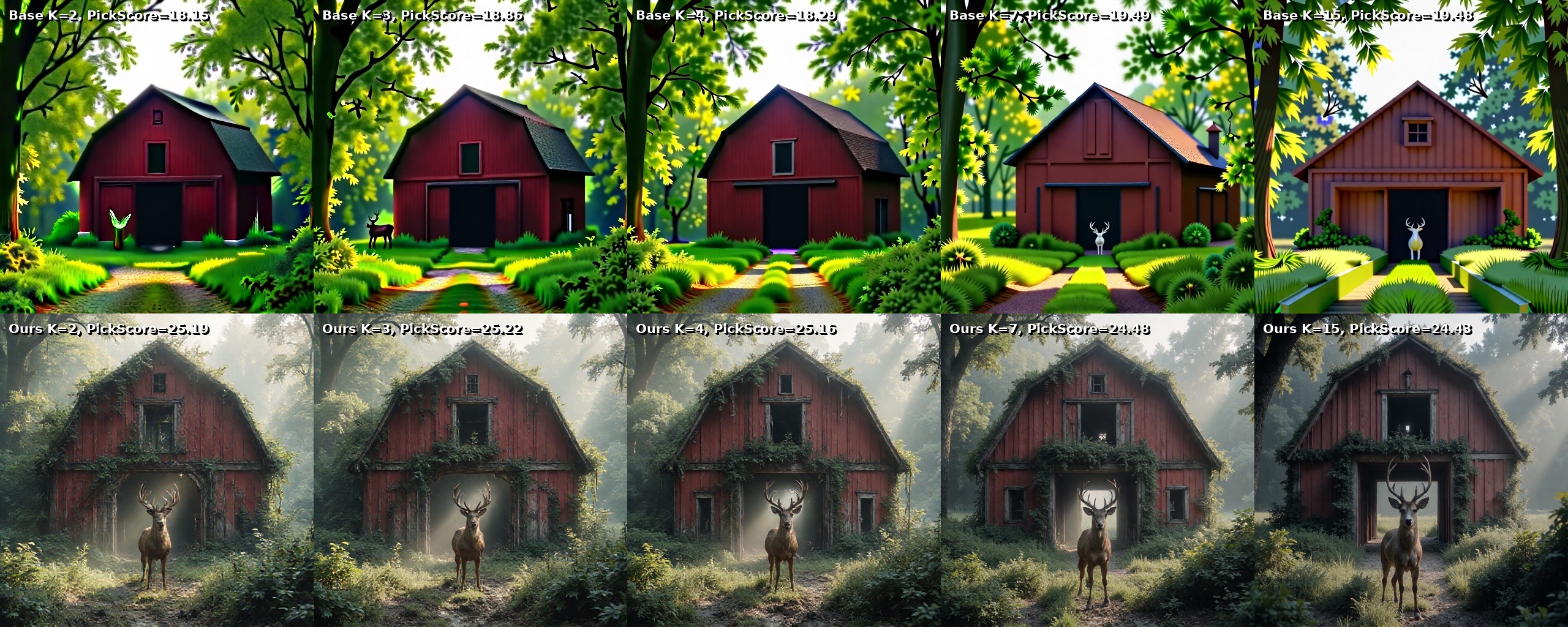}
    \caption{sCM-PickScore comparison: an overgrown abandoned red barn, covered in vines, sunlight filtering through, a stag deer standing in the entrance, 4k.}
    \label{fig:scm-pickscore-prompt-048}
\end{figure}

\begin{figure}[H]
    \centering
    \includegraphics[width=\linewidth]{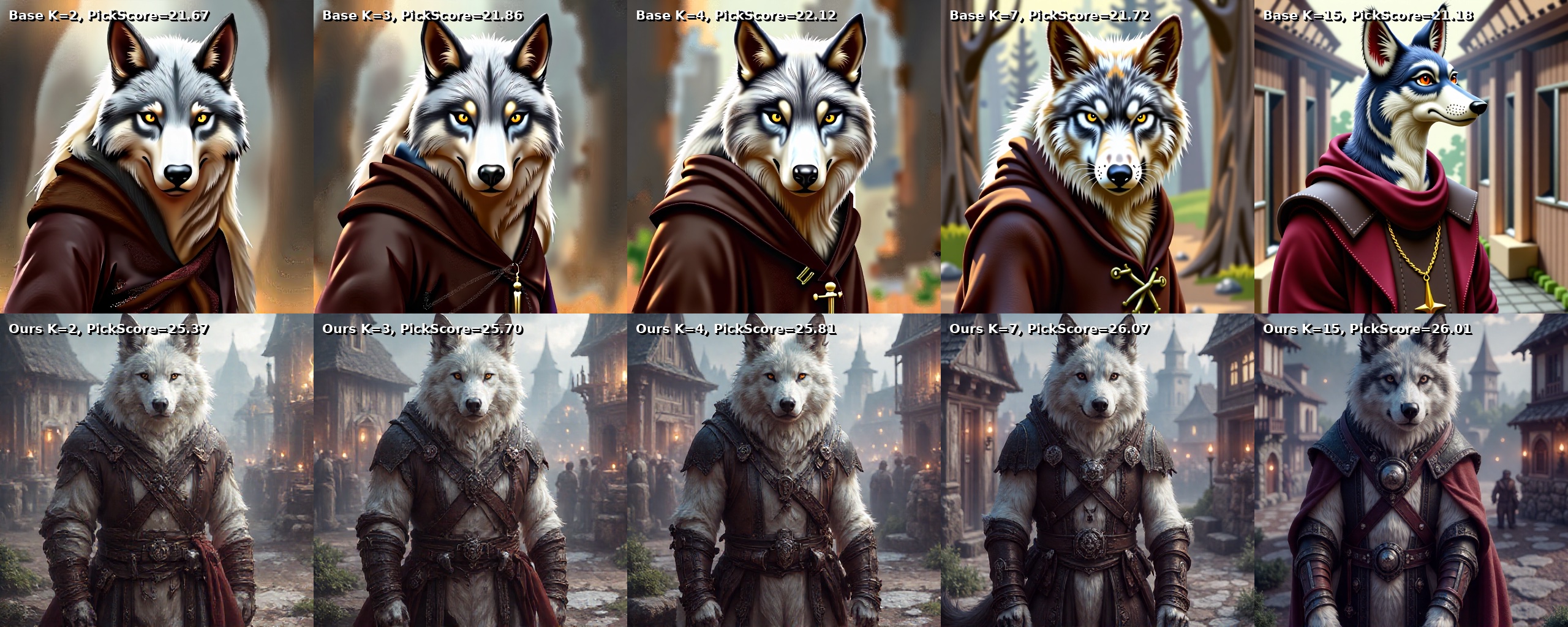}
    \caption{sCM-PickScore comparison: an anthropomorphic piebald wolf, medieval, adventurer, dnd, town, rpg, rustic, fantasy, hd digital art.}
    \label{fig:scm-pickscore-prompt-077}
\end{figure}

\begin{figure}[H]
    \centering
    \includegraphics[width=\linewidth]{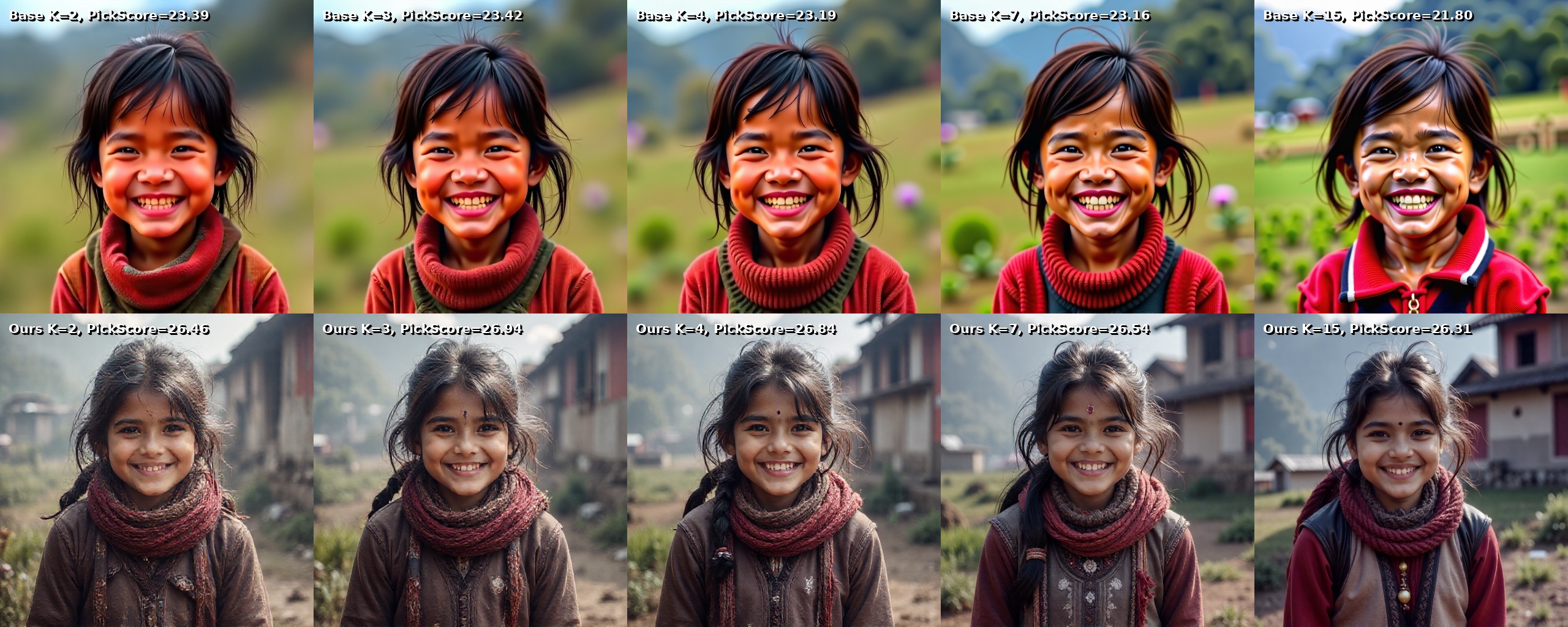}
    \caption{sCM-PickScore comparison: a happy nepalese girl in a village.}
    \label{fig:scm-pickscore-prompt-097}
\end{figure}

\begin{figure}[H]
    \centering
    \includegraphics[width=\linewidth]{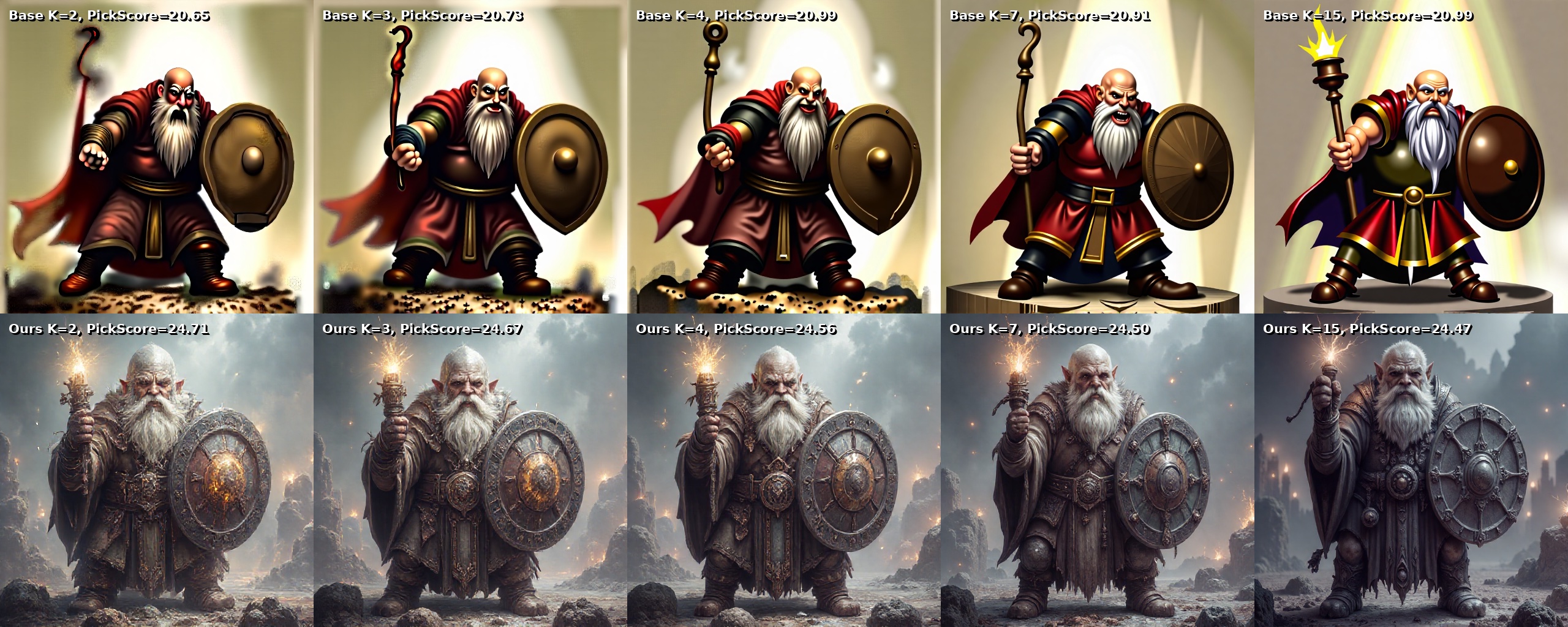}
    \caption{sCM-PickScore comparison: fighting dwarf, old, priest, light, screaming, having shield.}
    \label{fig:scm-pickscore-prompt-149}
\end{figure}

\end{document}